\documentclass[twoside]{article}

\usepackage[numbers]{natbib}
\usepackage{booktabs}
\usepackage[colorlinks=true, linkcolor=blue, citecolor=blue, urlcolor=blue]{hyperref}

\usepackage[preprint]{icml2026}

\usepackage{amsfonts, amsmath, amssymb, amsthm}
\usepackage{algorithm}
\usepackage[noend]{algpseudocode} 
\usepackage{changepage}
\usepackage{multirow}
\usepackage[most]{tcolorbox} 

\usepackage{graphicx}
\usepackage{xcolor}
\definecolor{tabblue}{RGB}{0,102,204}

\hypersetup{
    colorlinks=true,
    linkcolor=tabblue,
    citecolor=tabblue,
    urlcolor=tabblue,
}

\newcommand{\chan}[1]{\textcolor{black}{#1}}

\usepackage[capitalize,nameinlink,noabbrev]{cleveref} 

\theoremstyle{plain}
\newtheorem{theorem}{Theorem}[section]
\newtheorem{proposition}{Proposition}[section]   
\newtheorem{lemma}{Lemma}[section]             

\theoremstyle{definition}
\newtheorem{definition}{Definition}[section]

\theoremstyle{remark}
\newtheorem{remark}{Remark}[section]

\newcommand{\bt}[1]{\mbox{$\bf #1$}}

\newcommand{\baseL}{\bt L}
\newcommand{\newL}{\tilde{\bt L}}
\newcommand{\newU}{\tilde{\bt U}}

\newcommand{\bfnew}{\tilde{\pmb{\lambda}}}
\newcommand{\bfbase}{\pmb{\lambda}}

\newcommand{\Dml}{\bt D(\bfnew, \bfbase)}

\newcommand{\diag}[1]{\mathrm{diag}(#1)}

\newcommand{\defeq}{\doteq}

\definecolor{tabblue}{RGB}{0, 114, 178}
\definecolor{taborange}{RGB}{255, 127, 14}
\definecolor{tabpurple}{RGB}{148, 103, 189}

\icmltitlerunning{L2G-Net: Local to Global GNNs via Cauchy Factorizations}

\begin{document}

\renewcommand{\algorithmicrequire}{\textbf{Input:}}
\renewcommand{\algorithmicensure}{\textbf{Output:}}

\twocolumn[

\icmltitle{L2G-Net: Local to Global Spectral Graph Neural Networks \\ via Cauchy Factorizations}

\begin{icmlauthorlist}
    \icmlauthor{Samuel Fern\'andez-Mendui\~na}{yyy}
    \icmlauthor{Eduardo Pavez}{yyy}
    \icmlauthor{Antonio Ortega}{yyy}
\end{icmlauthorlist}

\icmlaffiliation{yyy}{University of Southern California}

\icmlcorrespondingauthor{Samuel Fernández-Menduiña}{samuelf9@usc.edu}

  \icmlkeywords{Machine Learning, ICML}

  \vskip 0.3in

]

\printAffiliationsAndNotice{}  

\begin{abstract}
Despite their theoretical advantages, spectral methods based on the graph Fourier transform (GFT) are seldom used in graph neural networks (GNNs) due to the cost of computing the eigenbasis and the lack of vertex-domain locality in the resulting representations. As a result, most GNNs rely on local approximations such as polynomial Laplacian filters or message passing, which limit their ability to model long-range dependencies. In this paper, we introduce an exact factorization of the GFT into operators acting on subgraphs, which are then combined via a sequence of Cauchy matrices. Building on this factorization, we propose a new class of spectral GNNs, termed L2G-Net (Local to Global Net). Unlike existing spectral methods, which are either fully global (when using the GFT) or local (when using polynomial filters), L2G-Net operates by processing the spectral representations of subgraphs and then combining them via structured matrices. Our algorithm avoids full eigendecompositions, exploiting graph topology to construct the factorization with quadratic complexity in the number of nodes, scaled by the maximum cut size between subgraphs. Experiments stressing long-range dependencies on large graphs show that L2G-Net scales to regimes out of reach for the standard GFT, and is competitive with state-of-the-art methods with orders of magnitude fewer learnable parameters.
\end{abstract}

\section{Introduction}
\chan{Spectral graph neural networks \cite{bruna2014spectral} process graph signals by projecting them onto the eigenbasis of the graph Laplacian. This operation,} also known as the graph Fourier transform (GFT) \cite{ortega2018graph}, is analogous to the Fourier transform in signal processing in regular domains, yielding a representation that captures the global structure of the graph \cite{chung1997spectral}.

Despite using well-defined graph signal frequencies, 
\chan{GFT-based} GNNs are far less used in practice than polynomial approximations \cite{defferrard2016convolutional, levie2018cayleynets, he2021bernnet} or message-passing neural networks (MPNNs) \cite{kipf2017semi}. Two limitations drive this lack of adoption: the cubic complexity of computing the GFT makes it impractical for large graphs \cite{Bronstein2017geometric}, and GFT-domain operations are global, i.e.,  modifying a single spectral coefficient affects all nodes simultaneously \cite{ortega2018graph}. 
While globality is well-suited to capture long-range interactions \cite{dwivedi2022long}, \chan{the GFT lacks the local inductive biases that benefit short-range graph learning tasks, which dominate many of the existing benchmarks \cite{alon2020bottleneck}}.

\begin{figure*}[t]
    \centering
    \includegraphics[width=\linewidth]{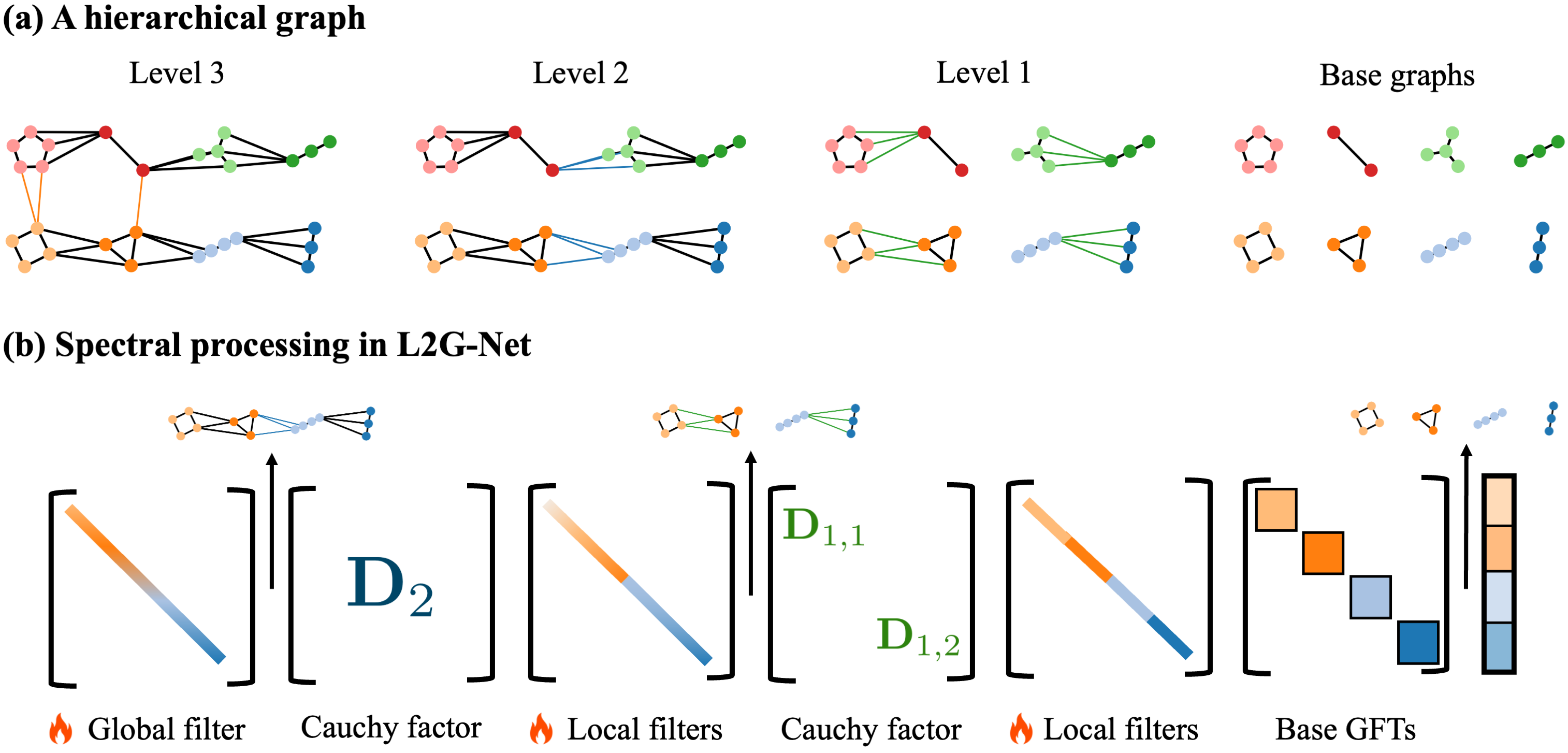}
    \caption{(a) A graph in $\mathcal{F}(L=3, \lbrace \mathcal{G}_{i}\rbrace_{i = 1}^8, k=3)$. Bridge edges are shown in different colors. The GFT basis of the whole graph decomposes as the product of the GFT bases of the base graphs by a sequence of Cauchy factors. (b) We first compute the base GFTs of small subgraphs (right) and apply local spectral filters. We mix the outputs via Cauchy factors corresponding to bridge edges. After all merges, a global spectral filter acts on the full graph representation (left), allowing for global filtering while preserving locality.}
    \label{fig:graph_multires}
\end{figure*}

Polynomial approximations and MPNNs replace global GFT operations with repeated  products with a sparse graph operator (e.g., adjacency or Laplacian) \cite{kipf2017semi} or a learned polynomial of the operator \cite{defferrard2016convolutional}. \chan{These approaches are computationally efficient and induce a $k$-hop locality bias, whose alignment with widely-used homophilic benchmarks partially explains their empirical adoption \cite{hariri2025return}}. However, modeling long-range dependencies with local operators requires multiple message-passing steps, often leading to optimization instabilities \cite{arroyo2025vanishing} such as  oversquashing \cite{alon2020bottleneck}. 
Replacing the Laplacian with a polynomial of the Laplacian might alleviate these issues, but often at the cost of restrictive parameter constraints \cite{hariri2025return}. 
Combining global and local information via self-attention  \cite{deng2024polynormer} often sacrifices parameter efficiency and graph-domain interpretability (cf.~\cref{app:related_work}).

In this paper, we introduce a new class of spectral GNNs that builds \textit{global} spectral representations by aggregating \textit{local} spectral coefficients (\cref{fig:graph_multires}). 
We show that any GFT can be \emph{exactly factorized} into a sequence of localized transformations: given a graph partition (\cref{fig:graph_multires} (a)), the GFT admits a hierarchical decomposition into the GFTs of the local \emph{subgraphs} (a connected set of nodes) forming the partition, which are then multiplied by a series of \emph{Cauchy matrices}, each arising from a rank-one update associated with a \emph{cut} edge connecting two subgraphs  
(cf.~\cref{fig:graph_multires}). Matrices in the Cauchy factorization are block-diagonal, hence localized to graph regions, and the composition of all terms yields the global GFT.

\chan{Computing this factorization is efficient: obtaining all terms has quadratic complexity in the number of nodes, scaled by the size of the subgraph cut. 
For graphs with hierarchical structure, common in real-world networks \cite{barthelemy2011spatial, clauset2008hierarchical}, this leads to significant computational savings over the cubic complexity eigendecomposition.} While this complexity result holds for arbitrary graph partitions, the factorization cost may be high for graphs with large cuts. Hence, we introduce a graph decomposition algorithm that explicitly aims to minimize the cost of computing the Cauchy factorization. 
Specifically, we seek balanced partitions with small cuts, directly optimizing the theoretical factorization cost. 
When such favorable partitions are not available, we use spectral sparsification \cite{spielman2008graph} to reduce the cut size while preserving the spectral properties of the original graph. We validate this analysis through experiments on synthetic graphs, confirming the predicted runtime scaling and demonstrating substantial improvements over the dense eigendecomposition of the graph Laplacian.

This divide-and-conquer strategy \chan{is then used as the foundation for a new class of spectral GNNs}, termed Local to Global Net (L2G-Net). 
As shown in \cref{fig:graph_multires}, at each stage, the outputs of each local GFT (i.e., subgraph spectral information) are processed using learnable filters before merging via Cauchy factors. 
In the last level, a global learnable spectral filter processes the entire graph signal without requiring a full eigendecomposition of the Laplacian (cf.~\cref{fig:graph_multires}(b)). This allows L2G-Net to combine global spectral processing with locality biases by construction. 

\chan{From a learning perspective, L2G-Net first analyzes the graph structure in a task-independent manner (by identifying subgraphs and sparsifying), and then builds an architecture tailored to the specific graph and learns its parameters. Thus, different graphs yield different hierarchical structures. This allows local filters to capture patterns within subgraphs, while the global filter models subgraph interactions. This property is relevant in long-range setups. MPNNs and polynomial approximations must relay information through bottlenecks, which leads to oversquashing \cite{alon2020bottleneck}, while graph transformers \cite{dwivedi2020generalization} bypass this with learned attention but require additional inputs (e.g., positional encodings) to encode graph structure. L2G-Net offers a complementary inductive bias by encoding graph structure into the computation (\cref{fig:locality}). We show a positioning table in \cref{tab:method_comparison}; see \cref{app:related_work} for detailed discussions.}

\begin{figure}
    \centering
    \includegraphics[width=0.9\linewidth]{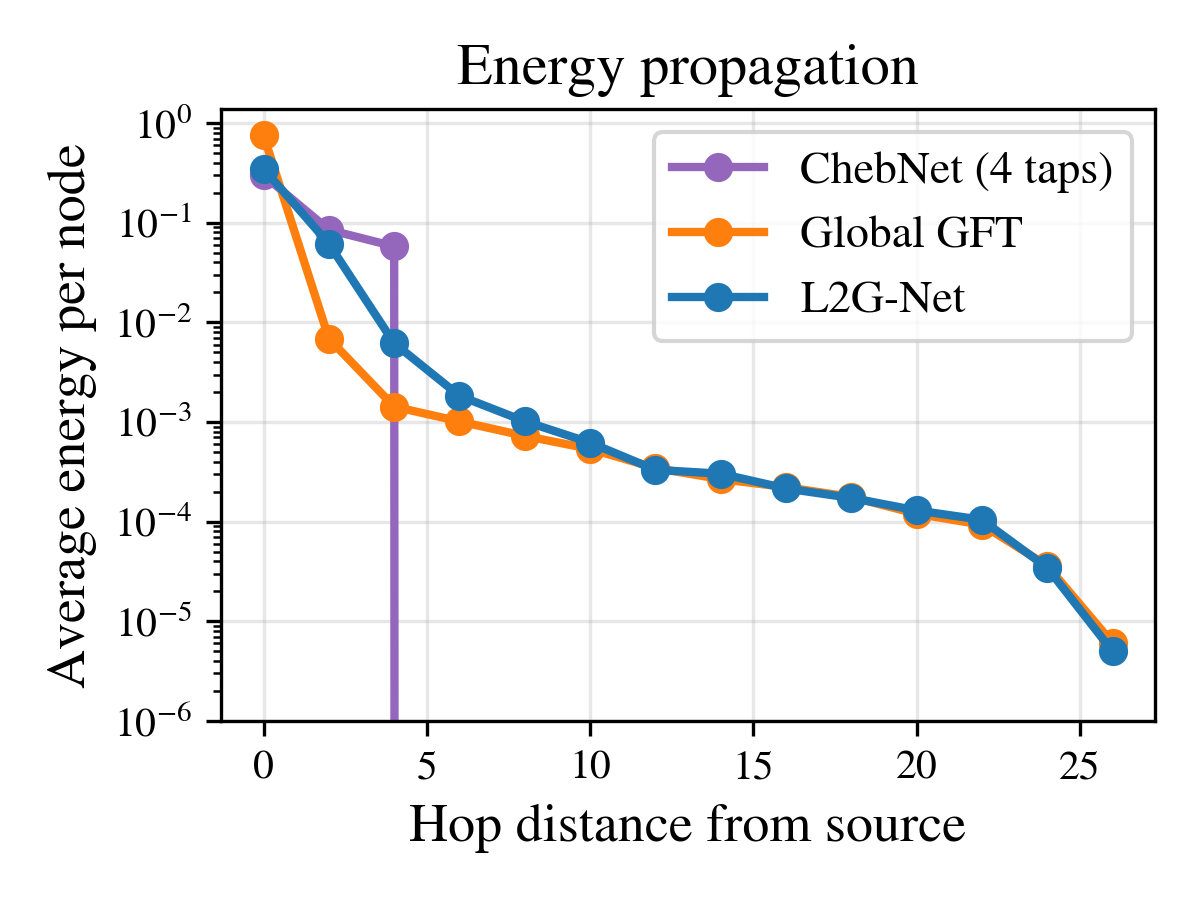}
    \caption{\chan{Average energy versus hop distance from the source when we input a unit impulse at each node of a graph. ChebNet (3rd-order) has a hard cutoff at 3 hops, while a GFT-based GNN propagates energy globally. L2G-Net's locality bias adapts to the graph structure, interpolating between these two extremes.}}
    \vspace{-0.2em}
    \label{fig:locality}
\end{figure}
\begin{table}[t]
\small
\setlength{\tabcolsep}{10pt} 
\centering
\caption{\chan{Comparison of different graph learning methods. CU stands for computational unit and GT for graph transformers.}}
\label{tab:method_comparison}
\begin{tabular}{lccc}
\toprule
\textbf{Method} & \textbf{Cost} & \textbf{Locality} & \textbf{CU} \\
\midrule
MPNNs        &    $O(|E|)$            & Local                     & Node      \\
GT &    $O(n^2)$            & Global   & Node      \\
ChebNet      &     $O(K|E|)$           & $K$-hop                  & Node      \\
GFT          &   $O(n^3)$             & Global          & Graph     \\
L2G-Net & $O(kn^2)$             & Local to global & Subgraph  \\
\bottomrule
\end{tabular}
\end{table}

We apply L2G-Net \chan{to 1) two transductive benchmarks of large-scale graphs \cite{platonov2023critical, liang2025towards}, and 2) an inductive benchmark with smaller graphs \cite{dwivedi2022long}.} The datasets we consider emphasize non-local structural dependencies, where MPNNs are known to struggle due to their inductive biases \cite{alon2020bottleneck, rusch2023survey, hariri2025return}. 
Our results show that L2G-Net outperforms existing spectral strategies \cite{hariri2025return}, with lower complexity than applying the global GFT and a similar number of learned parameters. 
We achieve results competitive with state-of-the-art techniques using attention \cite{deng2024polynormer}, while requiring orders of magnitude fewer learnable parameters.

Our contributions are: 1) showing that the GFT basis admits an exact Cauchy factorization for any partition, computable in quadratic time from the graph Laplacian without full eigendecompositions (\cref{sec:theory}); 2) an algorithm to find partitions that are computationally advantageous for Cauchy factorizations (\cref{sec:finding_hgf}); 3) L2G-Net, a spectral GNN family based on this factorization that combines local and global computations (\cref{sec:cauchy_gnn}).

\section{Background}

\noindent \textbf{Notation.} Uppercase and lowercase bold letters, such as $\bt A$ and $\bt a$, denote matrices and vectors, respectively. The $n$th entry of $\bt a$ is $a_n$, and the $(i, j)$-th entry of $\bt A$ is $A_{ij}$. 

\subsection{Spectral graph theory}
Let $\mathcal{G} = (\mathcal{V}, \mathcal{E}, \bt W, \bt V)$ be a weighted undirected graph, with vertex set $\mathcal{V}$, edge set $\mathcal{E}$, weighted adjacency matrix $\bt W$, and self-loop diagonal matrix $\bt V$. The generalized graph Laplacian (GGL) is $\bt L \defeq \bt D - \bt W + \bt V$, where $\bt D \doteq \diag{\bt 1^\top \bt W}$ is the degree matrix \cite{biyikougu2007laplacian}. Given the eigendecomposition $\bt L = \bt U \diag{\bfbase} \bt U^\top$, we refer to $\bt U^\top$ as the graph Fourier transform (GFT) \chan{basis} \cite{chung1997spectral, ortega2018graph} associated with $\mathcal{G}$. 
Any graph GGL decomposes into a sum of rank-one terms (or ``baby Laplacians''), one for each edge contribution.
\begin{proposition}[\cite{batson2014twice}]
\label{prop:rank_one_laplacians}
    Let $\bt e_j$ be the $j$th canonical vector, for $j = 1, \hdots, n$. Then, the GGL $\bt L$ of an undirected graph can be written as
    \begin{equation}
        \bt L = \sum_{{(i, j)} \in  \mathcal{E}}
        w_{ij} (\bt e_i - \bt e_j)(\bt e_i - \bt e_j)^\top
        + \sum_{(i, i)\in\mathcal{E}}
        v_{ii} \bt e_i\bt e_i^\top,
    \end{equation}
    where $w_{ij}$ denotes the edge weights, for $i, j = 1, \hdots, \vert \mathcal{V} \vert$. 
    \end{proposition}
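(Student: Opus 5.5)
The plan is to verify the identity entrywise. Both sides are symmetric $n\times n$ matrices, so it suffices to compare their $(a,a)$ diagonal entries and their $(a,b)$ off-diagonal entries for $a\neq b$. Throughout, each undirected edge $(i,j)$ with $i\neq j$ is counted once in $\mathcal{E}$, and $w_{ij}=W_{ij}=W_{ji}$. A self-loop $(i,i)$ carries weight $v_{ii}=V_{ii}$; we set $v_{ii}=0$ when there is no self-loop at $i$.

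First we expand a single edge term:
\begin{equation*}
(\bt e_i-\bt e_j)(\bt e_i-\bt e_j)^\top=\bt e_i\bt e_i^\top+\bt e_j\bt e_j^\top-\bt e_i\bt e_j^\top-\bt e_j\bt e_i^\top .
\end{equation*}
This matrix has $+1$ in positions $(i,i)$ and $(j,j)$, has $-1$ in positions $(i,j)$ and $(j,i)$, and is zero elsewhere. The self-loop term $v_{ii}\bt e_i\bt e_i^\top$ contributes only $v_{ii}$ at position $(i,i)$.

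For an off-diagonal entry $(a,b)$ with $a\neq b$, no self-loop term contributes. The only edge term that contributes is the one for the edge $\{a,b\}$, if that edge is present, and it contributes $-w_{ab}$. If the edge is absent, the entry is $0=-W_{ab}$. In either case this agrees with $(\bt D-\bt W+\bt V)_{ab}=-W_{ab}$, because $\bt D$ and $\bt V$ are diagonal.

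For a diagonal entry $(a,a)$, every non-loop edge incident to $a$ contributes $w_{aj}$, and the self-loop term contributes $v_{aa}$. The total is $\sum_{j\neq a}W_{aj}+V_{aa}$. On the other side we have $D_{aa}-W_{aa}+V_{aa}$, where $D_{aa}=(\bt 1^\top\bt W)_a=\sum_j W_{ja}$, and this also equals $\sum_{j\neq a}W_{aj}+V_{aa}$ by symmetry of $\bt W$.

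The only real obstacle is a bookkeeping convention: $\bt W$ must have zero diagonal, with self-loops encoded solely in $\bt V$ as the GGL definition intends. Otherwise the term $W_{aa}$ cancels between $\bt D$ and $-\bt W$, and the second sum must still be read as $V_{aa}$. The plan is to state this convention explicitly, note that each undirected edge appears once in the first sum, and conclude that the two matrices agree entrywise.
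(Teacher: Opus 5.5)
Your argument is correct. The paper gives no proof of its own here; it only cites \cite{batson2014twice}. There the identity is the standard factorization $\bt L-\bt V=\bt B^\top\mathrm{diag}(\bt w)\bt B$, where $\bt B$ is the signed edge--vertex incidence matrix with rows $(\bt e_i-\bt e_j)^\top$, and expanding the product as a sum of outer products of rows gives the first sum.

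Your entrywise comparison is that factorization unpacked, so the route is essentially the same. Its advantage is that it states the conventions the statement leaves implicit:
\begin{itemize}
\item each undirected edge appears once in $\mathcal{E}$;
\item $W_{ab}=0$ for pairs not in the edge set;
\item the self-loop weight is $V_{aa}$.
\end{itemize}
The last point matters because the appendix restatement writes $w_{ii}$. That must be read as $v_{ii}$, since the diagonal of $\bt W$ cancels in $\bt D-\bt W$.

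Two minor remarks. First, you do not need to assume $\bt W$ has zero diagonal: your own computation $D_{aa}-W_{aa}=\sum_{j\neq a}W_{aj}$ holds regardless. Second, the main-text statement does not exclude terms with $i=j$ from the first sum, but they vanish because $\bt e_i-\bt e_i=\bt 0$, so nothing extra is needed there.
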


We use Proposition \ref{prop:rank_one_laplacians} to express the addition of an edge to a graph with Laplacian $\bt L$ as a \textit{rank-one update}:
\begin{equation}
\label{eq:rank-one-update}
  \tilde{\bt L} = \bt L +  w_{ij} \, \bt v \bt v^\top,   
\end{equation}
with $\bt v = (\bt e_i - \bt e_j)$ when adding an edge between $i$ and $j$. 

\subsection{Cauchy matrices}
\begin{definition}[Cauchy matrix \cite{gastinel1960inversion}]
Given two vectors $\bt x \in\mathbb{R}^n$ and $\bt y \in\mathbb{R}^{n}$ with no common entries, the Cauchy matrix $\bar{\bt C}(\bt x, \bt y)\in\mathbb{R}^{n\times n}$ has entries $\bar{C}_{ij} = 1 / ( x_i - y_j )$ for $i, j = 1, \hdots, n$.
\end{definition}

\smallskip

\begin{definition}[Orthogonal Cauchy-like matrix, OCLM \cite{fasino2023orthogonal, cai2018}]
A matrix $\bt C(\bt x, \bt y)\in\mathbb{R}^{n\times n}$ is an orthogonal Cauchy-like matrix if it is orthogonal and can be written as $\bt C(\bt x, \bt y) = \mathrm{diag}(\bt s) \bar{\bt C}(\bt x, \bt y) \mathrm{diag}(\bt t)$ for some $\bt s, \bt t\in\mathbb{R}^{n}$.
\end{definition}

OCLMs relate eigenvectors before and after a rank-one update of a symmetric matrix \cite{fasino2023orthogonal}. Each rank-one update corresponds to an edge in the GGL. 
In particular, for \eqref{eq:rank-one-update}, with $\bt L = \bt U \mathrm{diag}(\pmb{\lambda})\bt U^\top$ and $\tilde{\bt L} = \tilde{\bt U} \mathrm{diag}(\pmb{\tilde{\lambda}})\tilde{\bt U}^\top$, 
it can be shown that \cite{fernandez-menduina2025fast}:
\begin{equation}
\label{eq:icassp}
    \tilde{\bt U}^\top = - \bt C(\tilde{\pmb{\lambda}}, \pmb{\lambda})\bt U^\top.
\end{equation}
For rank-one updates,   $\tilde{\pmb{\lambda}}$ can be found via the secular equation \cite{golub1973some} (cf.~\cref{app:further}). Relation \eqref{eq:icassp} applies only to graphs without repeated eigenvalues and to updates that are not orthogonal to any column of $\bt U$ (cf.~\cref{sup:deflation}).

\subsection{Problem statement}
We seek to construct spectral operators that compute the GFT of a given graph \chan{efficiently via} localized computations. To this end, we rely on hierarchical decompositions into subgraphs connected by cuts of reduced size.

\begin{definition}[Hierarchical graph family (HGF)]
Let $\{\mathcal{G}_i\}_{i=1}^m$ be a collection of base graphs. A graph $\mathcal{G}$ belongs to the hierarchical graph family
$\mathcal{F}(L, \{\mathcal{G}_i\}_{i=1}^m, k)$ if it can be constructed by recursively merging pairs of subgraphs over $L$ levels, where at each level at most $k$ edges (bridge edges) are added between any pair of subgraphs.
\end{definition}

\chan{Every graph belongs to at least one HGF: choose each base graph to be a singleton node and then add all edges between pairs of nodes. HGFs allow us to  formalize the algorithmic complexity of the GFT computation: graphs exhibiting modular or multi-scale structure, which are typical in practice \cite{barthelemy2011spatial}, admit HGFs with small cut sizes, enabling efficient divide-and-conquer processing.}

In this paper, given a graph $\mathcal{G} \in \mathcal{F}(L, \{\mathcal{G}_i\}_{i=1}^m, k)$, we show that its GFT basis can be factorized into a sequence of Cauchy factors (\cref{thm:cauchy_factorization}), and we provide an $O(n^2 k)$ algorithm to compute this factorization (\cref{th:master-eigen}). Moreover, for arbitrary graphs, we propose algorithms to identify hierarchies that optimize factorization complexity  (\cref{sec:finding_hgf}). 
L2G-Net relies on this factorization, yielding a new class of spectral GNNs that avoids $O(n^3)$ eigendecompositions and introduces local inductive biases (\cref{sec:cauchy_gnn}). 

\section{Cauchy factorization of the GFT}
\label{sec:theory}
We first generalize the result of \cite{fernandez-menduina2025fast} to remove the constraints on the eigenvalues and updates (see \cref{sup:deflation} for details) by introducing \textit{Cauchy factors}, which will play a similar role to the Cauchy matrices in \eqref{eq:icassp} for the case of arbitrary graphs. 
While we focus on the Laplacian case, the following results hold for any symmetric matrix.
\begin{definition}[Cauchy factor (CF)]
\label{def:cauchy_factor}
Let $\mathcal{S}$ denote the set of \chan{eigenvalue indices selecting 1) distinct eigenvalues, and among them, 2) those with corresponding eigenvectors not orthogonal to $\bt v$}, and define the symmetric matrix $\tilde{\bt L}  = \bt L + \rho\bt v\bt v^\top$. The associated Cauchy factor is
\begin{equation}
\bt D(\bfnew, \bfbase)
=
\bt P_{\mathcal S}^\top
\begin{bmatrix}
\bt I & \bt 0 \\
\bt 0 & -\bt C(\bfnew_{\mathcal S}, \bfbase_{\mathcal S})
\end{bmatrix}
\bt P_{\mathcal S},
\end{equation}
where $\bt C(\bfnew_{\mathcal S}, \bfbase_{\mathcal S})$ is an OCLM,
$\bt I$ is the identity on invariant spectral directions of dimension $(n - |\mathcal S|)$,
and $\bt P_{\mathcal S}$ permutes the basis so that invariant and affected components are grouped together.
\end{definition}

In this case, $\bt C(\bfnew_{\mathcal{S}}, \bfbase_{\mathcal{S}})$ corresponds to the OCLM 
associated with the deflated problem. The following result generalizes \cite{fernandez-menduina2025fast}
from path-graph Laplacians to arbitrary symmetric matrices.

\begin{lemma}[Progressive decomposition identity]
\label{lemma:main_eq}
Let
\(
\tilde{\bt L} = \bt L + \alpha \bt v \bt v^\top
= \tilde{\bt U}\diag{\tilde{\pmb{\lambda}}}\tilde{\bt U}^\top
\),
with
\(
\bt L = \bt U\diag{\pmb{\lambda}}\bt U^\top
\).
Then, generalizing \eqref{eq:icassp}, the updated eigenvectors are 
\begin{equation}
\label{eq:fwd_map}
\tilde{\bt U}^\top = \bt D(\tilde{\pmb{\lambda}}, \pmb{\lambda}) \bt U^\top.
\end{equation}
\end{lemma}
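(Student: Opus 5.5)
We reduce to the non-degenerate case \eqref{eq:icassp} by deflation, and handle the invariant directions separately. Write $\bt z \defeq \bt U^\top \bt v$ for the coordinates of the update in the eigenbasis of $\bt L$. Then
\[
\bt U^\top \tilde{\bt L}\bt U = \diag{\bfbase} + \alpha \bt z\bt z^\top ,
\]
so it suffices to diagonalize this diagonal-plus-rank-one matrix.

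First we choose the basis $\bt U$ adapted to $\bt v$, which is always possible since eigenvectors within a repeated eigenspace are not unique. For each repeated eigenvalue $\lambda$ with eigenspace $\mathcal{E}_\lambda$, we rotate the basis of $\mathcal{E}_\lambda$, using a Householder or Givens rotation inside the eigenspace, so that the projection of $\bt v$ onto $\mathcal{E}_\lambda$ is aligned with a single basis vector. All other basis vectors of $\mathcal{E}_\lambda$ are then orthogonal to $\bt v$. After this step the index set splits into two parts. The first is $\mathcal{S}$: indices with distinct eigenvalues and $z_i \neq 0$. The second is its complement: indices with $z_i = 0$, which includes the rotated-away directions of repeated eigenspaces.

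For $i \notin \mathcal{S}$ we have $\tilde{\bt L}\bt u_i = \bt L\bt u_i + \alpha \bt v (\bt v^\top \bt u_i) = \lambda_i \bt u_i$. So $\bt u_i$ remains an eigenvector with unchanged eigenvalue $\tilde\lambda_i = \lambda_i$. This is the identity block of $\bt D(\bfnew,\bfbase)$. Applying the permutation $\bt P_{\mathcal S}$ to group indices, the matrix becomes block diagonal:
\[
\bt P_{\mathcal S}\bigl(\diag{\bfbase} + \alpha \bt z\bt z^\top\bigr)\bt P_{\mathcal S}^\top = \diag{\bfbase_{\bar{\mathcal S}}} \oplus \bigl(\diag{\bfbase_{\mathcal S}} + \alpha \bt z_{\mathcal S}\bt z_{\mathcal S}^\top\bigr).
\]
The second block is a diagonal-plus-rank-one matrix with distinct diagonal entries and a fully nonzero update vector. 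These are exactly the hypotheses under which \eqref{eq:icassp} holds. Applying \eqref{eq:icassp} to this $|\mathcal S|$-dimensional problem (with the identity as base eigenbasis), its eigenvector matrix transposed is $-\bt C(\bfnew_{\mathcal S},\bfbase_{\mathcal S})$, where $\bfnew_{\mathcal S}$ are the roots of the secular equation. These roots strictly interlace with $\bfbase_{\mathcal S}$, so the Cauchy matrix is well defined. Assembling the two blocks, undoing the permutation, and multiplying by $\bt U^\top$ yields $\tilde{\bt U}^\top = \bt D(\bfnew,\bfbase)\bt U^\top$.

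The main obstacle is the bookkeeping in the degenerate case. We must check that a distinct eigenvalue in the deflated block cannot coincide with an eigenvalue of the invariant block. If it did, the eigenvectors of $\tilde{\bt L}$ would be non-unique and the statement would only hold "for a valid choice" of $\tilde{\bt U}$. We would state the result accordingly: $\tilde{\bt U}$ is the eigenbasis produced by this construction, and it is unique up to the same freedom within repeated eigenspaces. We would also verify that the rotation step only changes $\bt U$ within eigenspaces, so $\bt L = \bt U\diag{\bfbase}\bt U^\top$ still holds. Finally, the interlacing argument requires $z_i \neq 0$ for all $i\in\mathcal S$; this is exactly what the definition of $\mathcal{S}$ guarantees.
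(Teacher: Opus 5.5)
Your proposal is correct and follows the same route as the paper's proof. The paper first deflates: it rotates $\bt U$ inside repeated eigenspaces (the Householder step of \cref{sup:deflation}) so that every index outside $\mathcal{S}$ has $z_i=0$ and is invariant. It then applies the non-degenerate identity \eqref{eq:icassp} to the diagonal-plus-rank-one block on $\mathcal{S}$ and reassembles with the identity on the complement; your argument fills in the same two steps in more detail. You are also more careful than the paper's two-line proof, because you state explicitly that the identity holds for the adapted basis $\bt U$, and only for a particular choice of $\tilde{\bt U}$ when a new eigenvalue coincides with an invariant one.
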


\begin{proof}
See \cref{thm:main_eq} in the Appendix.
\end{proof}

We can now extend this identity to HGFs.

\begin{theorem}[Cauchy factorization]
\label{thm:cauchy_factorization}
Let
\(
\mathcal{G} \in \mathcal{F}(L, \{\mathcal{G}_i\}_{i=1}^m, k)
\)
with Laplacian
\(
\bt L = \bt U \diag{\pmb{\lambda}} \bt U^\top
\).
Let $\bt U_0$ denote the block-diagonal matrix, where each block contains the eigenvectors
of one subgraph $\mathcal{G}_i$.
Then,
\begin{equation}
\bt U^\top
=
\bt D(\pmb{\lambda}, \tilde{\pmb{\lambda}}_{K-1})
\cdots
\bt D(\tilde{\pmb{\lambda}}_1, \tilde{\pmb{\lambda}}_0)
\bt U_0^\top,
\end{equation}
where $\tilde{\pmb{\lambda}}_0$ are the eigenvalues of the partitioned graph,
and \chan{$\tilde{\pmb{\lambda}}_{K-\ell}$} follows by removing $\ell = 1,\dots,K$ bridge edges from
$\mathcal{G}$, where $K = k(2^L - 1)$.
\end{theorem}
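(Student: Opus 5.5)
The plan is to prove the factorization by induction on the bridge edges, adding them one at a time and invoking \cref{lemma:main_eq} at each step. First, we fix an ordering of all bridge edges of $\mathcal{G}$, say $e_1,\dots,e_K$, level by level. We list the at most $k$ bridges created when merging two subgraphs at level $1$ first, then those at level $2$, and so on. Since merging pairs over $L$ levels of a binary hierarchy produces $2^{L-1}+2^{L-2}+\cdots+1 = 2^L-1$ merges, each adding at most $k$ edges, there are at most $K = k(2^L-1)$ bridge edges. If a given merge uses fewer than $k$ edges, we pad with zero-weight (trivial) updates. For such an update the Cauchy factor is the identity, because $\mathcal{S}=\emptyset$ when $\rho=0$. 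Alternatively we can simply take $K$ to be the actual count; either way the product has exactly $K$ factors.

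Next, we define the intermediate Laplacians. Let $\tilde{\bt L}_0$ be the Laplacian of the partitioned graph, i.e., the disjoint union of the $\mathcal{G}_i$ with all bridge edges removed. For $\ell = 1,\dots,K$, set $\tilde{\bt L}_\ell = \tilde{\bt L}_{\ell-1} + w_{e_\ell}\bt v_\ell\bt v_\ell^\top$ with $\bt v_\ell = \bt e_i-\bt e_j$ for $e_\ell=(i,j)$. By \cref{prop:rank_one_laplacians}, $\tilde{\bt L}_K = \bt L$, and $\tilde{\bt L}_{K-\ell}$ is exactly the Laplacian obtained by removing the last $\ell$ bridge edges, which matches the statement's indexing. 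Because $\tilde{\bt L}_0$ is block diagonal with blocks equal to the Laplacians of the $\mathcal{G}_i$ (bridge edges contribute to degrees only through their rank-one terms, which are removed), its eigenvector matrix may be chosen as the block-diagonal $\bt U_0$, with eigenvalues $\tilde{\pmb{\lambda}}_0$.

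For the induction step, let $\tilde{\bt U}_\ell$ be the eigenvectors of $\tilde{\bt L}_\ell$. Applying \cref{lemma:main_eq} to the pair $(\tilde{\bt L}_{\ell-1},\tilde{\bt L}_\ell)$ gives $\tilde{\bt U}_\ell^\top = \bt D(\tilde{\pmb{\lambda}}_\ell,\tilde{\pmb{\lambda}}_{\ell-1})\tilde{\bt U}_{\ell-1}^\top$. Chaining these $K$ identities and using $\tilde{\pmb{\lambda}}_K=\pmb{\lambda}$ yields the stated product.

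The main subtlety is the non-uniqueness of eigenbases. When $\tilde{\bt L}_{\ell}$ has repeated eigenvalues, \cref{lemma:main_eq} produces \emph{some} orthonormal eigenbasis: the deflation chooses a particular basis within each eigenspace, rotating repeated eigenvalue blocks so that only one vector per block sees $\bt v$. The next step must start from precisely that basis, so the induction has to carry the basis produced at step $\ell-1$ forward rather than an arbitrary one. The conclusion is then that $\bt U$ is \emph{an} eigenbasis of $\bt L$, which is the best possible statement since $\bt U$ is itself only defined up to rotations within eigenspaces. We would also need to check that the deflation rotations are absorbed into $\bt P_{\mathcal S}$ and the identity block of \cref{def:cauchy_factor}, as \cref{lemma:main_eq} asserts for arbitrary symmetric matrices. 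We will state this convention explicitly and otherwise treat the induction as routine.
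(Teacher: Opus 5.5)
Your proposal is correct and follows the paper's own route. The appendix proves the result via \cref{lemma:hgf_decomposition}: it writes $\bt L$ as the block-diagonal Laplacian of the partitioned graph (eigenbasis $\bt U_0$) plus the bridge-edge rank-one terms of \cref{prop:rank_one_laplacians}, and inducts with the progressive identity of \cref{lemma:main_eq}, exactly as you chain $\tilde{\bt U}_\ell^\top=\bt D(\tilde{\pmb{\lambda}}_\ell,\tilde{\pmb{\lambda}}_{\ell-1})\tilde{\bt U}_{\ell-1}^\top$. Carrying the produced eigenbasis forward is a refinement the paper leaves implicit. One correction: the within-eigenspace deflation rotations have to be folded into the corresponding $\bt D$ as genuine orthogonal factors, since a permutation $\bt P_{\mathcal S}$ plus an identity block cannot express them. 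They are already needed at the first step for combinatorial Laplacians, where $\tilde{\bt L}_0$ has a repeated zero eigenvalue whose per-component constant eigenvectors both have nonzero projection on the bridge vector.
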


\begin{proof}
Sketch: By \cref{prop:rank_one_laplacians}, the Laplacian of $\mathcal{G}$ can be written
as a sequence of rank-one edge updates to connect the $m$ subgraphs $\mathcal{G}_i$.
Applying the progressive decomposition identity in \cref{lemma:main_eq} to each update
and composing the resulting Cauchy factors yields the stated factorization.
The full proof is given in \cref{app:proof_master}.
\end{proof}
\chan{Intuitively, each Cauchy factor is a mixing step between two subgraphs joined by an edge. An edge addition is a rank-one update of the Laplacian, which rotates the eigenbasis by a Cauchy matrix; since the update only affects the two endpoints' subgraphs, the rotation mixes their spectral coefficients and leaves the rest unchanged. The factorization in \cref{thm:cauchy_factorization} builds the global GFT by chaining these steps: start from independent subgraph GFTs, and merge one cut at a time.}
We show now that this factorization can be computed in quadratic time in the number of nodes, scaled by the cut size. \chan{Let $T_m(\cdot)$ be the time complexity of a parallel algorithm, assuming $m$ processors are available \cite{jaja1992parallel}. Then, we have the following result.}
\begin{theorem}
\label{th:master-eigen}
Let $\mathcal{G}\in\mathcal{F}(L, \lbrace \mathcal{G}_i\rbrace_{i = 1}^m, k)$. Assume the eigendecomposition of leaf subgraph $\mathcal{G}_i$ can be computed in $O(f_i(n))$ time. Then, we can find the Cauchy decomposition of the eigenvectors basis of the graph Laplacian $\bt L$ in $O(k n^2 + \sum_i f_i(n))$ for a sequential solver and in time $T_m(n, k) = O(k n^2 + \max_i f_i(n))$ for a parallel solver.
\end{theorem}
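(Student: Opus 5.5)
We prove the bound by counting the work in the sequence of rank-one updates given by \cref{thm:cauchy_factorization}, level by level, and separating the cost of the leaf eigendecompositions from the cost of the merges. The first step is the leaves. The block-diagonal $\bt U_0$ requires the eigendecompositions of the $\mathcal{G}_i$. Sequentially this costs $\sum_i f_i(n)$. In parallel, with $m$ processors, it costs $\max_i f_i(n)$. Everything else must then be shown to cost $O(kn^2)$ in total. We represent the output as the eigenvalue vectors $\tilde{\pmb{\lambda}}_\ell$ together with the data $(\bt s,\bt t)$ and the deflation set $\mathcal S$ defining each OCLM in the Cauchy factors. We do not form dense products.

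The second step is the cost of a single Cauchy factor. Consider one bridge-edge update $\tilde{\bt L}=\bt L+w\bt v\bt v^\top$ acting on a merged subgraph with $n'$ nodes. By \cref{lemma:main_eq}, we need three ingredients.
\begin{itemize}
\item The vector $\bt z=\bt U^\top\bt v$. Since $\bt v=\bt e_i-\bt e_j$, this is a difference of two rows of the current basis. We maintain the current basis explicitly on the $n'$ nodes and apply each new Cauchy factor to it. Applying an $n'\times n'$ Cauchy-like matrix to $n'$ vectors can be done in $O(n'^2)$ using fast Cauchy multiplication, or more simply by updating only the two needed rows. I would argue the latter: we only ever need $\bt U^\top\bt e_i$ for endpoints of future bridge edges, so each factor is applied to $O(1)$ vectors at cost $O(n'^2)$.
\item Deflation. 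We identify repeated eigenvalues and zero components of $\bt z$ in $O(n')$ after sorting, using Givens rotations as in \cref{sup:deflation}.
\item The secular equation. Its $n'$ roots can be found in $O(n'^2)$, since each root costs $O(n')$ per iteration with $O(1)$ iterations. The vectors $\bt s,\bt t$ are then obtained in $O(n'^2)$, or $O(n')$ via the Löwner formula.
\end{itemize}
Hence each update costs $O(n'^2)$.

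The third step is summing over the hierarchy. At level $\ell$ there are $2^{L-\ell}$ merges. Each merge involves a subgraph of size about $n_\ell$ and uses at most $k$ rank-one updates. The level therefore costs $O(k\,2^{L-\ell} n_\ell^2)$. For balanced merges, $n_\ell\approx n 2^{\ell-L}$, and the sum becomes $kn^2\sum_\ell 2^{\ell-L}=O(kn^2)$ as a geometric series. For unbalanced hierarchies the same bound holds, because $\sum_j n_j^2\le(\sum_j n_j)^2= n^2$ within a level, and the sizes at lower levels shrink. I expect this to be the main obstacle. Without balance, the per-level bound $kn^2$ times $L$ levels only gives $O(Lkn^2)$. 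I would try to charge each merge's cost $k(a+b)^2$ to pairs of nodes that first become co-resident in that merge, plus $k(a^2+b^2)$. This needs care and may require a balance assumption, or absorbing the factor $L$ into the constant. In parallel, merges within a level run concurrently, so the top-level merge dominates and the time is again $O(kn^2)$ plus the leaf term $\max_i f_i(n)$.

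Finally, the result combines the three steps: the leaf eigendecompositions contribute $\sum_i f_i(n)$ sequentially or $\max_i f_i(n)$ in parallel, and the merges contribute $O(kn^2)$.
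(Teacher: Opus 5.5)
There is a genuine gap in your second step, at the claim that each Cauchy factor is applied to only $O(1)$ vectors. You are right that only the projections $\bt U^\top(\bt e_u-\bt e_v)$ of \emph{future} bridge edges are needed. However, every such projection with an endpoint in the merged subgraph must pass through every factor of that merge.

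Already within one cut this is $\Theta(k)$ vectors. Once the $j$th of the $k$ bridge edges has been processed, the projections of the remaining $k-j$ edges of the same cut must be multiplied by the new factor. Deferring this, and applying the stored factors only when an edge is reached, gives the same count, $\sum_j j=\Theta(k^2)$. In addition, the endpoints of bridge edges at higher levels must also be kept current: up to $k(L-\ell)$ more for a level-$\ell$ subgraph. This is exactly the set $\mathcal Z$ maintained in \cref{alg:cauchy_full}.

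With the dense $O(n'^2)$ application you chose, the top-level cut alone therefore costs order $k^2n^2$. Over all levels you get $\sum_\ell k2^{L-\ell}\cdot O(k)\cdot O\big((n2^{\ell-L})^2\big)=O(k^2n^2)$, a factor $k$ above the claimed bound. The paper's proof states this explicitly for $g(s)=O(s^2)$.

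The missing ingredient is a fast matrix--vector product for Cauchy-like matrices, $g(s)=O(s\log s)$ \cite{pan2012structured}, applied to the $O(k(L-\ell))$ vectors of $\mathcal Z$ touching each subgraph. This brings the update term down to order $k^2nL^2\log n$. That term is dominated by the $O(kn^2)$ secular-equation cost, which you do count correctly, whenever $kL^2\log n=O(n)$; this is how the paper closes the argument. Your other option, keeping the full basis, does not give $O(n'^2)$ per factor either: $n'$ fast products cost $O(n'^2\log n')$, hence $O(kn^2\log n)$ overall.

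Finally, imbalance is not the real obstacle. The paper's count simply assumes level-$\ell$ subgraphs of size $n/2^{L-\ell}$. You should state balance as a hypothesis rather than try to charge around it, since the factor $L$ is genuine without it. For example, take a component of $\Theta(n)$ nodes formed at level $L/2$ from $\sqrt n$ leaves of size $\sqrt n$, with all other leaves singletons. Then $\sum_i f_i(n)=O(n^2)$, yet each of the last $L/2$ levels costs $\Theta(kn^2)$ in secular equations.
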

\begin{proof}
    See \cref{app:cauchy_eigen}.
\end{proof}
For simplicity, we focus on the parallel case. We highlight two cases: 1) when $L = \log_2n$, the base graphs become trivial and the complexity is $O(k n^2)$, and 2) when $k \ll n, k=O(1)$, i.e., connections between graphs are sparse, the complexity becomes $O(n^2 + \max_i f_i(n))$. An outline of the algorithm is depicted in \cref{alg:cauchy_sketch_eigen}, and the full algorithm is shown in \cref{alg:cauchy_full}. Next, we show how to find a suitable \chan{partition} for a given graph.

\begin{algorithm}[t]
\caption{Cauchy factorization}
\label{alg:cauchy_sketch_eigen}
\begin{algorithmic}[1]
\Require HGF graph $\mathcal{G}\in\mathcal{F}(L,\{\mathcal{G}_i\}_{i=1}^m,k)$.
\Ensure Eigenvalues $\boldsymbol{\lambda}$, Cauchy history $\mathcal{H}$.

\State \textbf{1. Initialization}
\State Compute leaf eigendecompositions $\{(\mathbf{U}_i,\boldsymbol{\lambda}_i)\}$.
\State Initialize set of transformed updates, $\mathcal{Z} = \emptyset$.
\For{each bridge edge $e=(u,v)$}
    \State $\mathcal{Z}\gets \mathcal{Z}\cup (\mathbf{e}_u - \mathbf{e}_v)$
\EndFor
\State Project all $\mathbf{z}\in\mathcal{Z}$ into the leaf bases $\mathbf{U}_i$.

\Statex
\State \textbf{2. Hierarchical merge}
\For{level $\ell = 1$ to $L$}
    \For{each pair of subgraphs to merge}
        \State Concatenate eigenvalues: $\boldsymbol{\lambda}\gets[\boldsymbol{\lambda}_L;\boldsymbol{\lambda}_R]$.
        \For{each bridge edge $e$ connecting them}
            \State Retrieve projection vector $\mathbf{z}\in\mathcal{Z}$.
            \State \textbf{Solve} the secular equation to get $\boldsymbol{\lambda}_{\mathrm{new}}$.
            \State \textbf{Construct} OCLM $\mathbf{C}$ from $(\mathbf{z},\boldsymbol{\lambda},\boldsymbol{\lambda}_{\mathrm{new}})$.
            \State \textbf{Update} relevant $\mathbf{z}'\in\mathcal{Z}$ via $\mathbf{z}'\gets\mathbf{C}^\top\mathbf{z}'$.
            \State $\boldsymbol{\lambda}\gets \boldsymbol{\lambda}_{\mathrm{new}}$, \quad $\mathcal{H}\gets \mathcal{H}\cup\{\mathbf{C}\}$
        \EndFor
    \EndFor
\EndFor

\State \Return $\boldsymbol{\lambda}$,\; $\mathcal{H}$
\end{algorithmic}
\end{algorithm}

\section{Finding a suitable HGF}
\label{sec:finding_hgf}
\chan{The complexity of our method scales with the cut size $k$, not with the edge count; sparse graphs help insofar as they admit small-cut partitions.} Although every graph admits an HGF decomposition, when $k = O(n)$ the decompositions may not be favorable for computation. We provide a greedy algorithm that partitions any graph into subgraphs directly minimizing the cost of computing the Cauchy factorization. When decomposition yields diminishing returns, we use spectral sparsification to reduce the cut size.

\subsection{Greedy HGF construction}
Based on the GFT factorization cost  (\cref{th:master-eigen}), we develop a heuristic that favors balanced partitions with few bridge edges.
For a given graph, we generate candidate partitions using balanced cuts with varying balance constraints (e.g., spectral bisection based on the Fiedler vector with varying balance tolerances) \cite{ng2001spectral}. To accept a partition, we evaluate whether the split offers computational benefits based on \cref{th:master-eigen}. Let $f(\mathcal{G})$ denote the theoretical complexity cost of processing graph $\mathcal{G}$, i.e., the eigendecomposition complexity. Given a graph $\mathcal{G}$ and a candidate split into subgraphs $\mathcal{G}_1, \mathcal{G}_2$, we accept the partition only if:
\begin{equation}
    n^2 k + \max_{i = 1, 2} \, f(\mathcal{G}_i)< f(\mathcal{G}).
\end{equation}
If this condition holds, we recur on $\mathcal{G}_1$ and $\mathcal{G}_2$. If no balanced cut satisfies this condition, we mark the subgraph as a leaf or apply the sparsification strategy below.

Regarding spectral bisection, using the Lanczos algorithm, each iteration requires a matrix-vector multiplication with the Laplacian, which
costs $O(|E|)$ time for sparse graphs. \chan{Since we have to repeat this operation on each level, }the total runtime is $O(|E|\,2^L)$, with $L = \log(n)$. Hence, computing the Fiedler vector is nearly linear in the number of edges
and is not a computational bottleneck.

\subsection{Cut sparsification}
When partitioning fails to reduce the cost function, rather than keeping all edges crossing the cut, we remove some of them via a spectral sparsification \cite{spielman2008graph} applied locally, i.e., only to subgraph cuts. 
\begin{theorem}[Interface sparsification for HGFs]
\label{thm:hgf_sparsification}
Let $\mathcal{G} \in \mathcal{F}(L, \{\mathcal{G}_i\}_{i=1}^m, k)$ and $\varepsilon \in (0,1)$.
Then there exists a graph $\mathcal{G}' \in \mathcal{F}(L, \{\mathcal{G}_i\}_{i=1}^m, k')$
such that their combinatorial Laplacians $\bt L$ and $\bt L'$ satisfy $\forall \bt x \in \mathbb{R}^n$,
\begin{equation}
(1-\varepsilon)\,\bt x^\top \bt L \bt x
\;\le\;
\bt x^\top \bt L' \bt x
\;\le\;
(1+\varepsilon)\,\bt x^\top \bt L \bt x,
\end{equation}
with $k' \leq O\!\left( \varepsilon^{-2}\right)$
that can be constructed in $O(|E| \, \log n)$.
\end{theorem}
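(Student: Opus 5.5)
The plan is to sparsify each interface separately and keep the subgraphs untouched. The HGF structure then survives by construction, and the spectral guarantee will follow by summing quadratic forms.

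First I fix the hierarchy certifying $\mathcal{G}\in\mathcal{F}(L,\{\mathcal{G}_i\}_{i=1}^m,k)$. Each merge node $t$ of the binary merge tree joins two subgraphs through a bridge set $B_t$ with $|B_t|\le k$. By \cref{prop:rank_one_laplacians}, the combinatorial Laplacian splits as
\begin{equation}
\bt L=\sum_{i=1}^m \bt L_{\mathcal{G}_i}+\sum_{t}\bt L_{B_t},
\end{equation}
where $\bt L_{B_t}=\sum_{(u,v)\in B_t} w_{uv}(\bt e_u-\bt e_v)(\bt e_u-\bt e_v)^\top$ is the Laplacian of the bipartite cut graph at merge $t$. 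All summands are positive semidefinite.

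Second, I sparsify each cut graph $B_t$ on its own. For this I invoke Spielman--Srivastava effective-resistance sampling \cite{spielman2008graph}, or better a deterministic linear-size sparsifier (Batson--Spielman--Srivastava \cite{batson2014twice}). This gives a reweighted subgraph $B'_t\subseteq B_t$ with
\begin{equation}
(1-\varepsilon)\bt x^\top\bt L_{B_t}\bt x\le\bt x^\top\bt L_{B'_t}\bt x\le(1+\varepsilon)\bt x^\top\bt L_{B_t}\bt x
\end{equation}
for all $\bt x$, and the number of edges of $B'_t$ is bounded by a constant over $\varepsilon^2$ times the number of vertices of $B_t$.

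Third, I set $\bt L'=\sum_i\bt L_{\mathcal{G}_i}+\sum_t\bt L_{B'_t}$. The subgraph terms are left exactly as they are, and each cut term lies within a $(1\pm\varepsilon)$ factor. Adding these inequalities and using nonnegativity of every quadratic form gives $(1-\varepsilon)\bt x^\top\bt L\bt x\le\bt x^\top\bt L'\bt x\le(1+\varepsilon)\bt x^\top\bt L\bt x$.

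Because $B'_t\subseteq B_t$, the new graph $\mathcal{G}'$ uses the same base graphs and the same merge order. Hence $\mathcal{G}'\in\mathcal{F}(L,\{\mathcal{G}_i\},k')$, where $k'=\max_t|B'_t|$.

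For the running time, I sample each cut with approximate effective resistances computed by Johnson--Lindenstrauss projections and fast Laplacian solves on $B_t$. This costs $\tilde O(|B_t|)$ per cut, and since $\sum_t|B_t|\le|E|$, the total is $O(|E|\log n)$ up to the solver's polylog factors. The partition itself is taken as given.

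The main obstacle is the size bound $k'=O(\varepsilon^{-2})$, independent of $n$. Generic sparsifiers give $O(N_t/\varepsilon^2)$ edges, or $O(N_t\log N_t/\varepsilon^2)$ with sampling, where $N_t$ is the number of cut endpoints. That count is independent of $n$ only when $N_t=O(1)$, which covers bounded-degree interfaces, or when $k$ itself is constant. So I would try two routes. The first is to bound the number of effective-resistance-weighted samples by the sum of leverage scores, $\sum_{e\in B_t}w_e R_e=\mathrm{rank}(\bt L_{B_t})$. The second, if that fails, is to state the bound as $k'=O(\varepsilon^{-2}\cdot\max_t N_t)$ and read the theorem's $O(\varepsilon^{-2})$ with the interface vertex count treated as constant. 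I would also avoid $\log$ factors by using the deterministic BSS sparsifier, at the price of a slower construction.
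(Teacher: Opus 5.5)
Your construction is the same as the paper's: fix the hierarchy, sparsify each bridge set $B_t$ independently, and leave the base graphs untouched. Your PSD-summation argument for the $(1\pm\varepsilon)$ bound and your HGF-membership argument are correct, and more explicit than the paper's sketch. The gap is exactly the one you flag, and it cannot be closed. The paper gets ``$k'=O(\varepsilon^{-2})$ edges per interface'' only by citing Spielman--Srivastava, which gives $O(N_t\log N_t/\varepsilon^{2})$ edges on an $N_t$-vertex cut graph, so the same hole is in the paper's proof. Your route 1 does not help. With resistances computed in $B_t$, $\sum_{e\in B_t}w_eR_e=\mathrm{rank}(\mathbf{L}_{B_t})$, and this rank is also a lower bound on $|B'_t|$ for every sparsifier of $B_t$. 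Indeed, $(1-\varepsilon)\mathbf{L}_{B_t}\preceq\mathbf{L}_{B'_t}$ forces $\ker\mathbf{L}_{B'_t}\subseteq\ker\mathbf{L}_{B_t}$, and each edge adds rank at most one. For a matching cut that rank is $k$, so nothing can be removed.

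Worse, the bound is false for every choice of $\mathcal{G}'$, not only per-cut ones. Take $L=1$ and the ladder: unit-weight paths $\mathcal{G}_1,\mathcal{G}_2$ on $k$ vertices joined by $k$ unit-weight rungs $B$, so $n=2k$. Let $\mathbf{L}_0=\mathbf{L}_{\mathcal{G}_1}+\mathbf{L}_{\mathcal{G}_2}$, so $\|\mathbf{L}_0\|\le 4$. Any admissible $\mathbf{L}'=\mathbf{L}_0+\mathbf{L}_{B'}$ must satisfy $\mathbf{L}_{B'}\succeq(1-\varepsilon)\mathbf{L}_B-\varepsilon\mathbf{L}_0$. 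By Weyl, the right-hand side has $k$ eigenvalues at least $2(1-\varepsilon)-4\varepsilon$, which is positive for $\varepsilon<1/3$. So $\mathbf{L}_{B'}$ is positive definite on a $k$-dimensional subspace, and $|B'|\ge\mathrm{rank}\,\mathbf{L}_{B'}\ge k=n/2$, whatever edges and weights $B'$ uses.

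What you can prove is your route 2, stated as $k'\le\min\{k,\,O(\varepsilon^{-2}\max_t N_t)\}$ via Batson--Spielman--Srivastava, with one more $\log N_t$ factor if you sample instead. This beats $k$ only for dense interfaces, $k\gg N_t/\varepsilon^{2}$. Do not recast it as $O(\varepsilon^{-2})$ with $N_t$ treated as constant. In a simple graph $N_t=O(1)$ already forces $k\le N_t^2/4=O(1)$, which makes the theorem vacuous.

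If you want a sharper, data-dependent version of route 1, sample the cut edges with effective resistances $R^{\mathcal{G}}_e$ taken in the whole graph, and keep the base edges deterministically. Matrix Chernoff then gives the whole-graph guarantee with an expected $O(\varepsilon^{-2}\log n\sum_{e\in B_t}w_eR^{\mathcal{G}}_e)$ edges on interface $t$. This is small when the pieces are internally well connected, but it is $\Theta(k)$ on the ladder.
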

\begin{proof}
Fix the hierarchical partition defining $\mathcal{G}$.
At each level, consider the subgraph induced by the bridge edges connecting
pairs of subgraphs.
Applying spectral sparsification \cite{spielman2008graph} independently to each
such interface yields a graph with at most $k' = O(\varepsilon^{-2})$ edges per
interface, while preserving the Laplacian quadratic form within $(1\pm\varepsilon)$.
Since sparsification preserves vertex
partition, the resulting graph $\mathcal{G}'$ belongs to the same HGF with reduced interface size.
\end{proof}
By \cref{thm:hgf_sparsification}, any graph can be replaced by a spectrally equivalent member of an HGF with bounded cut size, ensuring that the complexity bounds of \cref{th:master-eigen} apply up to a controllable approximation error. From the perspective of Cauchy factorization, sparsification can be viewed as a theoretically grounded form of truncation, limiting the effective number of the Cauchy updates while preserving the Laplacian quadratic form up to a controlled error. For spectral filters $g(\cdot)$ that are Lipschitz continuous on the spectrum of $ \bt L$, the output of a layer based on $\bt L'$ deviates from that based on $\bt L$ by at most
$O(\varepsilon)$, with the constant depending on the Lipschitz constant of $g(\cdot)$ \cite{levie2021transferability}.

\section{L2G-Net}
\label{sec:cauchy_gnn}

\chan{Given the Cauchy factorization, we can add learnable spectral filters in between Cauchy factors while preserving the overall complexity and providing more flexibility than GFT-based spectral GNNs. This is the idea underpinning L2G-Net (\cref{fig:graph_multires}b).} L2G-Net can be used as a drop-in replacement for standard spectral GNNs, potentially involving pointwise non-linearities and residual connections.

Assume a stack of $M$ spectral layers, acting over $\bt X_0\in\mathbb{R}^{n\times n_{\sf f}}$, where $n_{\sf f}$ is the number of channels. Let $\bt Z_m = \bt X_m\bt W_m$. Then, for $c=1, \hdots, n_{\sf f}$,
\begin{equation}
    \bt X_{m+1, c} = \bt Ug_{\theta, c}(\pmb \lambda)\bt U^\top \bt Z_{m, c}, \quad m = 0, \hdots, M-1.
\end{equation}
At a high level, our construction modifies the forward transform $\bt U^\top \bt Z_{m, c}$ by applying spectral processing independently on small base subgraphs using learnable local graph filters. Then,  information is progressively mixed across larger graph components through a sequence of Cauchy factors (cf.~\cref{thm:cauchy_factorization}). Finally, the synthesis transform is computed, again using the same factorization.

We start from the Cauchy factorization in \cref{thm:cauchy_factorization}. We insert learnable local filters before and after multiplying by the set of Cauchy factors corresponding to the cuts between each pair of subgraphs. Let these filters be $g_{r, p}(\cdot)$ for the $r$th level, with $r = 0, \hdots, L$, and considering the $p$th pair of subgraphs, with $p = 1, \hdots, 2^r-1$. Let $\bt D_{r, p}$ be the product of the Cauchy factors corresponding to the same cut. Given the filtered spectral representations at level $r-1$ on each subgraph, $\bt H_{p, 1}^{r-1}$ and $\bt H_{p, 2}^{r-1}$, we compute:
\begin{equation}
    \label{eq:hierarchical_mix}
        \bt H^{(r)}_{p}
    = 
    g_{r,p}(\pmb \lambda_{r,p})
    \bt D_{r,p}
    \begin{bmatrix}
        \bt H^{(r-1)}_{p_{l}} \\
        \bt H^{(r-1)}_{p_{r}}
    \end{bmatrix},
\end{equation}
where $\pmb \lambda_{r,p}$ are the eigenvalues of the merged subgraph at level $r$. By repeating this procedure recursively up to $r=L$, we obtain the spectral representation $\bt H^{(L)}$. We will denote by $\bt U(\bt \Phi)$ the matrix replacing $\bt U$ with the concatenation of GFTs, filtering, and Cauchy matrices, where $\bt \Phi$ represents learnable parameters for all $r$ and $p$. The output in the node domain is given by $\bt X_{\mathrm{out}} = \bt U g_{\theta}(\pmb \lambda) \bt U(\bt \Phi) \bt X_m$. We remark that products by $\bt U$ are computed using the factorization, so $\bt U$ is never explicitly computed. 

\chan{Remarkably, the partition and Cauchy factorization depend only on $\mathcal{G}$; learnable parameters appear only in the filters $g_{r,p}(\cdot)$, so the architecture is fixed by the graph before any training begins. The following result characterizes the expressiveness of L2G-Net.}
\chan{
\begin{theorem}[Expressiveness]\label{thm:expressiveness}
Let $\mathcal{G} \in \mathcal{F}(L, \{\mathcal{G}_i\}_{i=1}^m, k)$ with $k \geq 1$. The class of operators realizable by L2G-Net strictly contains the class of global spectral filters $g(\mathbf{L})$.
\end{theorem}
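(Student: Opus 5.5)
We argue containment and strictness separately. The operator realized by one L2G-Net layer is $\bt X_{\mathrm{out}} = \bt U g_{\theta}(\pmb\lambda)\bt U(\bt\Phi)\bt X$. Here $\bt U(\bt\Phi)$ is the product of the base GFTs $\bt U_0^\top$, the Cauchy blocks $\bt D_{r,p}$, and the interleaved filters $g_{r,p}(\pmb\lambda_{r,p})$ from \eqref{eq:hierarchical_mix}.

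\emph{Containment.} We choose every local filter to be the identity, $g_{r,p}\equiv 1$ for all $r<L$ and all $p$. Then the recursion \eqref{eq:hierarchical_mix} collapses to the bare product of Cauchy factors applied to $\bt U_0^\top$. By \cref{thm:cauchy_factorization} (built on \cref{lemma:main_eq}), this product equals $\bt U^\top$, so $\bt U(\bt\Phi)=\bt U^\top$. The layer then reduces to $\bt U g_\theta(\pmb\lambda)\bt U^\top = g_\theta(\bt L)$. Since $g_\theta$ ranges over arbitrary functions of the spectrum, every global spectral filter $g(\bt L)$ is realized.

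\emph{Strictness.} We must exhibit an L2G-Net operator that is not of the form $g(\bt L)$. Every $g(\bt L)$ commutes with $\bt L$, so it suffices to construct a realizable operator $\bt T$ with $\bt T\bt L\neq\bt L\bt T$.

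We keep all filters trivial except one local filter at level $0$, acting on a base subgraph $\mathcal G_1$ that is incident to a bridge edge; such a subgraph exists because $k\ge1$. On $\mathcal G_1$ we take a non-constant filter $h(\pmb\lambda_1)$ and set the global filter to $g_\theta\equiv1$. The resulting operator is
\begin{equation*}
\bt T=\bt U\,\bt D\,\mathrm{blkdiag}\big(h(\pmb\lambda_1),\bt I,\dots\big)\,\bt U_0^\top,
\end{equation*}
where $\bt D$ denotes the full Cauchy product, so that $\bt U\bt D=\bt U_0$. Hence $\bt T=\bt U_0\,\mathrm{blkdiag}(h(\bt L_1),\bt I,\dots)\,\bt U_0^\top=\mathrm{blkdiag}(h(\bt L_1),\bt I,\dots)$, where $\bt L_1$ is the Laplacian of $\mathcal G_1$ as it appears in the partitioned graph, including any self-loops inherited from the partition.

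This $\bt T$ is block diagonal with respect to the partition. If it commuted with $\bt L$, it would commute with the bridge-edge part $\bt L-\bt L_0$, where $\bt L_0$ is the block-diagonal partitioned Laplacian. Take a single bridge edge $(i,j)$ with $i\in\mathcal G_1$ and $j\notin\mathcal G_1$. The $(i,j)$ and $(j,i)$ entries of $\bt T\bt L-\bt L\bt T$ then reduce to comparing $(h(\bt L_1))_{i\cdot}$ against $\bt e_i$. We choose $h$ so that $h(\bt L_1)\bt e_i\neq\bt e_i$. For instance, take $h(x)=x$ when $\mathcal G_1$ has more than one node, so that $\bt L_1\bt e_i$ has off-diagonal support; in the singleton case use $h\equiv c\neq1$ instead. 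Either choice breaks commutation.

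\emph{Main obstacle.} The hard step is making the commutator computation rigorous in full generality. With $k>1$ bridge edges, the contributions of several edges could in principle cancel. We will handle this by choosing $h\equiv c$ with $c\neq1$. Then $\bt T=\mathrm{diag}(c\bt I_{\mathcal G_1},\bt I)$, and the commutator $[\bt T,\bt L]$ has entries $(c-1)L_{ij}$ exactly on the edges crossing out of $\mathcal G_1$. These entries are nonzero because all bridge weights are positive, so no cancellation can occur and the argument is uniform in $k$. We also need to check that this scalar local filter is admissible within the architecture, which holds because the filters $g_{r,p}$ are arbitrary functions of the eigenvalues.
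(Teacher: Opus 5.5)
Your proof is correct. The containment step is the same as the paper's, but you reach strictness by a different route.

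\textbf{The paper's route.} It stays in the spectral domain. It derives the necessary condition that $\mathbf{D}\mathbf{F}_{\mathrm{loc}}\mathbf{D}^\top$ be diagonal, which implicitly assumes that $f(\mathbf{\Lambda})=g_\theta(\mathbf{\Lambda})g_1(\mathbf{\Lambda})$ is invertible. It then checks that this condition fails on the minimal two-singleton graph, with $\mathbf{D}=\mathbf{U}^\top$ the $2\times 2$ Hadamard-type matrix and $\mathbf{F}_{\mathrm{loc}}=\mathrm{diag}(a,b)$. Extension to arbitrary graphs is only sketched in a remark that appeals to the density of Cauchy factors.

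\textbf{Your route.} You use $\mathbf{U}\mathbf{D}=\mathbf{U}_0$ to collapse the operator back to the vertex domain, giving $\mathbf{T}=\mathbf{U}_0\mathbf{F}_{\mathrm{loc}}\mathbf{U}_0^\top=\mathrm{blkdiag}(h(\mathbf{L}_1),\mathbf{I},\dots)$. You then invoke the basis-free obstruction that every $g(\mathbf{L})$ commutes with $\mathbf{L}$. With a per-subgraph constant $c\neq 1$, the commutator entry is $(c-1)L_{ij}\neq 0$ on every cut edge leaving $\mathcal{G}_1$. The paper's counterexample is exactly the special case $\mathbf{U}_0=\mathbf{I}_2$ of your construction.

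\textbf{What each buys.} Your argument gives the separation on every graph in the family with at least one bridge edge. That is what the theorem literally asserts for a given $\mathcal{G}$. It has no cancellation issues across multiple bridge edges and does not depend on which eigenbasis the factorization selects under deflation or repeated eigenvalues. The paper's computation instead gives an explicit spectral picture of how a Cauchy factor mixes differently filtered subgraph coefficients.

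One small cleanup: drop the phrase ``non-constant filter $h$'' and the $h(x)=x$ detour. All that matters is that the local filters differ across subgraphs, and your final choice ($h\equiv c$ on $\mathcal{G}_1$, identity elsewhere) already achieves this.
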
}
\begin{proof}[Proof]
See \cref{app:expressiveness}.
\end{proof}
\chan{A spectral filter $g(\mathbf{L})$ applies the same response across the entire graph. L2G-Net applies different spectral responses to different subgraphs before merging via Cauchy factors, producing operators that cannot be represented as $g(\bt L)$. This degree of freedom is given by the partition structure.}

\section{Experiments}
First, we validate our theoretical complexity predictions on synthetic graphs with controlled structure. 
Second, we evaluate our method on real-world graphs that may deviate from the ideal hierarchical setup. While our primary contribution is to provide a principled framework for exploiting hierarchical graph structure when it is present, the GNN experiments demonstrate that the approach degrades gracefully under approximation. All experiments are conducted on an Intel(R) Core(TM) i9-9900K CPU @ 3.60GHz with an NVIDIA GeForce RTX 2070 GPU.

\begin{table}[t]
\centering
\caption{Average test accuracy with standard deviation for $10$ random train/test partitions on the benchmark in \cite{platonov2023critical}. We report the best-performing methods per family. We show average Acc for \textit{Roman-empire} (R. Emp.) and \textit{Amazon-ratings} (Am. Rat.) and AUC for \textit{Tolokers} (Tolok.) and \textit{Minesweeper} (Mines.). Higher is better ($\uparrow$). See also \cref{tab:table9}.}
\label{tab:main_results}
\small
\setlength{\tabcolsep}{5.5pt}
\renewcommand{\arraystretch}{1.0}
\begin{tabular}{lcccc}
\toprule
\textbf{Method} & \textbf{R. Emp.} & \textbf{Am. Rat.} & \textbf{Mines.} & \textbf{Tolok.} \\
\midrule
GCN & 73.69{\scriptsize (0.7)} & 48.70{\scriptsize (0.6)} & 89.75{\scriptsize (0.5)} & 83.64{\scriptsize (0.7)} \\

\midrule

CO-GNN
& 91.57{\scriptsize (0.3)}
& \underline{54.17}{\scriptsize (0.4)}
& 97.31{\scriptsize (0.4)}
& 84.45{\scriptsize (1.2)} \\

Polynormer
& \textbf{92.55}{\scriptsize (0.3)}
& \textbf{54.81}{\scriptsize (0.5)}
& 97.46{\scriptsize (0.4)}
& \textbf{85.91}{\scriptsize (0.7)} \\

MP-SSM
& 90.91{\scriptsize (0.5)}
& 53.65{\scriptsize (0.7)}
& 95.33{\scriptsize (0.7)}
& 85.26{\scriptsize (0.9)} \\

St-ChebNet
& 92.03{\scriptsize (0.9)}
& 53.15{\scriptsize (0.2)}
& 95.71{\scriptsize (2.3)}
& 85.55{\scriptsize (3.4)} \\

\midrule

\textbf{Ours}
& \underline{92.12}{\scriptsize(1.1)}
& 53.39\scriptsize (0.6)
& \textbf{97.50}{\scriptsize (0.3)}
& \underline{85.57}{\scriptsize (0.6)} \\
\bottomrule
\end{tabular}
\end{table}

\begin{table}[t]
\centering
\caption{Runtime (s) for computing the eigenvalue decomposition (ED) and the Cauchy factorization (CF) on heterophilous graphs from \cite{platonov2023critical}. }
\label{tab:runtime_results}
\small
\setlength{\tabcolsep}{8pt}
\renewcommand{\arraystretch}{1.05}
\begin{tabular}{lcccc}
\toprule
\textbf{Method} & \textbf{R. Emp.} & \textbf{Am. Rat.} & \textbf{Mines.} & \textbf{Tolok.} \\
\midrule
ED
& 731.12
& 896.56
& 92.34
& 117.57 \\

CF (ours)
& \textbf{232.49}
& \textbf{281.30}
& \textbf{32.21}
& \textbf{91.33} \\
\bottomrule
\end{tabular}
\end{table}

\subsection{Synthetic experiments}

We validate the predictions of \cref{th:master-eigen} measuring the runtime of the proposed Cauchy factorization on synthetic graphs. We isolate the dependence on graph size and cut size. All experiments are run on CPU.

\paragraph{Setup.}
We generate random Barabási-Albert graphs \cite{barabasi1999emergence}. For each graph with $n$ nodes, we first compute a balanced partition using spectral bisection based on the Fiedler vector of the combinatorial Laplacian. We repeat this on each subgraph to obtain a partition with $4$ subgraphs. To control the cut size, we sparsify the cut edges, retaining a fixed target number of bridge edges.

\paragraph{Runtime vs graph size (\cref{fig:runtime_nodes}).}
We compare
(i) full eigendecomposition of the Laplacian via \texttt{numpy.linalg.eigh} and 
(ii) our Cauchy factorization. 
%
We report the median runtime across three runs vs number of nodes $n$ for a fixed cut size $k=5$. Eigendecomposition scales with $O(n^3)$ and becomes impractical beyond $n\approx 10^4$. 
The Cauchy factorization follows a clear quadratic trend, consistent with the $\mathcal{O}(kn^2)$ complexity predicted by our analysis. As $n$ increases, the cost of the base eigendecompositions of the individual subgraphs becomes non-negligible, causing the overall trend to gradually approach cubic behavior. 
This effect is expected and can be mitigated by increasing the number of partitions; when the number of levels is $L = \log n$, the complexity of the base eigendecompositions is trivial. The preprocessing overhead (spectral cut and sparsification) remains negligible across all graph sizes considered.

\begin{figure}
    \centering
    \includegraphics[width=0.95\linewidth]{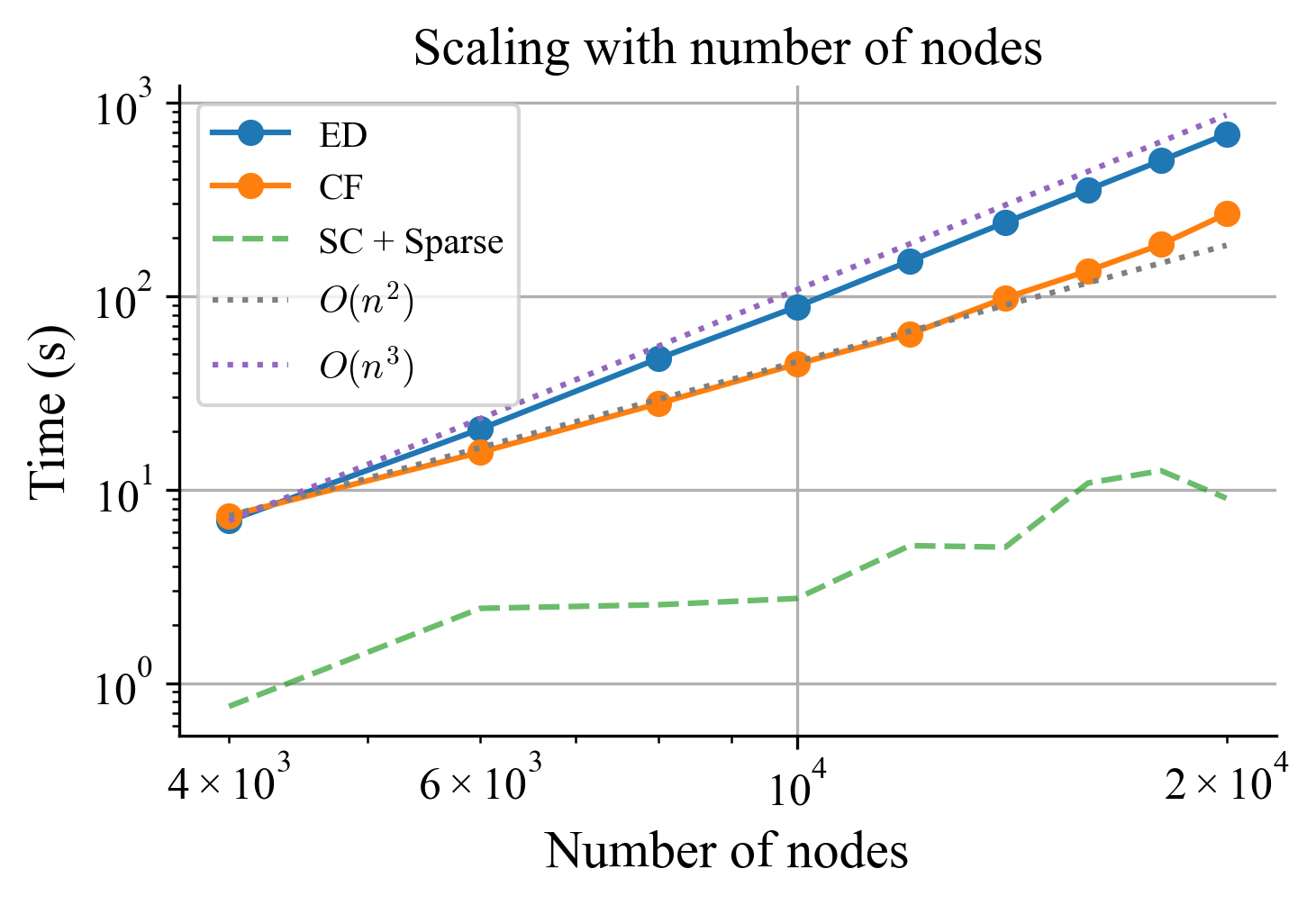}
    \caption{Runtime on a random graph for eigendecomposition (ED), Cauchy factorization (CF), and preprocessing (SC+Sparse). The Cauchy factorization scales quadratically with the number of nodes. The preprocessing time is negligible.}
    \label{fig:runtime_nodes}
\end{figure}

\textbf{Runtime vs cut size (\cref{fig:scalability_k}).}
For a fixed graph size ($n=8000$), the runtime of the proposed Cauchy factorization grows linearly with the number of crossing edges, $k$, which corresponds to the number of the Cauchy updates. This confirms that the computational cost is governed by the cut size rather than by the total graph size. 
The preprocessing step based on spectral cuts and sparsification exhibits an approximately constant runtime, as promised by \cref{thm:hgf_sparsification}. 
These results show that our method remains efficient because you can sparsify first to reduce the cut size and this sparsification has limited complexity.

\begin{figure}
    \centering
    \includegraphics[width=0.95\linewidth]{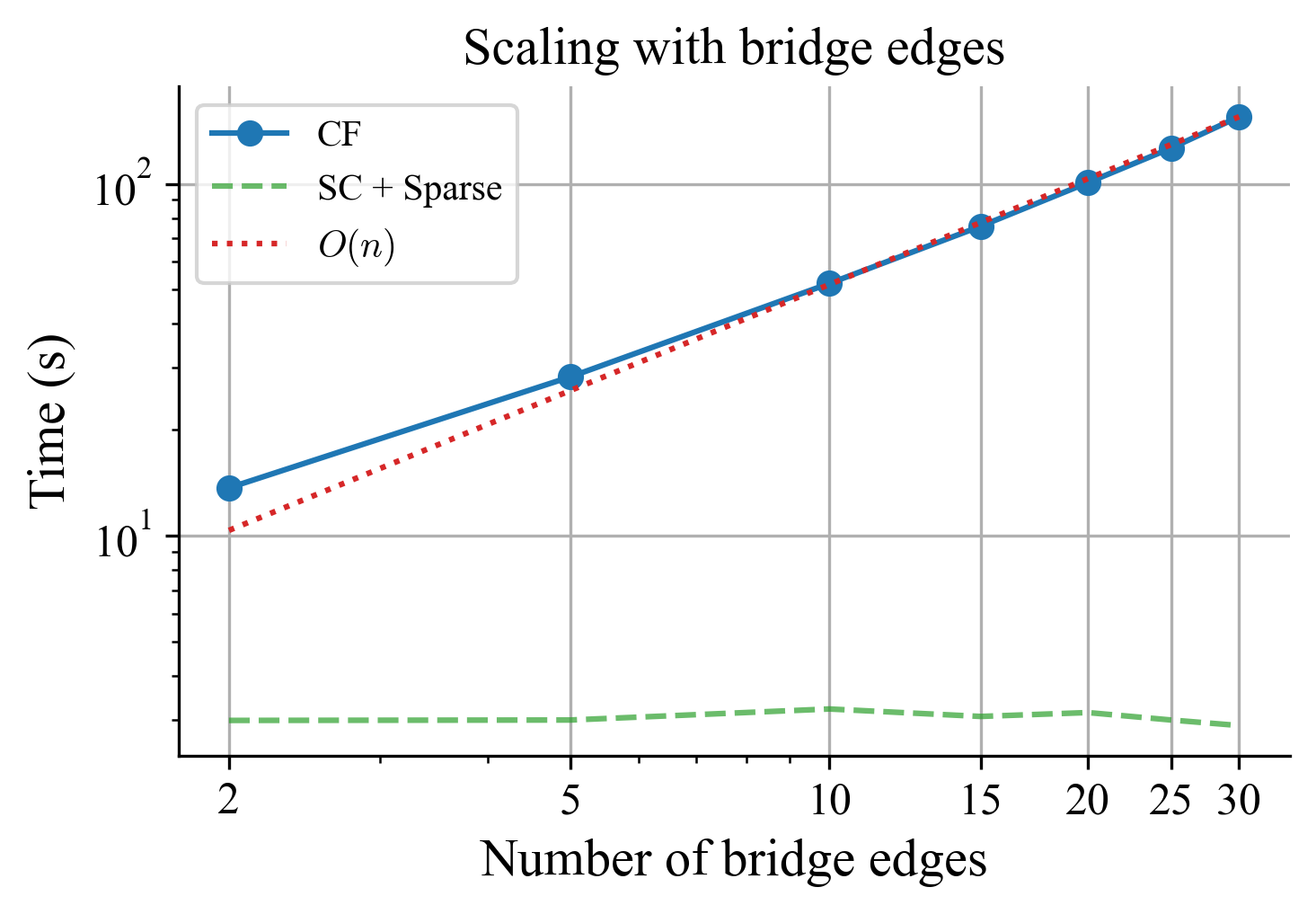}
    \caption{Runtime against $k$ on a random graph for Cauchy factorization (CF) and preprocessing (SC+Sparse). The complexity of the Cauchy factorization scales linearly with the cut size. The preprocessing time is small and roughly constant with $k$.}
    \label{fig:scalability_k}
\end{figure}

\subsection{Transductive heterophilous benchmarks}

\paragraph{Setup.}We evaluate the proposed method on a heterophilous benchmark suite  \cite{platonov2023critical}, which comprises large-scale graphs designed to challenge MPNNs. In particular, we focus on \textit{Roman-empire}, \textit{Amazon-ratings}, \textit{Minesweeper}, and \textit{Tolokers}. These datasets combine non-local dependencies with graph sizes for which the global GFT remains feasible but is very slow, making them well-suited for assessing our approach. We rely on the normalized Laplacian, although we sparsify based on the combinatorial Laplacian \cite{batson2014twice}. We use splines to parametrize our filters. Our method first partitions the graph into two subgraphs (one level of hierarchy) and sparsifies the cut. A single shared filter is used across all feature channels and layers (cf.~ \cref{app:arch_details}).

\paragraph{Performance (\cref{tab:main_results}).} L2G-Net achieves competitive or superior performance across all benchmarks, exceeding the best baselines \cite{deng2024polynormer} on \textit{Minesweeper} and remaining close to the strongest methods on the others. Notably, while attention-based models such as Polynormer attain strong accuracy, they do so at the cost of learned graph structures that are harder to interpret (cf.~\cref{fig:gradcam}). 

\paragraph{Runtime (\cref{tab:runtime_results}).} We report the cost of CF and the eigendecomposition. CF yields speedups across all datasets. \textit{Tolokers} is a worst-case scenario: this graph is particularly dense, so the number of edges between subgraphs remains high after sparsification. Even in this case, we obtain runtime gains over direct eigendecomposition.

\paragraph{Learnable parameters (\cref{fig:performance_vs_parameters}).} We compare Global GFT, L2G-Net, and Polynormer across the four datasets. The topological inductive bias allows L2G-Net to compete with state-of-the-art methods while using orders of magnitude fewer learnable parameters. Moreover, L2G-Net consistently outperforms the Global GFT baseline across all datasets, highlighting the benefit of localized spectral processing over fully global spectral representations.

\begin{figure}
    \centering
    \includegraphics[width=\linewidth]{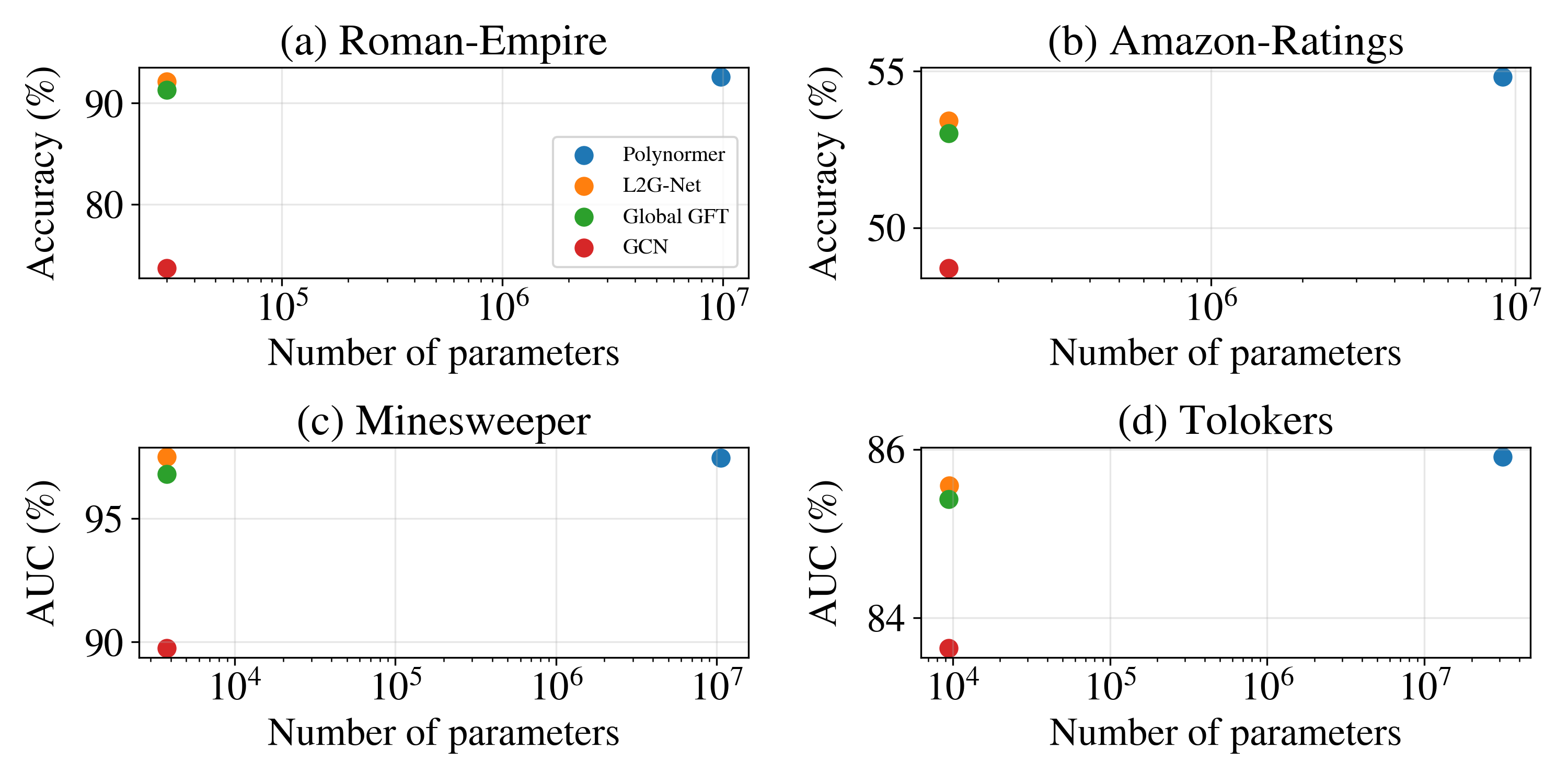}
    \caption{Performance vs complexity comparison. L2G-Net achieves competitive or superior performance with orders of magnitude fewer learnable parameters than state-of-the-art baselines, and outperforms the Global GFT across all datasets.}
    \label{fig:performance_vs_parameters}
\end{figure}

\subsection{Long-range inductive benchmarks}
\chan{To demonstrate the effectiveness of our method on inductive tasks, we evaluate L2G-Net on the long-range molecular benchmarks \emph{peptides-func} and \emph{peptides-struct} from \cite{dwivedi2022long}, which consist of inductive graph-level prediction tasks on small molecular graphs (\cref{tab:lrgbench}). These datasets allow for full Laplacian eigendecomposition; hence, we can isolate architectural inductive bias without confounding results with sparsification effects. Results show that L2G-Net performs within one standard deviation of state-of-the-art long-range models, including spatio-spectral hybrids such as \cite{geisler2024spatio}, without relying on positional encodings or graph rewiring. Moreover, L2G-Net outperforms spectral methods such as Specformer, ChebNet, Stable ChebNet, and ChebNetII. This indicates that the hierarchical spectral factorization underlying L2G-Net captures long-range dependencies in inductive settings without additional structural augmentation. Furthermore, our method can also be extended with positional encodings (PEs): when considering Laplacian and Random Walk PEs \cite{dwivedi2022graph}, we can obtain better accuracy in both benchmarks, demonstrating that L2G-Net and PEs are complementary.}

\begin{table}[t]
\centering
\caption{\chan{Time (minutes) to compute the eigendecomposition (ED) versus the CF factorization on City-Networks. ED runs out of memory in all cases; EDP extrapolates the $O(n^3)$ trend}.}
\label{tab:scaling}
\setlength{\tabcolsep}{3.5pt}
\begin{tabular}{lcccc}
\toprule
Method & \textbf{Paris} & \textbf{Shanghai} & \textbf{LA} & \textbf{London} \\
\midrule
ED              & OOM     & OOM      & OOM      & OOM \\
EDP  & 50412.82 & 121932.43 & 131543.54 & 1632123.32 \\
CF   & \textbf{17.91} & \textbf{45.61} & \textbf{61.53} & \textbf{144.22} \\
\bottomrule
\end{tabular}
\end{table}

\subsection{Long-range transductive benchmarks} \chan{We consider the dataset from \cite{liang2025towards}, which contains graphs with more than $10^{5}$ nodes. We use different levels of decomposition ($L=4$ for Paris and Shanghai, $L=5$ for LA, and $L=6$ for London). We sparsify keeping only $2$ edges between each pair of subgraphs. Table~\ref{tab:scaling} reports the time required to compute the factorization. Full eigendecomposition (ED) is infeasible on these graphs due to out-of-memory (OOM) errors; we include a projected ED runtime obtained by extrapolating the cubic scaling trend from smaller graphs, to give a sense of the asymptotic gap. The Cauchy factorization is three-four orders of magnitude faster than projected ED. The largest graph in the suite (London, 569k nodes) is factorized in roughly 144 minutes, which makes exact spectral processing tractable at scales where it was previously out of reach.}

\chan{We also show accuracy results (\cref{tab:citynetwork}). L2G-Net is competitive with state-of-the-art methods, and outperforms spectral baselines such as ChebNet. These results show that 1) L2G-Net scales to graphs where computing the GFT basis is unfeasible, and 2) the local-to-global inductive bias yields gains in long-range setups.}

\subsection{Local processing}
\label{sec:gradcam}
\paragraph{Setup.} To analyze the localization of predictive importance across graph nodes, we perform a node-level attribution study using Grad-CAM \cite{selvaraju2017grad} on the \textit{Minesweeper} dataset. For each evaluation sample, we compute Grad-CAM scores at the node level, normalize them to sum to one, and sort nodes by decreasing importance. We then compute the cumulative sum of importance as a function of the fraction of nodes retained, yielding a cumulative contribution curve per sample. These curves are finally averaged across the evaluation set. \chan{We include results for other graphs in \cref{app:gradcam}.}

\paragraph{Results (\cref{fig:gradcam}).} L2G-Net concentrates its predictive contributions on a small subset of nodes, reflecting its local bias. The Global GFT baseline spreads importance over a larger portion of the graph, consistent with global spectral representations. For Polynormer \cite{deng2024polynormer}, graph learning combined with attention-based mechanisms leads to more diffuse contributions, making explanations difficult to interpret with respect to the input geometry. \chan{We also show the filters learned by L2G-Net in \cref{app:learned_filters}.}

\begin{figure}
    \centering
    \includegraphics[width=0.95\linewidth]{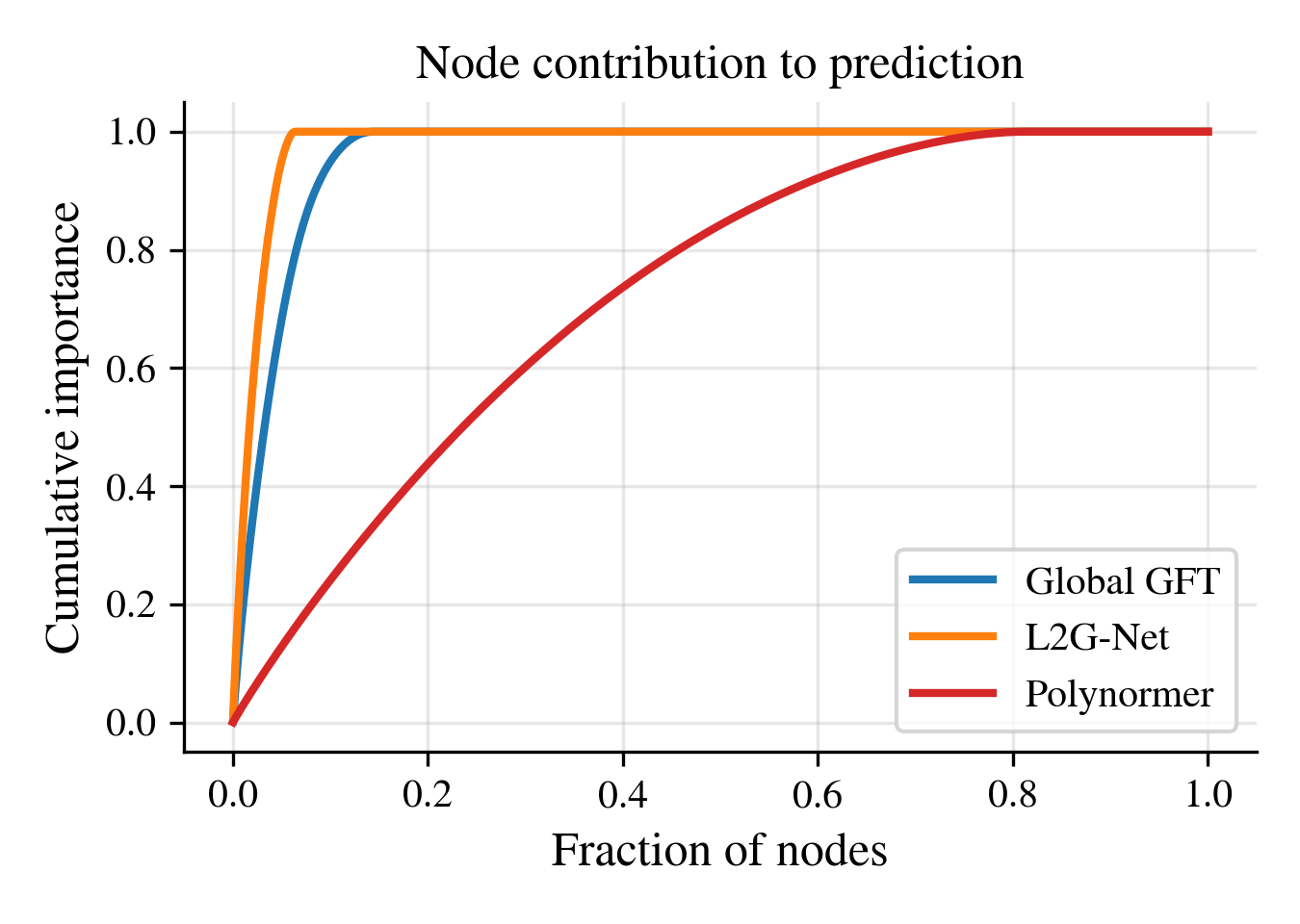}
    \caption{Cumulative node contribution to the prediction on \textit{Minesweeper}. Average across validation set; shaded areas indicate confidence intervals. L2G-Net localizes predictive power on a smaller fraction of nodes than Global GFT. The node contributions of Polynormer are less localized in the original graph.}
    \label{fig:gradcam}

    \vspace{-1em}
\end{figure}

\begin{table*}[t]
\centering
\caption{\chan{Memory consumption (GB) of full GFT vs.\ our Cauchy factorization (CF), using 64-bit precision. On \cite{platonov2023critical} we use $L=1$; on City-Networks \cite{liang2025towards}, where full GFT is infeasible at practical hardware limits, we use deeper hierarchies.}}
\label{tab:memory}
\setlength{\tabcolsep}{11pt}
\begin{tabular}{lcccc|cccc}
\toprule
Method & \textbf{Mines.} & \textbf{Tolok.} & \textbf{Am. Rat.} & \textbf{R. Emp.} & \textbf{Paris} & \textbf{Shanghai} & \textbf{LA} & \textbf{London} \\
\midrule
Full GFT  & 0.80 & 1.11 & 4.80 & 4.11 & 103.97 & 270.61 & 463.06 & 2588.22 \\
CF (ours) & \textbf{0.20} & \textbf{0.28} & \textbf{1.21} & \textbf{1.03} & \textbf{12.21} & \textbf{16.98} & \textbf{14.58} & \textbf{40.96} \\
\bottomrule
\end{tabular}

\vspace{-1em}
\end{table*}

\begin{table}[t]
\centering
\small
\setlength{\tabcolsep}{5pt}
\caption{\chan{LRGB results \cite{dwivedi2022long}; top performers per category shown, full results in Appendix~\ref{app:more_exper}. AP on peptides-func (higher is better), MAE on peptides-struct (lower is better). Trans.~= transformer, Rewir.~= rewiring, SS = state space.}}
\begin{tabular}{llcc}
\toprule
\textbf{Type} & \textbf{Model} & \textbf{pep.-func} ($\uparrow$) & \textbf{pep.-struct} ($\downarrow$) \\
\midrule
\multirow{2}{*}{Trans.}
& Graph ViT     & 69.42 $\pm$ 0.75 & 0.2449 $\pm$ 0.0016 \\
& GRIT          & 69.88 $\pm$ 0.82 & 0.2460 $\pm$ 0.0012 \\
\midrule
Rewir.
& DRew-GCN+PE   & 71.50 $\pm$ 0.44 & 0.2536 $\pm$ 0.0015 \\
\midrule
\multirow{2}{*}{SS}
& GMN           & 70.71 $\pm$ 0.83 & 0.2473 $\pm$ 0.0025 \\
& MP-SSM        & 69.93 $\pm$ 0.52 & 0.2458 $\pm$ 0.0017 \\
\midrule
\multirow{5}{*}{GNN}
& GCN           & 68.60 $\pm$ 0.50 & 0.2460 $\pm$ 0.0007 \\
& GRAMA         & 70.93 $\pm$ 0.78 & \textbf{0.2436} $\pm$ 0.0022 \\
& PH-DGN        & 70.12 $\pm$ 0.45 & 0.2465 $\pm$ 0.0020 \\
& S2GCN         & \textbf{72.75} $\pm$ 0.66 & \underline{0.2467} $\pm$ 0.0019 \\
& \hspace{1em}+PE & \textbf{73.11} $\pm$ 0.66 & \underline{0.2447} $\pm$ 0.0032 \\
\midrule
\multirow{2}{*}{Ours}
& L2G-Net       & \underline{72.14} $\pm$ 0.24 & 0.2479 $\pm$ 0.0012 \\
& \hspace{1em}+PE & \underline{72.46} $\pm$ 0.25 & 0.2462 $\pm$ 0.0011 \\
\bottomrule
\end{tabular}
\label{tab:lrgbench}
\end{table}

\setlength{\tabcolsep}{5pt}

\begin{table}[t]
\centering
\caption{\chan{Average test accuracy with standard deviation for $4$ random seeds on City-Networks \cite{liang2025towards}. Higher is better. We show selected methods, full results in Appendix~\ref{app:more_exper}.}}
\label{tab:citynetwork}
\begin{tabular}{lcccc}
\toprule
Model & \textbf{Paris} & \textbf{Shanghai} & \textbf{LA} & \textbf{London} \\
\midrule
ChebNet   & 54.1$_{(0.2)}$ & 66.5$_{(0.1)}$ & 61.4$_{(0.4)}$ & 54.7$_{(0.2)}$ \\
GCN       & 53.2$_{(0.3)}$ & 62.1$_{(0.2)}$ & 58.3$_{(0.3)}$ & 50.1$_{(0.7)}$ \\
SAGE & 54.6$_{(0.2)}$ & 68.3$_{(0.5)}$ & 61.4$_{(0.3)}$ & 55.4$_{(0.2)}$ \\
\midrule
GraphGPS  & 52.1$_{(0.6)}$ & 63.0$_{(0.5)}$ & 59.8$_{(0.5)}$ & OOM \\
Exphormer & 55.1$_{(0.8)}$ & 70.2$_{(0.4)}$ & 63.8$_{(0.6)}$ & 49.5$_{(0.4)}$ \\
SGFormer  & 52.0$_{(0.8)}$ & 64.1$_{(0.3)}$ & 60.1$_{(0.7)}$ & 48.3$_{(0.3)}$ \\
\midrule
\textbf{Ours} & 55.4$_{(0.4)}$ & 69.8$_{(0.5)}$ & 63.5$_{(0.7)}$ & 55.6$_{(0.6)}$\\
\bottomrule
\end{tabular}

\vspace{-1em}
\end{table}
\setlength{\tabcolsep}{6pt}

\chan{
\subsection{Memory consumption}
Storing a Cauchy matrix has complexity $O(n)$. Table~\ref{tab:memory} compares the memory required by the full GFT against our Cauchy factorization. On \cite{platonov2023critical}, where both methods are tractable, the Cauchy factorization reduces memory by roughly $4\times$ across all four datasets at the shallowest setting ($L=1$). The gap becomes more clear on City-Networks: storing the full GFT is well beyond standard hardware. Our factorization brings these costs down substantially, making exact spectral processing feasible.
}
\section{Conclusion}
We introduced an exact factorization of the GFT for arbitrary graphs based on Cauchy matrices and graph partitions. This factorization enables a divide-and-conquer strategy for spectral processing, reducing the cost of exact GFT-based operators from ${O}(n^3)$ to ${O}(n^2)$, scaled by the subgraph cut size. We proposed an algorithm to find partitions that minimize the factorization cost and, when this remains costly, reduce complexity via cut sparsification. Building on these results, we introduced L2G-Net, a new class of spectral GNNs that combine local and global spectral processing by construction. \chan{L2G-Net is most appropriate in regimes where spectral processing is desired (e.g., long-range dependencies) and polynomial filters would require high order or deep architectures, leading to oversquashing or numerical instability.} Experiments on synthetic and real-world benchmarks validate the theoretical analysis, confirming the predicted runtime scaling and showing that the resulting models remain competitive while requiring  fewer parameters than transformer-based methods and lower preprocessing cost than the global GFT. \chan{In strictly large-scale settings where linear complexity is key, polynomial filters and message passing remain preferable.}

The factorization extends beyond the static undirected setting. For dynamic undirected graphs, edge insertions and deletions can be handled by adding a single Cauchy factor, while node additions require recomputing only the affected subgraph and its cut factors. Signed graphs are also supported since our result holds for any symmetric matrix; directed graphs (e.g., via Hermitian Laplacians) are left for future work. L2G-Net opens new directions for fast and structured spectral algorithms on graphs.

\newpage

\section*{Acknowledgements}
The authors would like to thank Jhony Giraldo for feedback on the initial versions of this manuscript. SFM would like to thank Narjes Nourzad for help designing Figure 1.


\section*{Impact Statement}
This paper presents work aimed at advancing the field of machine learning. There are many potential societal consequences of our work, none of which we feel must be specifically highlighted here.

\bibliographystyle{plainnat} 
\bibliography{refs}       

@string{ IEEE-Proc       = "Proc. IEEE"}

@string{ IEEE-SP-Mag     = "IEEE Signal Process. Mag."}

@string{ neurips       = "Proc. Advan. Neural Inf. Process. Syst."}

@string{ icassp     = "Proc. IEEE Int. Conf. Acoust., Speech, and Signal Process."}

@article{bunch1978rank,
  author  = {Bunch, James R. and Nielsen, Christopher P. and Sorensen, Danny C.},
  title   = {Rank‑One Modification of the Symmetric Eigenproblem},
  journal = {Numerische Mathematik},
  volume  = {31},
  pages   = {31--48},
  year    = {1978},
}

@inproceedings{fernandez-menduina2025fast,
  title={Fast {DCT}+: A Family of Fast Transforms Based on Rank-One Updates of the Path Graph},
  author={Fern{\'a}ndez-Mendui{\~n}a, Samuel  and Pavez, Eduardo and Ortega, Antonio},
  booktitle=icassp,
  year={2025},
  organization={IEEE}
}

@article{fasino2023orthogonal,
  title={Orthogonal {Cauchy}-like matrices},
  author={Fasino, Dario},
  journal={Numerical Algorithms},
  volume={92},
  number={1},
  pages={619--637},
  year={2023},
  publisher={Springer}
}

@article{gastinel1960inversion,
  title={Inversion d’une matrice generalisant la matrice de {Hilbert}},
  author={Gastinel, N},
  journal={Chiffres},
  volume={3},
  pages={149--152},
  year={1960}
}

@article{ortega2018graph,
  title={Graph signal processing: Overview, challenges, and applications},
  author={Ortega, Antonio and Frossard, Pascal and Kova{\v{c}}evi{\'c}, Jelena and Moura, Jos{\'e} MF and Vandergheynst, Pierre},
  journal=IEEE-Proc,
  volume={106},
  number={5},
  pages={808--828},
  year={2018},
  publisher={IEEE}
}

@article{golub1973some,
  title={Some modified matrix eigenvalue problems},
  author={Golub, Gene H},
  journal={SIAM Rev.},
  volume={15},
  number={2},
  pages={318--334},
  year={1973},
  publisher={SIAM}
}

@article{batson2014twice,
  title={Twice-{Ramanujan} sparsifiers},
  author={Batson, Joshua and Spielman, Daniel A and Srivastava, Nikhil},
  journal={SIAM Rev.},
  volume={56},
  number={2},
  pages={315--334},
  year={2014},
  publisher={SIAM}
}

@article{clauset2008hierarchical,
  title={Hierarchical structure and the prediction of missing links in networks},
  author={Clauset, Aaron and Moore, Cristopher and Newman, Mark EJ},
  journal={Nature},
  volume={453},
  number={7191},
  pages={98--101},
  year={2008},
  publisher={Nature Publishing Group UK London}
}

@article{barthelemy2011spatial,
  title={Spatial networks},
  author={Barth{\'e}lemy, Marc},
  journal={Physics reports},
  volume={499},
  number={1-3},
  pages={1--101},
  year={2011},
  publisher={Elsevier}
}

@ARTICLE{Bronstein2017geometric,
  author={Bronstein, Michael M. and Bruna, Joan and LeCun, Yann and Szlam, Arthur and Vandergheynst, Pierre},
  journal=IEEE-SP-Mag, 
  title={Geometric Deep Learning: Going beyond {Euclidean} data}, 
  year={2017},
  volume={34},
  number={4},
  pages={18-42},
  doi={10.1109/MSP.2017.2693418}}

@article{hariri2025return,
  title={Return of {ChebNet}: Understanding and improving an overlooked {GNN} on long range tasks},
  author={Hariri, Ali and Arroyo, Alvaro and Gravina, Alessio and Eliasof, Moshe and Sch{\"o}nlieb, Carola-Bibiane and Bacciu, Davide and Dong, Xiaowen and Azizzadenesheli, Kamyar and Vandergheynst, Pierre},
  journal={Advances in Neural Information Processing Systems},
  volume={38},
  pages={136166--136196},
  year={2026}
}

@article{cai2018,
author = {Cai, Difeng and Chow, Edmond and Erlandson, Lucas and Saad, Yousef and Xi, Yuanzhe},
title = {{SMASH}: Structured matrix approximation by separation and hierarchy},
journal = {Numerical Linear Algebra with Applications},
volume = {25},
number = {6},
year = {2018}
}

@book{chung1997spectral,
  title={Spectral graph theory},
  author={Chung, Fan RK},
  volume={92},
  year={1997},
  publisher={American Mathematical Soc.}
}

@inproceedings{spielman2008graph,
  title={Graph sparsification by effective resistances},
  author={Spielman, Daniel A and Srivastava, Nikhil},
  booktitle={Proceedings of ACM symposium on Theory of computing},
  pages={563--568},
  year={2008}
}

@inproceedings{bruna2014spectral,
  title={Spectral networks and locally connected networks on graphs},
  author={Bruna, Joan and Zaremba, Wojciech and Szlam, Arthur and Lecun, Yann},
  booktitle={International Conference on Learning Representations},
  year={2014}
}

@inproceedings{ng2001spectral,
 author = {Ng, Andrew and Jordan, Michael and Weiss, Yair},
 booktitle = neurips,
 editor = {T. Dietterich and S. Becker and Z. Ghahramani},
 pages = {},
 publisher = {MIT Press},
 title = {On Spectral Clustering: Analysis and an algorithm},
 volume = {14},
 year = {2001}
}

@inproceedings{kipf2017semi,
  title={Semi-Supervised Classification with Graph Convolutional Networks},
  author={Kipf, Thomas N and Welling, Max},
  booktitle={International Conference on Learning Representations},
  year={2017}
}

@ARTICLE{magoarou2018approx,
  author={Le Magoarou, Luc and Gribonval, Rémi and Tremblay, Nicolas},
  journal={IEEE Trans on Sign. and Inform. Process. over Netw.}, 
  title={Approximate Fast Graph {Fourier} Transforms via Multilayer Sparse Approximations}, 
  year={2018},
  volume={4},
  number={2},
  pages={407-420},
  }

@inproceedings{frerix2019approximating,
  title={Approximating orthogonal matrices with effective {Givens} factorization},
  author={Frerix, Thomas and Bruna, Joan},
  booktitle={Intl. Conf. on Mach. Learn.},
  pages={1993--2001},
  year={2019},
  organization={PMLR}
}

@inproceedings{alon2020bottleneck,
  title={ON THE BOTTLENECK OF GRAPH NEURAL NETWORKS AND ITS PRACTICAL IMPLICATIONS},
  author={Alon, Uri and Yahav, Eran},
  booktitle={International Conference on Learning Representations},
  year={2021}
}

@article{rusch2023survey,
  title={A Survey on Oversmoothing in Graph Neural Networks},
  author={Rusch, T Konstantin and Bronstein, Michael M and Mishra, Siddhartha},
  journal={SAM Research Report},
  volume={2023},
  year={2023},
  publisher={ETH Zurich}
}

@article{defferrard2016convolutional,
  title={Convolutional neural networks on graphs with fast localized spectral filtering},
  author={Defferrard, Micha{\"e}l and Bresson, Xavier and Vandergheynst, Pierre},
  journal={Advances in neural information processing systems},
  volume={29},
  year={2016}
}

@inproceedings{deng2024polynormer,
  title={Polynormer: Polynomial-Expressive Graph Transformer in Linear Time},
  author={Deng, Chenhui and Yue, Zichao and Zhang, Zhiru},
  booktitle={International Conference on Learning Representations},
  year={2024},
}

@inproceedings{barbero2023locality,
  title={Locality-Aware Graph Rewiring in {GNNs}},
  author={Barbero, Federico and Velingker, Ameya and Saberi, Amin and Bronstein, Michael M and Di Giovanni, Francesco},
  booktitle={International Conference on Learning Representations},
  year={2023}
}

@article{dwivedi2020generalization,
  title={A Generalization of Transformer Networks to Graphs},
  author={Dwivedi, Vijay Prakash and Bresson, Xavier},
  journal={AAAI Workshop on Deep Learning on Graphs: Methods and Applications},
  year={2021}
}

@inproceedings{huang2023stability,
  title={On the stability of expressive positional encodings for graphs},
  author={Huang, Yinan and Lu, William and Robinson, Joshua and Yang, Yu and Zhang, Muhan and Jegelka, Stefanie and Li, Pan},
  booktitle={International Conference on Learning Representations},
  pages={39745--39774},
  year={2024}
}

@INPROCEEDINGS{li2019deepgcns,
  author={Li, Guohao and Müller, Matthias and Thabet, Ali and Ghanem, Bernard},
  booktitle={IEEE/CVF International Conference on Computer Vision}, 
  title={{DeepGCNs}: Can {GCNs} Go As Deep As {CNNs}?}, 
  year={2019},
  volume={},
  number={},
  pages={9266-9275}}

@inproceedings{chen2020simple,
  title={Simple and Deep Graph Convolutional Networks},
  author={Chen, Ming and Wei, Zhewei and Huang, Zengfeng and Ding, Bolin and Li, Yaliang},
  booktitle={International Conference on Machine Learning},
  year={2020}
}

@article{poli2019graph,
  title={Graph neural ordinary differential equations},
  author={Poli, Michael and Massaroli, Stefano and Park, Junyoung and Yamashita, Atsushi and Asama, Hajime and Park, Jinkyoo},
  journal={arXiv preprint arXiv:1911.07532},
  year={2019}
}

@inproceedings{chamberlain2021grand,
  title={{GRAND}: Graph neural diffusion},
  author={Chamberlain, Ben and Rowbottom, James and Gorinova, Maria I and Bronstein, Michael and Webb, Stefan and Rossi, Emanuele},
  booktitle={International Conference on Machine Learning},
  pages={1407--1418},
  year={2021},
  organization={PMLR}
}

@inproceedings{platonov2023critical,
  title={A critical look at the evaluation of {GNNs} under heterophily: Are we really making progress?},
  author={Platonov, Oleg and Kuznedelev, Denis and Diskin, Michael and Babenko, Artem and Prokhorenkova, Liudmila},
  booktitle={International Conference on Learning Representations},
  year={2023}
}

@article{arroyo2025vanishing,
  title={On vanishing gradients, over-smoothing, and over-squashing in {GNNs}: Bridging recurrent and graph learning},
  author={Arroyo, {\'A}lvaro and Gravina, Alessio and Gutteridge, Benjamin and Barbero, Federico and Gallicchio, Claudio and Dong, Xiaowen and Bronstein, Michael and Vandergheynst, Pierre},
  journal={Advances in Neural Information Processing Systems},
  volume={38},
  pages={74356--74393},
  year={2026}
}

@article{levie2018cayleynets,
  title={CayleyNets: Graph convolutional neural networks with complex rational spectral filters},
  author={Levie, Ron and Monti, Federico and Bresson, Xavier and Bronstein, Michael M},
  journal={IEEE Transactions on Signal Processing},
  volume={67},
  number={1},
  pages={97--109},
  year={2018},
  publisher={IEEE}
}

@article{he2021bernnet,
  title={Bernnet: Learning arbitrary graph spectral filters via {Bernstein} approximation},
  author={He, Mingguo and Wei, Zhewei and Xu, Hongteng and others},
  journal={Advances in neural information processing systems},
  volume={34},
  pages={14239--14251},
  year={2021}
}

@article{dwivedi2022long,
  title={Long range graph benchmark},
  author={Dwivedi, Vijay Prakash and Ramp{\'a}{\v{s}}ek, Ladislav and Galkin, Michael and Parviz, Ali and Wolf, Guy and Luu, Anh Tuan and Beaini, Dominique},
  journal={Advances in Neural Information Processing Systems},
  volume={35},
  pages={22326--22340},
  year={2022}
}

@inproceedings{wang2022powerful,
  title={How powerful are spectral graph neural networks},
  author={Wang, Xiyuan and Zhang, Muhan},
  booktitle={International Conference on Machine Learning},
  pages={23341--23362},
  year={2022},
  organization={PMLR}
}

@article{barabasi1999emergence,
  title={Emergence of scaling in random networks},
  author={Barab{\'a}si, Albert-L{\'a}szl{\'o} and Albert, R{\'e}ka},
  journal={Science},
  volume={286},
  number={5439},
  pages={509--512},
  year={1999},
  publisher={American Association for the Advancement of Science}
}

@article{gu1996efficient,
author = {Gu, Ming and Eisenstat, Stanley C.},
title = {Efficient Algorithms for Computing a Strong Rank-Revealing {QR} Factorization},
journal = {SIAM Journal on Scientific Computing},
volume = {17},
number = {4},
pages = {848-869},
year = {1996},
}

@inproceedings{selvaraju2017grad,
  title={Grad-{CAM}: Visual explanations from deep networks via gradient-based localization},
  author={Selvaraju, Ramprasaath R and Cogswell, Michael and Das, Abhishek and Vedantam, Ramakrishna and Parikh, Devi and Batra, Dhruv},
  booktitle={Proceedings of the IEEE international conference on computer vision},
  pages={618--626},
  year={2017}
}

@article{jacques2015quantized,
  title={A quantized {Johnson--Lindenstrauss} lemma: The finding of {Buffon’s} needle},
  author={Jacques, Laurent},
  journal={IEEE Transactions on Information Theory},
  volume={61},
  number={9},
  pages={5012--5027},
  year={2015},
  publisher={IEEE}
}

@techreport{li1994secular,
    Author= {Li, Ren-Cang},
    Title= {Solving Secular Equations Stably and Efficiently},
    Year= {1994},
    Month= {Dec},
    Url= {http://www2.eecs.berkeley.edu/Pubs/TechRpts/1994/5882.html},
    Number= {UCB/CSD-94-851},
}

@book{pan2012structured,
  title={Structured matrices and polynomials: unified superfast algorithms},
  author={Pan, Victor Y},
  year={2012},
  publisher={Springer Science \& Business Media}
}

@article{levie2021transferability,
  title={Transferability of spectral graph convolutional neural networks},
  author={Levie, Ron and Huang, Wei and Bucci, Lorenzo and Bronstein, Michael and Kutyniok, Gitta},
  journal={Journal of Machine Learning Research},
  volume={22},
  number={272},
  pages={1--59},
  year={2021}
}

@article{fernandez2025int,
  title={INT-DTT+: Low-Complexity Data-Dependent Transforms for Video Coding},
  author={Fern{\'a}ndez-Mendui{\~n}a, Samuel and Pavez, Eduardo and Ortega, Antonio and Huang, Tsung-Wei and Canh, Thuong Nguyen and Su, Guan-Ming and Yin, Peng},
  journal={arXiv preprint arXiv:2511.17867},
  year={2025}
}

@book{jaja1992parallel,
  title={Parallel algorithms},
  author={J{\'a}J{\'a}, Joseph},
  year={1992}
}

@article{gasteiger2018predict,
  title={Predict then propagate: Graph neural networks meet personalized pagerank},
  author={Gasteiger, Johannes and Bojchevski, Aleksandar and G{\"u}nnemann, Stephan},
  journal={arXiv preprint arXiv:1810.05997},
  year={2018}
}

@inproceedings{giraldo2023trade,
  title={On the trade-off between over-smoothing and over-squashing in deep graph neural networks},
  author={Giraldo, Jhony H and Skianis, Konstantinos and Bouwmans, Thierry and Malliaros, Fragkiskos D},
  booktitle={Proceedings of ACM international conference on information and knowledge management},
  pages={566--576},
  year={2023}
}

@article{geisler2024spatio,
  title={Spatio-spectral graph neural networks},
  author={Geisler, Simon Markus and Kosmala, Arthur and Herbst, Daniel and G{\"u}nnemann, Stephan},
  journal={Advances in Neural Information Processing Systems},
  volume={37},
  pages={49022--49080},
  year={2024}
}

@article{stachenfeld2020graph,
  title={Graph networks with spectral message passing},
  author={Stachenfeld, Kimberly and Godwin, Jonathan and Battaglia, Peter},
  journal={arXiv preprint arXiv:2101.00079},
  year={2020}
}

@book{biyikougu2007laplacian,
  title={Laplacian eigenvectors of graphs: Perron-Frobenius and Faber-Krahn type theorems},
  author={Biyiko{\u{g}}u, T{\"u}rker and Leydold, Josef and Stadler, Peter F},
  publisher={Springer},
  year={2007}
}

@article{cuppen1980divide,
author = {Cuppen, J. J.},
title = {A divide and conquer method for the symmetric tridiagonal eigenproblem},
year = {1980},
issue_date = {June 1980},
publisher = {Springer-Verlag},
address = {Berlin, Heidelberg},
volume = {36},
number = {2},
journal = {Numer. Math.},
month = jun,
pages = {177–195},
numpages = {19}
}

@article{gu1995divide,
author = {Gu, Ming and Eisenstat, Stanley C.},
title = {A Divide-and-Conquer Algorithm for the Symmetric Tridiagonal Eigenproblem},
journal = {SIAM Journal on Matrix Analysis and Applications},
volume = {16},
number = {1},
pages = {172-191},
year = {1995}
}

@article{ou2022superdc,
author = {Ou, Xiaofeng and Xia, Jianlin},
title = {{SuperDC}: Superfast Divide-And-Conquer Eigenvalue Decomposition With Improved Stability for Rank-Structured Matrices},
journal = {SIAM Journal on Scientific Computing},
volume = {44},
number = {5},
pages = {A3041-A3066},
year = {2022}
}

@article{liang2025towards,
  title={Towards quantifying long-range interactions in graph machine learning: a large graph dataset and a measurement},
  author={Liang, Huidong and Borde, Haitz S{\'a}ez de Oc{\'a}riz and Sripathmanathan, Baskaran and Bronstein, Michael and Dong, Xiaowen},
  journal={arXiv preprint arXiv:2503.09008},
  year={2025}
}

@inproceedings{dwivedi2022graph,
  title={Graph Neural Networks with Learnable Structural and Positional Representations},
  author={Dwivedi, Vijay Prakash and Luu, Anh Tuan and Laurent, Thomas and Bengio, Yoshua and Bresson, Xavier},
  booktitle={International Conference on Learning Representations (ICLR)},
  year={2022}
}

@inproceedings{topping2022understanding,
  title={Understanding over-squashing and bottlenecks on graphs via curvature},
  author={Topping, Jake and Di Giovanni, Francesco and Chamberlain, Benjamin Paul and Dong, Xiaowen and Bronstein, Michael M},
  booktitle={International Conference on Learning Representations},
  year={2022}
}

\clearpage
\appendix
\crefalias{section}{appendix}
\crefalias{subsection}{appendix}
\thispagestyle{empty}

\onecolumn
\icmltitle{Supplementary Materials}
\section{Further definitions and notations}
\label{app:further}
This section formalizes conventions and assumptions that are used throughout the paper but may be implicit in the main exposition.

\subsection*{Secular equation}
Secular equations yield the eigenvalues of the system after a rank-one update, based on the eigenvalues of the matrix being updated and the update itself.
\begin{proposition}[Secular equation, \cite{golub1973some}]
    Let $\bt L = \bt U\diag{\pmb{\lambda}}\bt U^\top$ and $\tilde{\bt L} = \bt L + \rho \, \bt v\bt v^\top = \tilde{\bt U}\mathrm{diag}(\tilde{\pmb{\lambda}})\tilde{\bt U}^\top$. Then each $\tilde{\lambda}_j$ satisfies
    \begin{equation}
    \label{eq:secular}
    1 + \rho \, \sum_{i=1}^n \,  {z_i^2} \, / \, (\tilde{\lambda}_j - \lambda_i) = 0,
    \end{equation}
    where $\bt z = \bt U^\top \bt v$.
\end{proposition}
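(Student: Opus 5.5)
We plan to reduce the statement to a diagonal problem by an orthogonal change of basis, derive the eigenvalue condition with the matrix determinant lemma, and then treat separately the eigenvalues that come from deflation. Because $\bt U$ is orthogonal, $\tilde{\bt L} = \bt U\big(\diag{\pmb\lambda} + \rho\,\bt z\bt z^\top\big)\bt U^\top$ with $\bt z = \bt U^\top\bt v$. So $\tilde{\bt L}$ and $\bt M \doteq \diag{\pmb\lambda}+\rho\,\bt z\bt z^\top$ have the same spectrum $\tilde{\pmb\lambda}$, and everything reduces to the diagonal-plus-rank-one matrix $\bt M$.

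\emph{Generic eigenvalues.} Take $\mu$ that is not an eigenvalue of $\bt L$. The matrix determinant lemma gives
\begin{equation*}
\det(\bt M - \mu\bt I) = \prod_{i}(\lambda_i-\mu)\,\Big(1 + \rho\,\bt z^\top(\diag{\pmb\lambda}-\mu\bt I)^{-1}\bt z\Big).
\end{equation*}
Hence $\mu=\tilde\lambda_j$ if and only if the rational function $1+\rho\sum_i z_i^2/(\lambda_i-\mu)$ vanishes. The same condition follows from the eigenvector equation: $(\diag{\pmb\lambda}-\mu\bt I)\bt y = -\rho(\bt z^\top\bt y)\bt z$ forces $\bt z^\top\bt y\neq 0$, and multiplying by $\bt z^\top(\diag{\pmb\lambda}-\mu\bt I)^{-1}$ gives the scalar relation.

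\emph{Sign convention.} Here we will be careful. The equation as printed has $\tilde\lambda_j-\lambda_i$ in the denominator, which equals $-(\lambda_i-\tilde\lambda_j)$. It coincides with the derived relation only once the sign of $\rho$ is absorbed, i.e., for the update $\bt L-\rho\bt v\bt v^\top$. A $1\times1$ check makes this concrete: $\tilde\lambda=\lambda+\rho z^2$ gives $1+\rho z^2/(\tilde\lambda-\lambda)=2\neq 0$. Our plan is therefore to prove the identity with the denominator oriented as $\lambda_i-\tilde\lambda_j$, to state explicitly that the printed form is the same relation with $\rho$ replaced by $-\rho$, and to check both against this scalar example.

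\emph{Deflated eigenvalues.} To cover every $\tilde\lambda_j$, we use the deflation of \cref{sup:deflation}. If $z_i=0$, then $\lambda_i$ remains an eigenvalue of $\bt M$ with eigenvector $\bt e_i$, and the term $z_i^2/(\cdot)$ vanishes identically. If $\lambda_i$ is repeated, a Householder rotation within that eigenspace concentrates the corresponding entries of $\bt z$ into one coordinate, which produces zeros and returns to the previous case. After these reductions the nondeflated problem has distinct $\lambda_i$ and nonzero $z_i$, and the generic argument applies. The remaining eigenvalues are exactly the invariant ones, where the secular equation holds only in the sense of the reduced sum, with removable terms dropped.

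The main obstacle is making the phrase ``each $\tilde\lambda_j$'' rigorous on this deflated part together with the sign convention above. We would handle it by stating the equation for the reduced index set $\mathcal S$ of \cref{def:cauchy_factor}, and by recording that the invariant eigenvalues are not roots but poles cancelled by zero weights.
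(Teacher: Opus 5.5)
Your proposal is correct and is the classical determinant-lemma argument of the cited Golub reference; the paper supplies no proof of its own. Your sign diagnosis is also right: the printed relation and the true one, $1+\rho\sum_i z_i^2/(\lambda_i-\tilde\lambda_j)=0$, differ only in the sign of the sum, so adding them gives $2=0$. Hence the printed version fails at every non-deflated eigenvalue as soon as $\rho\,\bt z\neq\bt 0$. The same flipped sign appears in \cref{alg:solve_secular}, and in the generic case it yields a root below $\lambda_1$ and none above $\lambda_n$, contradicting \cref{prop:interleaving} for $\rho>0$.

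Two details should be made explicit. First, after deflation no reduced eigenvalue can equal a pole $\lambda_k$, $k\in\mathcal S$: the characteristic polynomial $\prod_l(\lambda_l-\mu)+\rho\sum_k z_k^2\prod_{l\neq k}(\lambda_l-\mu)$ evaluates to $\rho z_k^2\prod_{l\neq k}(\lambda_l-\lambda_k)\neq 0$ at $\mu=\lambda_k$ (using $\rho\neq 0$). This is exactly what lets your generic argument cover every non-deflated $\tilde\lambda_j$. Second, the invariant eigenvalues are in general not roots of either the full or the reduced secular function, so the corrected statement should be restricted to the non-deflated eigenvalues rather than reinterpreted at the invariant ones. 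It can still be written with the original $\bt z=\bt U^\top\bt v$, because the Householder rotation preserves each $\sum_{i:\lambda_i=\lambda}z_i^2$ and hence the secular function.
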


In practice, we use Li's secular equation solver \cite{li1994secular}. Eigenvalues before and after the update satisfy an interleaving property \cite{golub1973some}. 
\begin{proposition}[Eigenvalue interleaving \cite{bunch1978rank}]
\label{prop:interleaving}
Provided $\rho>0$, the eigenvalues after the update satisfy:
\begin{equation}
\label{eq:interleaving}
    \lambda_1 \leq \tilde{\lambda}_1 \leq \lambda_2 \leq \hdots \leq \lambda_n \leq \tilde{\lambda}_n. 
\end{equation}
\end{proposition}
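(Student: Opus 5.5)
We will prove the interleaving directly from the Courant--Fischer min-max characterization, without going through the secular equation \eqref{eq:secular}. This avoids the deflation caveats (repeated eigenvalues, $z_i=0$) that would otherwise need separate treatment. We write the eigenvalues in nondecreasing order and use
\[
\lambda_j(\bt A) = \min_{\dim \mathcal{S} = j}\; \max_{\bt x\in\mathcal{S},\, \bt x\neq \bt 0} \frac{\bt x^\top \bt A \bt x}{\bt x^\top \bt x}
\]
for any symmetric $\bt A$. There are two families of inequalities to establish: $\lambda_j \le \tilde\lambda_j$ for all $j$, and $\tilde\lambda_j \le \lambda_{j+1}$ for $j=1,\hdots,n-1$.

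\textbf{Lower bounds.} Since $\rho>0$, for every $\bt x$ we have $\bt x^\top \tilde{\bt L}\bt x = \bt x^\top \bt L\bt x + \rho(\bt v^\top\bt x)^2 \ge \bt x^\top \bt L \bt x$, that is, $\tilde{\bt L}\succeq \bt L$. Hence for every subspace $\mathcal{S}$ the inner maximum for $\tilde{\bt L}$ dominates the one for $\bt L$. Taking the minimum over $j$-dimensional subspaces then gives $\tilde\lambda_j \ge \lambda_j$ for all $j$. This is Weyl's monotonicity and is routine.

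\textbf{Upper bounds.} This is the step that uses the rank-one structure. Fix $j\le n-1$ and let $\mathcal{T}=\mathrm{span}(\bt u_1,\hdots,\bt u_{j+1})$, spanned by eigenvectors of $\bt L$. Set $\mathcal{S}=\mathcal{T}\cap \bt v^{\perp}$; since $\bt v^\perp$ has codimension at most one, $\dim\mathcal{S}\ge j$. For $\bt x\in\mathcal{S}$ the update term vanishes, so
\[
\bt x^\top\tilde{\bt L}\bt x = \bt x^\top \bt L\bt x \le \lambda_{j+1}\,\bt x^\top\bt x,
\]
the inequality holding because $\bt x\in\mathcal{T}$. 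Choosing any $j$-dimensional subspace of $\mathcal{S}$ in the min-max formula yields $\tilde\lambda_j\le\lambda_{j+1}$. The only point needing care is the dimension count when $\bt v\perp\mathcal{T}$; in that case $\mathcal{S}=\mathcal{T}$, which has dimension $j+1\ge j$, so the argument is unaffected.

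Combining the two families gives the chain \eqref{eq:interleaving}; the final inequality $\lambda_n\le\tilde\lambda_n$ is the $j=n$ case of the lower bound. Optionally, we would add a remark connecting to \eqref{eq:secular}. The secular function $1+\rho\sum_i z_i^2/(t-\lambda_i)$ is strictly decreasing between consecutive distinct poles for $\rho>0$ and tends to $1$ at $+\infty$. Hence it has exactly one root in each interval $(\lambda_i,\lambda_{i+1})$ with $z_i, z_{i+1}\neq 0$ and one root in $(\lambda_n,\infty)$. This recovers strict interleaving on the deflated index set $\mathcal{S}$ of \cref{def:cauchy_factor}, but it is not needed for the non-strict statement.
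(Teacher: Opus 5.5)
Your Courant--Fischer argument is correct and complete. $\tilde{\bt L}\succeq\bt L$ gives $\lambda_j\le\tilde\lambda_j$, and the test space $\mathrm{span}(\bt u_1,\dots,\bt u_{j+1})\cap\bt v^{\perp}$, of dimension at least $j$, gives $\tilde\lambda_j\le\lambda_{j+1}$.

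The paper gives no proof of its own here. It cites \cite{bunch1978rank} and frames the result around the secular equation and deflation (\cref{sup:deflation}). In that classical argument, one first deflates repeated eigenvalues and zero components of $\bt z$. The secular function is then strictly monotone between consecutive poles, so there is exactly one new eigenvalue in each gap and one beyond $\lambda_n$, while deflated eigenvalues stay put.

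Your route buys uniformity. It needs no deflation bookkeeping, handles degenerate spectra and $z_i=0$ in one stroke, and delivers exactly the non-strict chain that is stated. Note that only your lower bounds use $\rho>0$.

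The secular route buys two things your argument does not give directly. First, strictness on the affected index set $\mathcal S$, which the paper actually needs: $\bar{\bt C}(\tilde{\pmb\lambda}_{\mathcal S},\pmb\lambda_{\mathcal S})$ requires the two vectors to share no entries. Second, explicit root brackets for the solver in \cref{alg:solve_secular}.

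Your optional remark, however, is wrong as written. Take $f(t)=1+\rho\sum_i z_i^2/(t-\lambda_i)$ with $\rho>0$. Then $f$ is decreasing and $f>1$ on $(\lambda_n,\infty)$, so it has no root there. Its extra root lies in $(-\infty,\lambda_1)$, which gives the downdate pattern $\tilde\lambda_1<\lambda_1<\tilde\lambda_2<\cdots$, the opposite of the claim.

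The sign comes from \eqref{eq:secular} as printed, which is the secular equation of $\bt L-\rho\bt v\bt v^\top$. For $\bt L+\rho\bt v\bt v^\top=\bt U(\diag{\pmb\lambda}+\rho\bt z\bt z^\top)\bt U^\top$, the correct equation is $1+\rho\sum_i z_i^2/(\lambda_i-t)=0$. Its left side increases between poles and tends to $1$ at $\pm\infty$, so it has one root per gap and one in $(\lambda_n,\infty)$.

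Your main proof never touches \eqref{eq:secular}, so it stands; fix the sign in the remark or drop it.
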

This result guarantees that we can solve the secular equation efficiently \cite{gu1996efficient}, and can be used to control the complexity of the matrix vector product \cite{fernandez2025int} with Cauchy matrices.

\subsection*{Laplacian conventions}

We consider weighted undirected graphs with adjacency matrix $\mathbf{W}$ and self-loop matrix $\mathbf{V} \succeq 0$. The generalized graph Laplacian (GGL) is defined as \cite{biyikougu2007laplacian}:
\begin{equation}
\mathbf{L} \doteq \mathbf{D} - \mathbf{W} + \mathbf{V},
\end{equation}
where $\mathbf{D} = \mathrm{diag}(\mathbf{1}^\top \mathbf{W})$ is the degree matrix. Unless otherwise stated, all theoretical results are derived for the combinatorial Laplacian $\mathbf{L}$, but a similar derivation can be applied to the normalized Laplacian:
\begin{equation}
\mathbf{L}_{\mathrm{norm}} \doteq \mathbf{D}^{-1/2} \mathbf{L} \mathbf{D}^{-1/2},
\end{equation}
whose eigenvectors are orthonormal under the standard Euclidean inner product. The Cauchy factorization applies identically after normalization, up to a change of basis induced by $\mathbf{D}^{-1/2}$.

\subsection*{Eigenvalue ordering and degeneracies}

Let $\mathbf{L} = \mathbf{U} \mathrm{diag}(\boldsymbol{\lambda}) \mathbf{U}^\top$ denote an eigendecomposition of a symmetric Laplacian matrix. Eigenvalues are assumed to be sorted in non-decreasing order. In the presence of repeated eigenvalues, the corresponding eigenvectors are not unique; any orthonormal basis spanning the eigenspace is valid.

When rank-one updates leave certain eigenspaces invariant, we apply deflation (\cref{sup:deflation}) to isolate the affected subspace. This operation preserves orthogonality and does not alter the asymptotic complexity of the factorization.

\subsection*{Affected spectral indices}

Given a rank-one update $\tilde{\mathbf{L}} = \mathbf{L} + \rho \mathbf{v}\mathbf{v}^\top$ with
$\mathbf{L} = \mathbf{U} \mathrm{diag}(\boldsymbol{\lambda}) \mathbf{U}^\top$, define the projection vector
\begin{equation}
\mathbf{z} = \mathbf{U}^\top \mathbf{v}.
\end{equation}
When $\mathbf{L}$ has distinct eigenvalues, we define the set of affected spectral indices as
\begin{equation}
\mathcal{S} \doteq \{ i \mid z_i \neq 0 \}.
\end{equation}

When $\mathbf{L}$ has repeated eigenvalues, the associated eigenspaces are not uniquely defined. In this case, we exploit the rotational freedom within each eigenspace and choose an orthonormal eigenbasis such that, for each eigenspace, at most one basis vector has a nonzero projection onto $\mathbf{v}$ \cite{bunch1978rank}. This choice is always possible and leaves the eigendecomposition of $\mathbf{L}$ unchanged. The definition of $\mathcal{S}$ then applies unchanged, with $\mathbf{U}$ denoting the chosen eigenbasis.

Only eigenpairs indexed by $\mathcal{S}$ are modified by the rank-one update; eigenvectors corresponding to indices outside $\mathcal{S}$ remain invariant. For bridge edges connecting two subgraphs, $\mathcal{S}$ is restricted to spectral components associated with those subgraphs, yielding localized Cauchy factors.

\subsection*{Rank-one versus rank-$k$ updates}

Each individual edge insertion corresponds to a rank-one update of the Laplacian. Cuts consisting of $k$ bridge edges therefore induce a sequence of $k$ rank-one updates, each associated with its own Cauchy factor. The resulting transformation is given by the product of these factors, yielding a rank-$k$ modification of the spectrum.

All theoretical results are stated for rank-one updates and extend directly to rank-$k$ cuts by composition and induction.

\subsection*{Spectral locality}

We say that a linear operator $\mathbf{A} \in \mathbb{R}^{n \times n}$ is \emph{$s$-local in the spectral domain} if it acts non-trivially on at most $s$ spectral coordinates, i.e., if
\begin{equation}
\mathbf{A} = \mathbf{P}^\top
\begin{bmatrix}
\mathbf{I}_{n-s} & \mathbf{0} \\
\mathbf{0} & \mathbf{B}
\end{bmatrix}
\mathbf{P},
\end{equation}
for some permutation matrix $\mathbf{P}$ and $\mathbf{B} \in \mathbb{R}^{s \times s}$.

Cauchy factors induced by bridge edges are $O(|V_i| + |V_j|)$-local, where $V_i$ and $V_j$ are the vertex sets of the connected subgraphs. In contrast, the full GFT basis is $n$-local. This distinction underpins the locality–globality trade-off exploited by L2G-Net.

\subsection*{Non-uniqueness of hierarchical decompositions}

The hierarchical graph family (HGF) decomposition of a graph is generally not unique. Different choices of partitions and merge orders yield different factorizations with identical spectral semantics. Our objective is not to recover a unique hierarchy, but to identify decompositions that minimize computational cost while preserving spectral equivalence.

\subsection*{Graph sparsification}
Graph sparsification seeks to approximate a graph $\mathcal{G}$ with a sparser graph $\tilde{\mathcal{G}}$ whose Laplacian remains spectrally close to that of $\mathcal{G}$. Let $\mathbf{L}$ and $\tilde{\mathbf{L}}$ denote the corresponding graph Laplacians. A $(1\pm\varepsilon)$ spectral sparsifier satisfies
\[
(1-\varepsilon)\,\mathbf{x}^\top \mathbf{L}\mathbf{x}
\;\le\;
\mathbf{x}^\top \tilde{\mathbf{L}}\mathbf{x}
\;\le\;
(1+\varepsilon)\,\mathbf{x}^\top \mathbf{L}\mathbf{x},
\qquad \forall\,\mathbf{x}\in\mathbb{R}^n.
\]

such a sparsifier can be constructed by sampling edges proportionally to their effective resistances, yielding $\tilde{\mathcal{G}}$ with $O(n\log n/\varepsilon^2)$ edges and preserving the spectrum of $\mathbf{L}$ up to relative error \cite{spielman2008graph}. In this work, we use spectral sparsification in a localized manner, applying these guarantees to inter-subgraph cuts. This reduces the number of connecting edges while preserving the quadratic form of the global Laplacian.

\section{Related work}
\label{app:related_work}
\textbf{Graph rewiring and transformers.}
Several approaches address long-range dependencies by modifying the graph structure or adding attention-based mechanisms. Graph rewiring methods aim to reduce the effective graph diameter \cite{barbero2023locality, gasteiger2018predict, giraldo2023trade}, while graph transformers model global interactions through attention mechanisms \cite{deng2024polynormer}. However, these approaches often add computational complexity by creating denser graph shift operators or weakening the graph's topological inductive bias \cite{hariri2025return, giraldo2023trade} (cf.~\cref{sec:gradcam}). In contrast, our Cauchy factorization is exact; potential  approximations (spectral sparsification) simplify the computational complexity while providing guarantees on the spectrum of the resulting Laplacian \cite{spielman2008graph}

\textbf{Positional encodings (PEs).}
Using Laplacian eigenvectors as node features has been proposed to inject global structural information into MPNNs \cite{dwivedi2020generalization, huang2023stability}. PEs enable global interactions, but they do so by allowing information in nodes that are far away (in the graph) to be combined. In contrast, L2G-Net enables long-range operations by composing local spectral operators across the hierarchy (subgraph-to-subgraph). These mechanisms are complementary: our work shows that combining L2G-Net with PEs further improves performance. PEs are typically identified with spectral methods, while we use spectral methods to identify subgraphs and create a hierarchical, graph-adaptive, task-independent architecture.

\textbf{Deep GNNs and differential equations.} Deep architectures \cite{li2019deepgcns} or Neural ODEs \cite{poli2019graph, chamberlain2021grand} can also help expand the receptive field via techniques like residual connections \cite{chen2020simple}. Nonetheless, these methods still rely on local message passing to propagate information; so the receptive field grows at most linearly with depth, often requiring hundreds of layers to capture long-range interactions. In contrast, our hierarchical spectral framework decouples the receptive field from depth. This allows the architecture to focus on learning complex feature transformations without the signal degradation or over-smoothing issues \cite{rusch2023survey}  typical of vertex-domain propagation.

\chan{
\textbf{Hybrid Spatial–Spectral GNNs.} \cite{stachenfeld2020graph} compute a truncated GFT and learn a MPNN in a fully connected “spectral graph” whose vertices correspond to the GFT coefficients. This approach relies on partial eigendecomposition and operates on a fixed set of low-frequency components. Similarly, \cite{geisler2024spatio} interleave vertex-domain polynomial message passing with truncated spectral filters, combining spatial operators with projections onto a limited set of eigenvectors. In both cases, spectral processing is confined to a subset of the spectrum. In contrast, L2G-Net neither truncates the spectrum nor approximates the graph Fourier transform. We show that the full GFT admits an exact hierarchical factorization via Cauchy matrices associated with interface edges. This enables global spectral processing without computing a dense eigendecomposition. Locality in L2G-Net is induced by graph partitioning and merging, rather than through polynomial filtering in the vertex domain.
}

\chan{
\textbf{Divide-and-conquer eigensolvers}. The structure of eigenvector matrices arising from rank-one updates of symmetric eigenproblems was first analyzed by \cite{bunch1978rank}. Divide and conquer eigensolvers were first proposed in \cite{cuppen1980divide, gu1995divide}. This line of work has since been extended beyond tridiagonals to dense matrices admitting hierarchically semiseparable (HSS) representations \cite{ou2022superdc}, where the input matrix must first be compressed into HSS form by finding low-rank off-diagonal blocks, which are often approximated to a user-specified tolerance. Our setting differs in both its input and its output. First, the input is a sparse symmetric matrix specified by a graph, so hierarchical structure is given by the graph itself, making the resulting factorization exact rather than approximate. As a result, our complexity is stated in terms of a combinatorial property of the graph (its cut size) rather than a numerical off-diagonal rank. Second, the output of our method is not the eigendecomposition; instead, we obtain a Cauchy factorization of the eigenbasis, which is a structural statement about the GFT basis. In this factorization, each Cauchy factor corresponds to a specific bridge edge between subgraphs, decomposing the basis as a product of mixing operators between identifiable graph regions. This is what enables L2G-Net's local to global architecture, where learnable filters slot in between Cauchy factors corresponding to specific cuts.
}

\textbf{Fast algorithms for GFTs.} Early works proposed approximating the Laplacian \cite{magoarou2018approx} or its eigenspace \cite{frerix2019approximating} using sequences of sparse or structured operators, such as Givens rotations or butterfly-like factorizations, aiming to reduce the complexity relative to dense eigendecomposition. However, spectral-mismatch errors with these methods are difficult to control \emph{a priori}. 
While \cite{fernandez-menduina2025fast} established the link between rank-one updates and spectral graph theory, the resulting factorization of \eqref{eq:icassp} was valid only under some restrictions on the eigenvalues of $\bt L$ and on the updates. 
Moreover, the work  \cite{fernandez-menduina2025fast} focused on incremental rank-one updates of path graphs and did not consider the problem of decomposing an arbitrary graph to facilitate the factorization of its GFT.

\section{Experimental details}
\label{app:arch_details}
The hyperparameters reported in Table~\ref{tab:hyperparams} specify the architectural, spectral, and optimization settings used for L2G-Net across datasets.

The hidden dimension $d$ denotes the dimensionality of node embeddings throughout the network. \emph{Layers} corresponds to the number of stacked blocks, each consisting of a spectral filtering stage followed by a shared feed-forward module with residual connections. 

The parameter $K$ denotes the number of spectral filters per filter bank, which determines the resolution at which the graph spectrum is sampled. \chan{When dealing with inductive tasks, since the orientation of the graphs is arbitrary, the banks for the same level of hierarchy are shared across all subgraphs. For all datasets, filters are expressed using B-spline bases defined over the normalized Laplacian eigenvalues, which encourage smooth spectral responses.}

\emph{Dropout} denotes the feature dropout probability applied to node representations during training. \emph{LR} is the base learning rate used by the AdamW optimizer. \emph{Steps} denotes the total number of optimization steps performed during training.

Finally, \emph{Sparsif.} refers to the cut sparsification ratio used during graph partitioning. In particular, in practice, we can interpret the parameter as using $\max(1, \text{edges after}/\text{edges before} \times 100)$ edges. This parameter controls the fraction of cut edges retained via effective resistance sampling, trading off computational cost and approximation accuracy in the spectral decomposition. A higher value retains more edges across the partition boundary, while smaller values lead to more aggressive sparsification.

Cross-partition edges are sparsified using resistance-based sampling. The partition is obtained via a Fiedler vector computed with LOBPCG ($k{=}2$, $40$ iterations), using a quantile sweep over $[0.45,0.55]$. Effective resistances are approximated with a JL \cite{jacques2015quantized} with dimension $k=\lceil 24\log |V|/\varepsilon^2\rceil$ using $\varepsilon=0.5$, with a minimum of $k=20$. Crossing edges are sampled with replacement according to probabilities proportional to $w_e \widetilde{R}_e$, where $\widetilde{R}_e$ is the approximate resistance. A fraction $\rho$ of crossing edges is retained, depending on the dataset, and sampled edges are reweighted by $(q p_e)^{-1}$. For details regarding the performance of the sparsifier, we refer the reader to \cite{spielman2008graph}.

\begin{table}[t]
\centering
\caption{Dataset-specific hyperparameter settings for L2G-Net.}
\label{tab:hyperparams}
\small
\setlength{\tabcolsep}{6pt}
\renewcommand{\arraystretch}{1.1}
\begin{tabular}{lcccccccc}
\toprule
\textbf{Dataset}
& $d$
& Layers
& Levels 
& $K$
& Dropout
& LR
& Steps
& Sparsif. \\
\midrule
\textit{Roman-empire}
& 64
& 5
& 1
& 8
& 0.25
& $2\times10^{-4}$
& 1000
& 0.005 \\

\textit{Amazon-ratings}
& 256
& 4
& 1
& 6
& 0.30
& $4\times10^{-3}$
& 1500
& 0.005 \\

\textit{Minesweeper}
& 32
& 14
& 1
& 6
& 0.225
& $2\times10^{-3}$
& 1000
& 0.005 \\

\textit{Tolokers}
& 64
& 3
& 1
& 4
& 0.20
& $8\times10^{-4}$
& 2000
& 0.01 \\

\midrule
\textit{peptides-func} & 256 & 4 & 1 & 12 & 0.35 & $1\times 10^{-3}$ & 500 & 1\\
\textit{peptides-struct} & 256 & 4 & 1 & 12 & 0.3 & $1\times 10^{-3}$ & 300 & 1\\
\midrule
\textit{Paris} & 64 & 16 & 4 & 12 & 0.3 & $5\times 10^{-3}$ & 5000 & 0.150\\
\textit{Shanghai} & 64 & 16 & 4 & 12 & 0.35 &  $5\times 10^{-3}$ & 5000 & 0.130 \\
\textit{Los Angeles} & 64 & 16 & 5 & 12 & 0.25 & $2.5\times 10^{-3}$ & 5000 & 0.100 \\
\textit{London} & 64 & 16 & 6 & 12 & 0.4 & $2\times 10^{-3}$ & 5000 & 0.125 \\
\bottomrule
\end{tabular}
\end{table}

\subsection{Base layer formulation}
Following \cite{hariri2025return}, we adopt an ODE-inspired formulation. Let $\bt X(t)$ denote the node features at continuous time $t$. We let $\bt U(\bt \Phi)$ be the local to global forward transform with local filtering in our method. Then,
\begin{equation}
    \mathrm{d}\bt X(t) = \bt U g_\theta(\pmb \lambda) \bt U(\bt \Phi)^\top \bt X(t)\bt W\, \mathrm{d} t.
\end{equation}
Applying an explicit Euler discretization with step size $\epsilon > 0$ yields the update for layer $m = 0, \dots, M-1$:
\begin{equation}
    \bt X_{m+1} = \bt X_{m} + \epsilon\, \bt U g_\theta(\pmb \lambda) \bt U(\bt \Phi)^\top \bt X_{m}\bt W.
\end{equation}
Depth allows for compositionality (learning complex non-linearities) rather than to expand the receptive field, which is already global due to the spectral formulation. Unlike \cite{hariri2025return}, the complexity of the spectral filter does not directly control the norm of the layer Jacobian in our model. To reduce the parameter count, we share $\bt W$ and the spectral filter parameters $\theta, \bt \Phi$ across layers. Furthermore, we also share the spectral filter across channels.
 
\subsection{Real-world graph properties}
We provide some information about the properties of the graphs we consider in \cref{tab:platonov_stats}. We also include the cut size after sparsification during our experiments.

\begin{table}[t]
\centering
\caption{Statistics of real-world heterophilous graph datasets from Platonov \textit{et al.}~\cite{platonov2023critical}.}
\label{tab:platonov_stats}
\begin{tabular}{lrrrrrr}
\toprule
Dataset & Nodes & Edges & Avg. Degree & Features & Classes & $k$ after sparse \\
\midrule
\emph{Roman-empire}     & 22{,}662 & 32{,}927  & 2.91  & 300 & 18 & 1\\
\emph{Amazon-ratings}   & 24{,}492 & 93{,}050  & 7.60  & 300 & 5  & 1\\
\emph{Minesweeper}      & 10{,}000 & 39{,}402  & 7.88  & 7   & 2  & 1\\
\emph{Tolokers}         & 11{,}758 & 519{,}000 & 88.28 & 10  & 2  & 25\\
\bottomrule
\end{tabular}
\end{table}

\section{Deflation}
\label{sup:deflation}
The derivation in \cite{fernandez-menduina2025fast}, which is generalized in our work, assumes a worst-case scenario\footnote{Worst case in terms of complexity: applying deflation simplifies the factorization.} where 1) the GGL before the rank-one update has no repeated eigenvalues and 2) the rank-one update used to add an edge is not orthogonal to any element in the original bases of the eigenspace. 
In these cases, we first deflate the matrix, reducing the computational complexity of the factorization. Following \cite{bunch1978rank},  consider the rank-one update:
\begin{equation}
\newL = \baseL + \rho \, \bt v \bt v^\top = \newU\diag\bfnew\newU^\top
\end{equation}
where $\baseL = \bt U\diag\bfbase \bt U^\top$ is the original eigendecomposition. Let $\bt z = \bt U^\top \bt v$ be the projection of the perturbation vector onto the eigenspace of $\baseL$. Then, we can write: $\newL = \bt U (\diag{\bfbase} + \rho \, \bt z \bt z^\top) \bt U^\top$. The idea of deflation is that, in some scenarios, we can find a set of eigenvectors of $\bt L$ (potentially different than $\bt U$) such that some of its elements are orthogonal to the original rank-one update $\bt v$, so that $\bt z$ has some zero entries. Since $\bt U$ in our case is given by our algorithm, we differentiate two cases: direct deflation (case 1) and deflation after applying a Householder reflector (case 2).

\paragraph{Case 1: Zero components} ($z_i = 0$, for $i = 1, \hdots, \ell$). The perturbation is then orthogonal to the basis of the original eigenspace. We obtain $\tilde{\lambda_i} = \lambda_i$ and $\tilde{\bt u}_i = \bt u_i$:
\begin{equation}
\newL \bt u_i = \bt U(\diag{\bfbase} + \rho \, \bt z\bt z^\top)\bt e_i = \bt U(\diag{\bfbase} \bt e_i) = \lambda_i \bt u_i,
\end{equation}
where $\bt e_i$ is the $i$th element of the canonical basis.

\textbf{Deflation:} Remove column and row $i$ from $(\diag{\bfbase} + \rho \, \bt z \bt z^\top)$. Construct the Cauchy matrix based on the updated secular equation. During multiplication, let $\bt u_i^\top\bt x$ be the $i$th coefficient, i.e., let it pass through unmodified.

\textbf{Effect on complexity:} Reduces the problem size from $n$ to $n-\ell$, where $\ell$ is the number of zero components.

\paragraph{Case 2: Repeated eigenvalues.} Consider a repeated eigenvalue $\lambda_k$ with multiplicity $m$, and let $\mathcal{S}_k = \{i : \lambda_i = \lambda_k\}$ denote the index set of all eigenvectors corresponding to $\lambda_k$. The basis for this eigenspace is not unique; any orthogonal transformation of $\{ \bt u_i \}_{i \in \mathcal{S}_k}$ yields another valid basis. We can exploit this freedom to choose a new basis that zeroes out components of $\bt z$. Let $\bt z_k$ be the $m$-dimensional subvector of $\bt z$ corresponding to the indices in $\mathcal{S}_k$. We can find an $m \times m$ orthogonal matrix $\bt Q$ (e.g., a Householder reflector) such that:
\begin{equation}
\bt Q^\top \bt z_k = \|\bt z_k\|_2 \bt e_1.
\end{equation}
Applying this rotation to the basis of the eigenspace results in a new set of eigenvectors where the transformed perturbation vector, $\hat{\bt z}$, has $m-1$ zero components in the subspace. Since the eigenvalues are identical within this subspace ($\lambda_i = \lambda_k \ \forall i \in \mathcal{S}_k$), the diagonal matrix $\diag\bfbase$ is invariant under this rotation. This effectively reduces the problem to Case 1 for $m-1$ eigenpairs, as they become orthogonal to the perturbation.

\textbf{Deflation:} Construct an orthogonal matrix $\bt Q$ that transforms the subvector $\bt z_k$ as shown above. Let $\hat{\bt U} = \bt U_{\mathcal{S}_k} \bt Q$. This rotation leaves $m-1$ eigenpairs, $(\lambda_k, \hat{\bt u}_j)$, unaffected by the rank-one update. We can then remove the $m-1$ rows and columns of $(\diag{\bfbase} + \rho \, \hat{\bt z} \hat{\bt z}^\top)$ corresponding to the positions where $\hat{\bt z}$ is zero and solve the smaller problem.

\textbf{Effect on complexity:} Reduces the problem size from $n$ to $n-(m-1)$.

\chan{Near-degenerate eigenvalues are common in practice: for instance, on \textit{Roman Empire}, about 4.5\% of eigengaps fall below our threshold of $10^{-12}$. Deflation handles these cases without degrading factorization accuracy or downstream performance. We show the eigengaps (the difference between consecutive eigenvalues) for the four datasets in \cite{platonov2023critical} in \cref{fig:eigengaps}.}

\begin{figure}
    \centering
    \includegraphics[width=0.49\linewidth]{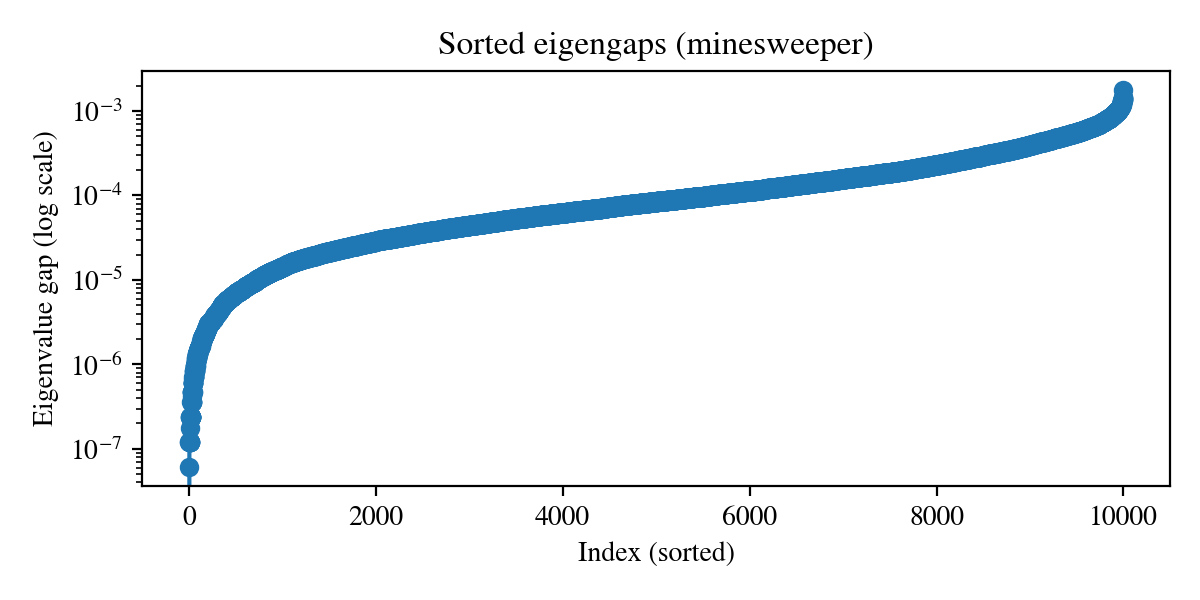}
\includegraphics[width=0.49\linewidth]{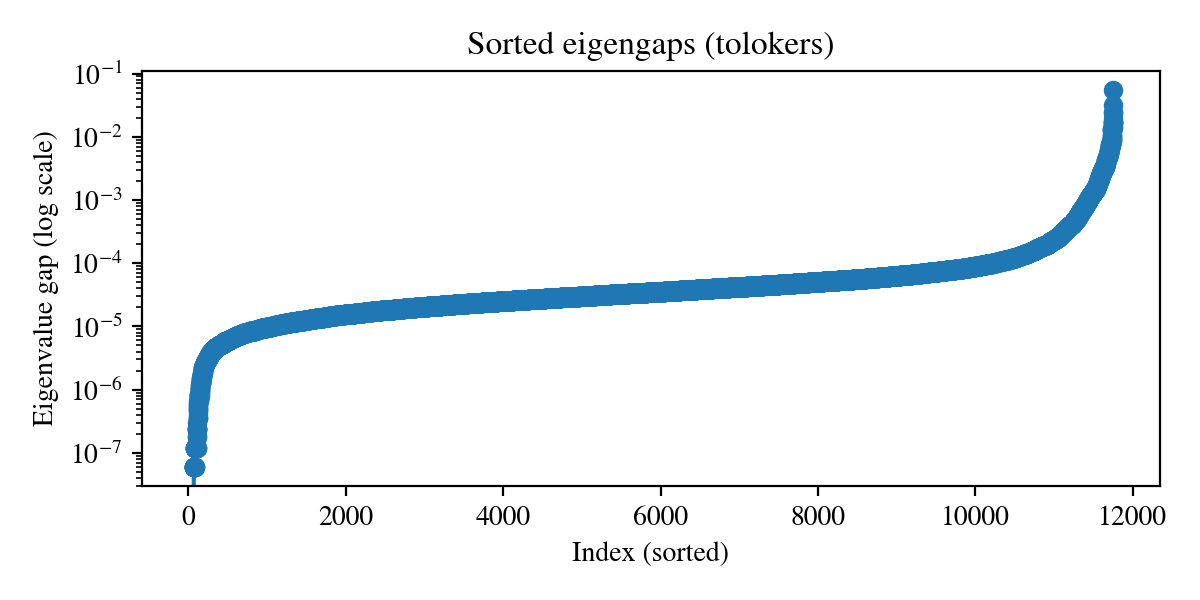}

\includegraphics[width=0.49\linewidth]{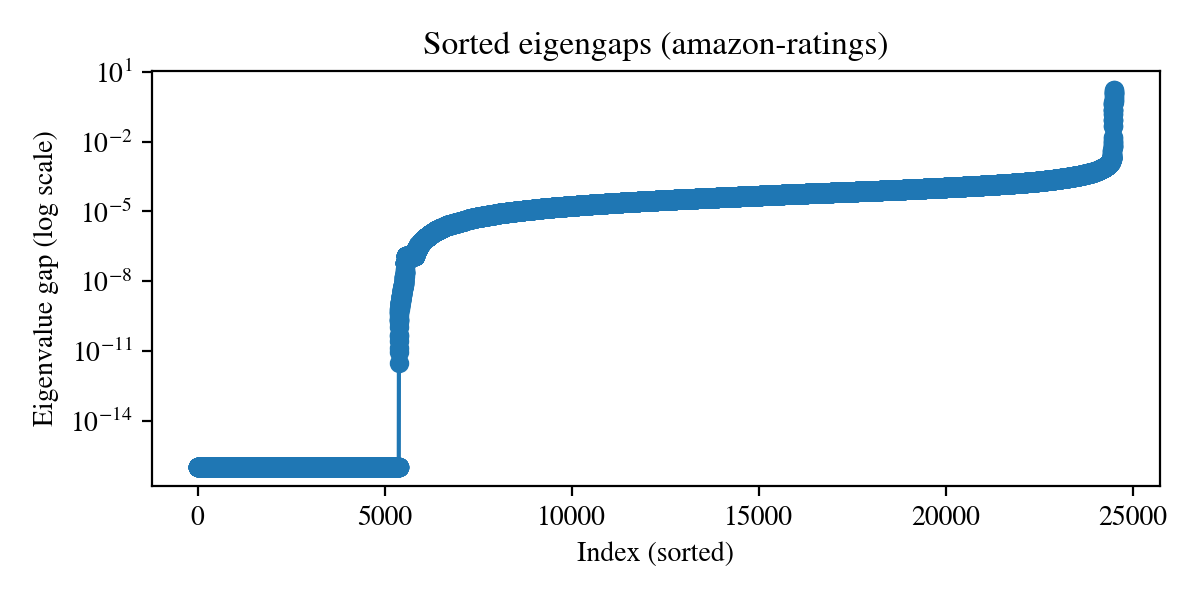}
\includegraphics[width=0.49\linewidth]{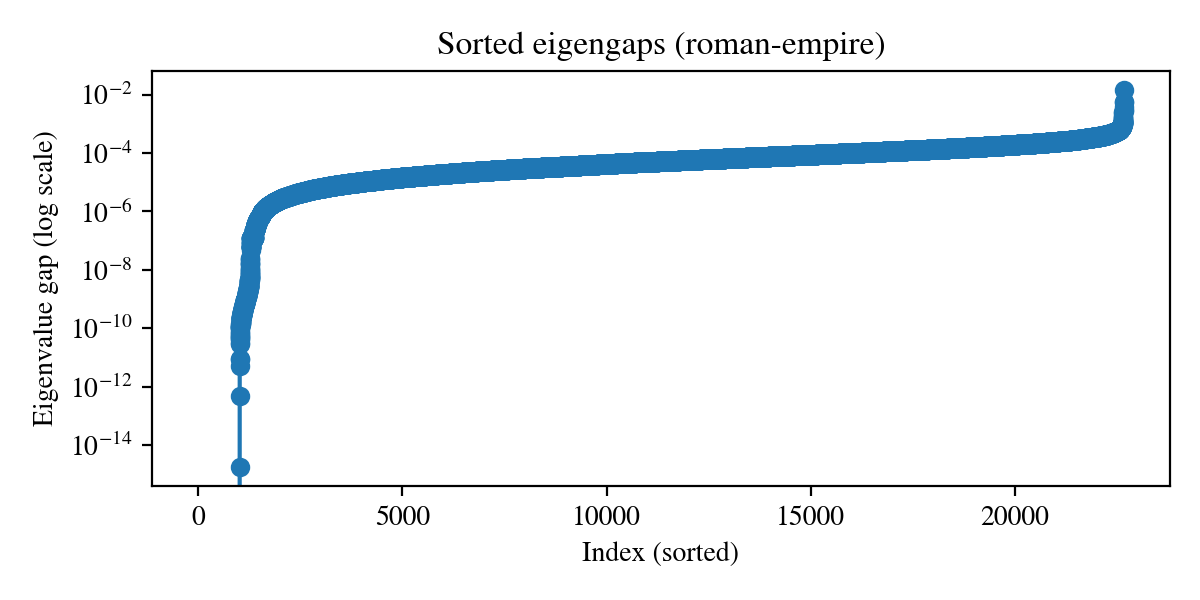}  
    \caption{\chan{Sorted eigengaps of the normalized Laplacian on the heterophilous benchmarks of \cite{platonov2023critical}. \emph{Amazon-ratings} exhibits a large plateau of numerically zero gaps (approximately 20\% of eigenvalues), and \emph{Roman-empire} shows a smaller cluster of gaps below $10^{-12}$; \emph{Minesweeper} and \emph{Tolokers} are well-separated throughout. Deflation handles near-degenerate eigenvalues in all cases without affecting downstream accuracy.}}
    \label{fig:eigengaps}
\end{figure}

\section{Proof of \cref{thm:cauchy_factorization}}
\label{app:proof_master}
We show that the Laplacian eigenvectors of any graph in $\mathcal{F}(L, \lbrace \mathcal{G}_i\rbrace_{i = 1}^m, k)$ can be written as a chain product of CF. We denote by $\bt x_i$ the subset of $\bt x$ corresponding to the subset of nodes of $\mathcal{G}$ represented by $\mathcal{G}_i$.

\subsection{CF decomposition}
First, we establish the following property of any GGL, which states that the Laplacian decomposes into rank-one updates, one for each edge  contribution.
\begin{proposition}[\cite{batson2014twice}]
\label{prop:rank_one_laplacians_sup}
    Let $\bt e_j$ be the $j$th canonical vector, for $j = 1, \hdots, n$. Then, the GGL $\bt L$ of an undirected graph can be written as
    \begin{equation}
        \bt L = \sum_{{(i, j)} \in  \mathcal{E}, \, i\not=j}\, w_{ij} (\bt e_i - \bt e_j)(\bt e_i - \bt e_j)^\top
        + \sum_{(i, i)\in\mathcal{E}}\,  w_{ii} \bt e_i\bt e_i^\top,
    \end{equation}
    where $w_{ij}$ denotes the edge weights, for $i, j = 1, \hdots, \vert \mathcal{V} \vert$. 
    \end{proposition}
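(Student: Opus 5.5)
The plan is to verify the identity entrywise, comparing both sides as symmetric $n\times n$ matrices. Two conventions must be fixed first. The sum over $(i,j)\in\mathcal{E}$ with $i\neq j$ runs over \emph{unordered} edges, each counted once. The adjacency matrix $\bt W$ has zero diagonal, with self-loops recorded only in $\bt V$, so that $w_{ii}=v_{ii}$ and $\bt D=\diag{\bt 1^\top \bt W}$ collects only non-loop weights. With these conventions, $\bt L=\bt D-\bt W+\bt V$ has entries
\[
L_{pq}=-W_{pq}\quad (p\neq q), \qquad L_{pp}=\sum_{q\neq p}W_{pq}+V_{pp}.
\]

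Next I would expand a single edge term. For $i\neq j$, the matrix $(\bt e_i-\bt e_j)(\bt e_i-\bt e_j)^\top$ equals
\[
\bt e_i\bt e_i^\top+\bt e_j\bt e_j^\top-\bt e_i\bt e_j^\top-\bt e_j\bt e_i^\top .
\]
So $w_{ij}(\bt e_i-\bt e_j)(\bt e_i-\bt e_j)^\top$ places $+w_{ij}$ at $(i,i)$ and $(j,j)$, places $-w_{ij}$ at $(i,j)$ and $(j,i)$, and is zero elsewhere. A self-loop term $w_{ii}\bt e_i\bt e_i^\top$ contributes only $w_{ii}$ at position $(i,i)$.

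Summing the edge terms, I would check the two kinds of entries separately.
\begin{itemize}
\item For an off-diagonal position $(p,q)$ with $p\neq q$, only the unordered edge $\{p,q\}$ contributes, if it exists. The contribution is $-w_{pq}=-W_{pq}$, and it is $0=-W_{pq}$ if there is no edge. This matches $L_{pq}$.
\item For a diagonal position $(p,p)$, each non-loop edge incident to $p$ contributes its weight once, giving $\sum_{q\neq p}W_{pq}=D_{pp}$. The self-loop term adds $w_{pp}=V_{pp}$. This matches $L_{pp}$.
\end{itemize}
Since all entries agree, the identity holds, and it gives the rank-one update form \eqref{eq:rank-one-update} used in the rest of the appendix.

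There is no real analytic obstacle here; the only delicate point is bookkeeping. If edges were summed over ordered pairs, each off-diagonal weight and each degree would be counted twice, and the correct identity would then need a factor $\tfrac12$. Stating the unordered-edge and zero-diagonal-$\bt W$ conventions explicitly at the start resolves this.
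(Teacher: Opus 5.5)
Your proof is correct. The paper gives no argument of its own for this proposition; it only cites \cite{batson2014twice}, and your entrywise check is the standard verification behind that citation. Your reading of the loop weight as $w_{ii}=v_{ii}$ matches the main-text version (\cref{prop:rank_one_laplacians}), and it is the convention that actually matters. Any diagonal of $\bt W$ cancels in $\bt D-\bt W$ and could not produce the $w_{ii}\bt e_i\bt e_i^\top$ term, so the zero-diagonal assumption on $\bt W$ is harmless but not what the identity hinges on.
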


We can state now the following result, which generalizes \cite{fernandez-menduina2025fast} to arbitrary symmetric matrices.
\begin{theorem}[Progressive decomposition identity]
\label{thm:main_eq}
    The eigenvector basis $\tilde{\bt U}^\top$ of $\tilde{\bt L} = \tilde{\bt U}\diag{\tilde{\pmb{\lambda}}}\tilde{\bt U}^\top =  \bt L + \alpha \bt v\bt v^\top$, with $\bt L = \bt U\diag{\pmb{\lambda}}\bt U^\top$, decomposes as
    \begin{equation}
        \label{eq:fwd_map_sup}
        \tilde{\bt U}^\top = - \Dml \bt U^\top.
    \end{equation}    
\end{theorem}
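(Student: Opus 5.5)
The plan is to reduce the rank-one update of $\bt L$ to a rank-one update of a diagonal matrix, deflate it, and read off the eigenvectors of the non-deflated block in closed form. First I set $\bt z = \bt U^\top \bt v$ and write
\[
\tilde{\bt L} = \bt U\big(\diag{\pmb\lambda} + \alpha \bt z\bt z^\top\big)\bt U^\top .
\]
Any orthonormal eigenbasis $\bt Q$ of the inner matrix $\bt M \doteq \diag{\pmb\lambda}+\alpha\bt z\bt z^\top$ then gives an eigenbasis $\tilde{\bt U} = \bt U\bt Q$ of $\tilde{\bt L}$, so that $\tilde{\bt U}^\top = \bt Q^\top \bt U^\top$. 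It remains to show that, after a suitable choice of $\bt U$ within its eigenspaces, $\bt Q^\top$ can be taken to be $\pm\Dml$.

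\textbf{Deflation.} Following \cref{sup:deflation}, inside every repeated eigenspace of $\bt L$ I apply a Householder reflector to the columns of $\bt U$, so that at most one of them has a nonzero projection onto $\bt v$. This leaves $\diag{\pmb\lambda}$ unchanged and realises the convention adopted in \cref{app:further} (Affected spectral indices). Then $\mathcal S=\{i: z_i\neq 0\}$ indexes pairwise distinct eigenvalues. I choose the permutation $\bt P_{\mathcal S}$ that moves $\mathcal S^c$ first. Because $z_i=0$ for $i\notin\mathcal S$,
\[
\bt P_{\mathcal S}\bt M\bt P_{\mathcal S}^\top = \mathrm{blkdiag}\big(\diag{\pmb\lambda_{\mathcal S^c}},\ \diag{\pmb\lambda_{\mathcal S}}+\alpha\bt z_{\mathcal S}\bt z_{\mathcal S}^\top\big).
\]
The first block is already diagonal, so its eigenvector matrix is $\bt I$; these are exactly the invariant directions of Case 1.

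\textbf{The affected block.} Let $\tilde\lambda$ be an eigenvalue of $\diag{\pmb\lambda_{\mathcal S}}+\alpha\bt z_{\mathcal S}\bt z_{\mathcal S}^\top$ with eigenvector $\bt q$. The eigen-equation reads
\[
(\diag{\pmb\lambda_{\mathcal S}}-\tilde\lambda\bt I)\bt q = -\alpha\bt z_{\mathcal S}(\bt z_{\mathcal S}^\top\bt q).
\]
I first show that $\tilde\lambda\neq\lambda_i$ for every $i\in\mathcal S$. Suppose $\tilde\lambda=\lambda_i$. If $\bt z_{\mathcal S}^\top\bt q\neq0$, then the $i$th row of the equation gives $0=-\alpha z_i(\bt z_{\mathcal S}^\top\bt q)\neq 0$, a contradiction. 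If instead $\bt z_{\mathcal S}^\top\bt q=0$, then $\bt q$ is an eigenvector of the diagonal matrix for $\lambda_i$, and since the $\lambda_{\mathcal S}$ are distinct this forces $\bt q\propto\bt e_i$, which contradicts $z_i\neq0$. Hence $\bt z_{\mathcal S}^\top\bt q\neq 0$ always, and
\[
q_i \propto \frac{z_i}{\tilde\lambda - \lambda_i}, \qquad i\in\mathcal S .
\]
This yields the secular equation \eqref{eq:secular} with $|\mathcal S|$ roots. The eigenvalues are distinct by the strict interlacing of \cref{prop:interleaving}, which is strict here because the $\lambda_{\mathcal S}$ are distinct and all $z_i\neq0$. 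Normalising each eigenvector gives
\[
\bt Q_{\mathcal S}^\top = \diag{\bt s}\,\bar{\bt C}(\tilde{\pmb\lambda}_{\mathcal S},\pmb\lambda_{\mathcal S})\,\diag{\bt z_{\mathcal S}},
\]
with $s_j = \big(\sum_i z_i^2/(\tilde\lambda_j-\lambda_i)^2\big)^{-1/2}$ up to sign. This matrix is orthogonal, since it is built from eigenvectors of a symmetric matrix with simple spectrum, so it is an OCLM $\bt C(\tilde{\pmb\lambda}_{\mathcal S},\pmb\lambda_{\mathcal S})$.

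\textbf{Assembly and the main obstacle.} Assembling the two blocks gives $\tilde{\bt U}^\top = \bt P_{\mathcal S}^\top\,\mathrm{blkdiag}(\bt I,\ \pm\bt C)\,\bt P_{\mathcal S}\bt U^\top$. The sign in front of $\bt C$ is a free choice of eigenvector signs, fixed through $\bt s$ so as to match \eqref{eq:icassp}; this choice is what produces the form $\pm\Dml\bt U^\top$ stated in \eqref{eq:fwd_map_sup} and in \cref{lemma:main_eq}. The step I expect to need the most care is the bookkeeping when some $\tilde\lambda_j$ coincide with a deflated eigenvalue in $\pmb\lambda_{\mathcal S^c}$. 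Then $\tilde{\bt L}$ has a repeated eigenvalue and its eigenbasis is not unique, so the statement must be read as asserting that the particular orthonormal eigenbasis constructed above is admissible, which holds because the two blocks are mutually orthogonal invariant subspaces. Handling this together with the sign conventions between $\bt D$, $-\bt C$ and \eqref{eq:icassp} is the delicate part; the algebra itself is routine.
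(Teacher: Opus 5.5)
Your proposal is correct and follows the paper's route: deflate (zero components, plus a Householder rotation inside repeated eigenspaces), keep the invariant directions as an identity block, and write the eigenvectors of the remaining diagonal-plus-rank-one block as a normalized Cauchy-like matrix. The only difference is that the paper simply cites \cite{fernandez-menduina2025fast} for this non-degenerate step, whereas you derive it directly, including the check that $\tilde\lambda_j\neq\lambda_i$ on $\mathcal S$ (which makes the Cauchy matrix well defined); on the sign bookkeeping you flag, the literal form $-\Dml\bt U^\top$ corresponds to the block choice $\mathrm{blkdiag}(-\bt I,\bt C)$, i.e., also flipping the invariant eigenvectors, which is admissible because columns of $\tilde{\bt U}$ are only defined up to sign, and simplicity of the affected eigenvalues already follows from your formula $q_i\propto z_i/(\tilde\lambda-\lambda_i)$, so you do not need \cref{prop:interleaving}, which is stated only for $\rho>0$.
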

\begin{proof}
When $\mathcal{S}$ is empty, the result boils down to the progressive factorization in \cite{fernandez-menduina2025fast}. When $\mathcal{S}$ is not empty, we apply deflation first, and construct $\bt C(\tilde{\pmb{\lambda}_{\mathcal{S}}}, \pmb{\lambda}_{\mathcal{S}})$ from the deflated matrix.
\end{proof}
When $\newL$ is the Laplacian of a graph, this result offers a \emph{progressive factorization} of the Laplacian eigenvectors:  we can decompose the new basis as the eigenvectors of the unperturbed Laplacian multiplied by a CF. 

Then, we can design a decomposition that starts with a fully disconnected graph and progressively adds edges. Let $\bfbase_0, \hdots, \bfbase_{\vert \mathcal{E} \vert}$ be the sequence of eigenvalues of the GGL of the graph obtained by performing each step of this process. Then, we can state the following result. 
\begin{lemma}[CF decomposition]
The Laplacian eigenvectors of the GGL $\bt L = \bt U \diag\bfbase\bt U^\top$ of any graph $\mathcal{G}$ can be written as the product of $\vert \mathcal{E} \vert$ CF:
\begin{equation}
\bt U^\top = \bt D(\bfbase_{\vert \mathcal{E} \vert}, \bfbase_{\vert \mathcal{E} \vert-1})\hdots \bt D(\bfbase_{2}, \bfbase_{1})\bt D(\bfbase_{1}, \bfbase_{0}).
\end{equation}
\end{lemma}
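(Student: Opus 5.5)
The argument is an induction on the number of edges added, starting from the empty graph on $n$ vertices. We order the edges $e_1,\dots,e_{|\mathcal E|}$ arbitrarily, self-loops included, and let $\bt L_t$ be the GGL of the graph containing only $e_1,\dots,e_t$. By \cref{prop:rank_one_laplacians_sup}, $\bt L_0=\bt 0$ and
\[
\bt L_t=\bt L_{t-1}+w_t\,\bt v_t\bt v_t^\top ,
\]
where $\bt v_t=\bt e_i-\bt e_j$ for an edge $(i,j)$ and $\bt v_t=\bt e_i$ for a self-loop. Hence $\bt L_{|\mathcal E|}=\bt L$, and consecutive Laplacians differ by a symmetric rank-one term.

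\textbf{Base case.} $\bt L_0=\bt 0$ has every eigenvalue equal to zero, so any orthonormal basis diagonalizes it. We choose $\bt U_0=\bt I$ with $\bfbase_0=\bt 0$. This is why no trailing $\bt U_0^\top$ appears on the right of the claimed product.

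\textbf{Inductive step.} Suppose $\bt L_{t-1}=\bt U_{t-1}\diag{\bfbase_{t-1}}\bt U_{t-1}^\top$. Apply \cref{thm:main_eq} with $\bt L=\bt L_{t-1}$, $\alpha=w_t$ and $\bt v=\bt v_t$. This gives an eigenbasis of $\bt L_t$ with
\[
\bt U_t^\top=\bt D(\bfbase_t,\bfbase_{t-1})\,\bt U_{t-1}^\top .
\]
The statement of \cref{thm:main_eq} carries a leading minus sign, whereas \cref{lemma:main_eq} does not. I would absorb the sign into $\bt D$, either by noting that the $-\bt C$ block is already part of \cref{def:cauchy_factor} or by flipping eigenvector signs, which leaves an eigenbasis valid. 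Unrolling the recursion $|\mathcal E|$ times and using $\bt U_0=\bt I$ yields the stated product of $|\mathcal E|$ CFs.

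\textbf{Main obstacle: degeneracy and basis choice.} Early on, $\bt L_{t-1}$ has highly repeated eigenvalues; at $t=1$ all $n$ eigenvalues are zero. Each CF therefore relies on the deflation of \cref{sup:deflation}, in which the eigenbasis inside each repeated eigenspace is rotated by a Householder reflector $\bt Q$ so that $\bt z=\bt U^\top\bt v$ has at most one nonzero entry per eigenspace. That rotation changes $\bt U_{t-1}$. The induction must therefore carry the eigenbasis actually produced at step $t-1$, not an arbitrary one. There are two ways to handle this.
\begin{enumerate}
\item State the lemma for a specific eigenbasis of $\bt L$. Since eigenbases of repeated eigenspaces are not unique, this is only an existence statement.
\item Fold each block-diagonal rotation $\bt Q$, which acts within eigenspaces, into the neighbouring factor. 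The product $\bt P_{\mathcal S}^\top\,\mathrm{blkdiag}(\bt I,-\bt C)\,\bt P_{\mathcal S}\,\mathrm{blkdiag}(\bt Q^\top)$ is still orthogonal and has the same structure.
\end{enumerate}
I would take the first route and state explicitly that the result holds for the eigenbasis $\bt U$ generated by this process. That basis is a legitimate GFT basis because any orthonormal basis of a repeated eigenspace is valid.

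Two final checks are needed. First, when $\bt z$ has zero entries or $\mathcal S$ is empty, $\bt D$ reduces to the identity on those coordinates, which is consistent with \cref{def:cauchy_factor}. Second, for weighted edges, the $\rho$-dependent scaling is already inside the OCLM $\bt C(\bfnew_{\mathcal S},\bfbase_{\mathcal S})$ through the secular equation~\eqref{eq:secular}.
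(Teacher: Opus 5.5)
Your main line (order the edges, start from $\bt L_0=\bt 0$ with $\bt U_0=\bt I$, apply \cref{thm:main_eq} at each rank-one step, and unroll) is exactly the paper's one-line argument, and your handling of the sign is fine. The gap is in the degeneracy paragraph: you identify the obstacle correctly, but neither of your two routes removes it.

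\textbf{Route 1 does not remove the rotations.} If step $t$ needs the within-eigenspace rotation $\bt Q_t$, the basis your process produces satisfies $\bt U_t^\top=\bt D(\bfbase_t,\bfbase_{t-1})\,\bt Q_t^\top\,\bt U_{t-1}^\top$. Unrolling therefore leaves the $\bt Q_t^\top$ interleaved between the factors. Route 1 only fixes which final eigenbasis you obtain and does nothing about them.

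\textbf{The rotations cannot be avoided.} For a combinatorial Laplacian, the first edge $(i,j)$ gives $\bt z=\bt e_i-\bt e_j$. This has two nonzero entries in the single eigenspace of $\bt L_0=\bt 0$, so $\bt Q_1\neq\bt I$. Read literally with \cref{def:cauchy_factor}, the statement fails even as an existence claim for one edge of weight $w$ between two nodes. Here $\bfbase_0=(0,0)$ has one distinct eigenvalue, so $|\mathcal S|\le 1$ and every CF is a diagonal $\pm 1$ matrix. But every eigenbasis of $w(\bt e_1-\bt e_2)(\bt e_1-\bt e_2)^\top$ is, up to signs and ordering, the normalized $2\times 2$ Hadamard matrix.

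\textbf{The same example refutes route 2.} There $\bt D\,\mathrm{blkdiag}(\bt Q^\top)$ is exactly that Hadamard matrix. It is orthogonal, but it is not of the form $\bt P_{\mathcal S}^\top\mathrm{blkdiag}(\bt I,-\bt C)\bt P_{\mathcal S}$ with an OCLM block, so it does not have ``the same structure.''

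\textbf{What is missing} is an explicit convention that enlarges the factor. Define the factor of step $t$ as $\hat{\bt D}_t\doteq\bt D(\bfbase_t,\bfbase_{t-1})\,\bt Q_t^\top$. This is the Cauchy factor computed in the deflation-adapted basis $\bt U_{t-1}\bt Q_t$, with $\bt Q_t=\bt I$ whenever $\bt z$ has at most one nonzero entry per eigenspace. It is orthogonal. If $\bt Q_t$ is chosen to act only on the support of $\bt z$, it equals the identity outside that support, so the locality of \cref{prop:cf_to_ocm} is preserved. With this convention your induction goes through verbatim and yields exactly $|\mathcal E|$ factors. The paper's own proof is silent on this point but adopts the same convention tacitly: its two-node example in \cref{app:strict} calls the full Hadamard matrix ``the Cauchy factor.''
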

\begin{proof}
The result follows from applying \cref{thm:main_eq} accounting for every possible rank-one update in \cref{prop:rank_one_laplacians}. 
\end{proof}
Now, we can use the previous result to state a decomposition for graphs in $\mathcal{F}(L, \lbrace \mathcal{G}_i\rbrace_{i = 1}^m, k)$. We define the stack of matrices of base transforms as:
\begin{equation}
    \bt U_b^\top \doteq 
    \begin{bmatrix}
    \bt U_1^\top & \bt 0 &  \hdots & \bt 0 \\
    \bt 0 & \bt U_2^\top & \hdots & \bt 0\\
    \vdots & \ddots & \ddots & \vdots \\ 
    \bt 0 & \bt 0 &\hdots & \bt U_m^\top
    \end{bmatrix}.
\end{equation}
Let $\bfbase$ be the eigenvalues of the GGL for the graph with the isolated base components. We then denote by $\bfnew_{i}$, for $i = 1, \hdots, k\, (2^L - 1)$, the eigenvalues resulting from each rank-one update.
\begin{lemma}[HGF decomposition]
\label{lemma:hgf_decomposition}
    Let $\tilde{\bt U}$ be Laplacian eigenvectors of any graph $\mathcal{G} \in \mathcal{F}(L, \lbrace \mathcal{G}_i\rbrace_{i = 1}^m, k)$. Then, 
    \begin{equation}
        \tilde{\bt U}^\top =  \bt D(\bfnew_{k(2^L - 1)}, \bfnew_{k(2^L - 1)-1}) \hdots \bt D(\bfnew_1, \bfbase)\bt U_b^\top.
    \end{equation}
\end{lemma}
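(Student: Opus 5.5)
We argue by induction on the number of bridge edges added, applying \cref{thm:main_eq} once per edge. First fix the hierarchical construction of $\mathcal{G}$. At each of the $L$ levels, pairs of subgraphs are merged by at most $k$ bridge edges. A binary merge tree with $m=2^L$ leaves has $2^L-1$ internal merges. If some merge uses fewer than $k$ edges, we pad with identity factors (or take $\rho=0$ updates, whose CF is $\bt I$ since $\mathcal{S}=\emptyset$). This gives an ordered list of exactly $K=k(2^L-1)$ rank-one updates $\rho_\ell\bt v_\ell\bt v_\ell^\top$, with $\bt v_\ell=\bt e_{u_\ell}-\bt e_{v_\ell}$, ordered level by level.

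By \cref{prop:rank_one_laplacians_sup}, the GGL satisfies
\[
\bt L_{\mathcal{G}}=\bt L_0+\sum_{\ell=1}^{K}\rho_\ell\bt v_\ell\bt v_\ell^\top ,
\]
where $\bt L_0=\mathrm{blkdiag}(\bt L_{\mathcal{G}_1},\dots,\bt L_{\mathcal{G}_m})$ collects all non-bridge edges and self-loops. The base case is immediate: $\bt L_0=\bt U_b\diag{\bfbase}\bt U_b^\top$, because the eigendecomposition of a block-diagonal matrix is the block-diagonal of the blockwise eigendecompositions. So the eigenbasis of $\bt L_0$ is exactly $\bt U_b$ and its spectrum is $\bfbase$ (after reordering; the permutation can be absorbed into $\bt P_{\mathcal S}$ of the first CF).

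For the inductive step, let $\bt L_\ell=\bt L_{\ell-1}+\rho_\ell\bt v_\ell\bt v_\ell^\top$ and assume
\[
\tilde{\bt U}_{\ell-1}^\top=\bt D(\bfnew_{\ell-1},\bfnew_{\ell-2})\cdots\bt D(\bfnew_1,\bfbase)\bt U_b^\top .
\]
Apply \cref{thm:main_eq}, equivalently \cref{lemma:main_eq}, to $\bt L_{\ell-1}$ with perturbation $\rho_\ell\bt v_\ell$. It gives $\tilde{\bt U}_\ell^\top=\bt D(\bfnew_\ell,\bfnew_{\ell-1})\tilde{\bt U}_{\ell-1}^\top$; the sign convention is absorbed into $\bt D$ as in \cref{lemma:main_eq}. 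Left-multiplying the inductive hypothesis by this factor closes the induction. At $\ell=K$ we have $\bt L_K=\bt L_{\mathcal{G}}$, which yields the claimed product.

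\textbf{Main obstacle: degenerate spectra.} When $\bt L_{\ell-1}$ has repeated eigenvalues, \cref{thm:main_eq} requires first rotating within each eigenspace by a Householder reflector $\bt Q$ (\cref{sup:deflation}, Case 2). This changes the basis $\tilde{\bt U}_{\ell-1}$ from the one produced at the previous step. To handle it, I will note that the rotation $\bt Q$ is orthogonal, acts only inside eigenspaces, and commutes with $\diag{\bfnew_{\ell-1}}$. It can therefore be absorbed into the next factor: set $\bt D(\bfnew_\ell,\bfnew_{\ell-1})\mathrel{:=}\bt P_{\mathcal S}^\top\mathrm{blkdiag}(\bt I,-\bt C)\bt P_{\mathcal S}\,\mathrm{blkdiag}(\bt Q^\top)$. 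This is consistent with \cref{def:cauchy_factor}, which fixes $\mathcal S$ relative to a chosen eigenbasis.

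Two consequences follow. First, the final product is an eigenbasis of $\bt L_{\mathcal{G}}$, unique up to rotations within its own eigenspaces, so the identity holds for an appropriate choice of $\tilde{\bt U}$. Second, because $\bt v_\ell$ is supported on the two merged subgraphs and $\bt z=\tilde{\bt U}_{\ell-1}^\top\bt v_\ell$ vanishes on the spectral coordinates of all other blocks, each factor acts as the identity outside those blocks. This locality remark is not needed for the identity itself, but it justifies the interpretation given after the statement.
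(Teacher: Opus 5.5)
Your proposal is correct and matches the paper's proof, which is exactly this induction: write the Laplacian of $\mathcal{G}$ as the block-diagonal base Laplacian plus one rank-one term per bridge edge (\cref{prop:rank_one_laplacians}), then apply the progressive identity of \cref{thm:main_eq} once per edge, starting from the eigenbasis $\bt U_b^\top$ of the disconnected base graphs. Your additional bookkeeping makes explicit points that the paper's one-line argument leaves implicit in its deflation step: identity padding when a merge uses fewer than $k$ edges, the sign convention, and absorbing the within-eigenspace Householder rotation into the next factor (so the identity holds for a suitable choice of eigenbasis when spectra are degenerate).
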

\begin{proof}
The result follows by induction from \cref{prop:rank_one_laplacians} and \cref{thm:main_eq}.
\end{proof}
Although the decomposition consists of $k(2^L - 1)$ products, in a HGF each CF modifies only a reduced subset of the input vector, which can be exploited to systematically reduce complexity. The next result shows that CF are localized only to the subgraphs they modify.
\begin{proposition}[Edge connection]
\label{prop:cf_to_ocm}
    Consider a graph $\mathcal{G}$ with GGL $\bt L$ formed by a set of $m$ disconnected subgraphs $\mathcal{G}_i$, for $i = 1, \hdots, m$. Let $\tilde{\bt L} = \bt L + \rho \, \bt v\bt v^\top$ and $n = \vert \mathcal{V}\vert$. Then,
    \begin{equation}
        \bt D(\bfnew, \bfbase) = 
           \bt P_{\mathcal{S}}^\top  \begin{bmatrix}
                \bt I_{n-s\times n-s} & \bt 0 \\
                \bt 0 & -\bt C(\bfnew_{\mathcal{S}}, \bfbase_{\mathcal{S}})
            \end{bmatrix} \bt P_{\mathcal{S}}\in \mathbb{R}^{n\times n}.
    \end{equation}
    with 
    \begin{itemize}
        \item $s = \vert \mathcal{V}_i\vert + \vert\mathcal{V}_j\vert$ when the rank-one update connects $\mathcal{G}_i$ to $\mathcal{G}_j$, with $i \not = j$,
        \item $s = \vert\mathcal{V}_i\vert$ when the rank-one update affects only the subgraph $\mathcal{G}_i$.
    \end{itemize}
\end{proposition}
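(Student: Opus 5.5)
The plan is to use the block structure of $\bt L$ and of its eigenbasis. Because the subgraphs $\mathcal{G}_1,\dots,\mathcal{G}_m$ are disconnected, $\bt L$ is block diagonal (after a node permutation), $\bt L=\mathrm{blkdiag}(\bt L_1,\dots,\bt L_m)$. We may therefore take $\bt U=\bt U_b$ with $\bt U_b^\top=\mathrm{blkdiag}(\bt U_1^\top,\dots,\bt U_m^\top)$. Each column of $\bt U$ is then supported on the vertex set $\mathcal{V}_\ell$ of exactly one subgraph, and this is the property the whole argument rests on.

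\emph{Step 1: compute the projection} $\bt z=\bt U^\top\bt v$.
\begin{itemize}
\item For a bridge edge, $\bt v=\bt e_a-\bt e_b$ with $a\in\mathcal{V}_i$ and $b\in\mathcal{V}_j$. For a column $\bt u$ supported on $\mathcal{V}_\ell$ we get $\bt u^\top\bt v=0$ whenever $\ell\notin\{i,j\}$.
\item For an edge or self-loop internal to $\mathcal{G}_i$, $\bt v$ is supported on $\mathcal{V}_i$, and only columns of block $i$ can have $z_\ell\neq 0$.
\end{itemize}
So the nonzero entries of $\bt z$ lie among the $|\mathcal{V}_i|+|\mathcal{V}_j|$ (respectively $|\mathcal{V}_i|$) indices of the involved blocks.

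\emph{Step 2: deflation.} Apply the deflation of \cref{sup:deflation} (Case 1 for zero components, Case 2 for repeated eigenvalues). Every index outside the involved blocks has $z_\ell=0$, so Case 1 gives $\tilde{\lambda}_\ell=\lambda_\ell$ and $\tilde{\bt u}_\ell=\bt u_\ell$; these become the identity block.

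Case 2 needs one check: a Householder rotation inside an eigenspace could mix eigenvectors from involved and uninvolved blocks when an eigenvalue is shared across subgraphs. This is harmless. Uninvolved columns already have zero $z$-component, so we choose $\bt Q$ to act only on the involved part of $\bt z_k$, i.e. on the sub-eigenspace spanned by columns of blocks $i,j$. The uninvolved columns stay untouched, and the new basis vectors remain supported on $\mathcal{V}_i\cup\mathcal{V}_j$.

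\emph{Step 3: assemble the factor.} Let $\mathcal{S}$ be the surviving affected indices, which lie inside the involved blocks. \Cref{thm:main_eq} applied to the deflated problem gives $\bt D(\bfnew,\bfbase)$ in exactly the stated form: an identity on the invariant indices and $-\bt C(\bfnew_{\mathcal S},\bfbase_{\mathcal S})$ on the rest. Any further deflated indices within blocks $i,j$ can be absorbed into the lower block as identity rows, or into the $\bt I$ block. Hence the nontrivial part has size at most $s=|\mathcal{V}_i|+|\mathcal{V}_j|$ (respectively $|\mathcal{V}_i|$), with $s$ read as the dimension of the lower block after grouping those coordinates by $\bt P_{\mathcal S}$.

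\emph{Main obstacle.} The delicate point is Step 2 with eigenvalues shared across subgraphs. I would handle it by fixing the block-supported basis first and restricting every deflation rotation to involved columns, so that locality is preserved.
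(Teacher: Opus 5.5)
Your proposal is correct and follows the same route as the paper. The block-diagonal eigenbasis $\bt U_b$ makes $\bt z=\bt U_b^\top\bt v$ vanish on every block other than $i,j$ (resp.\ $i$), which gives at least $n-s$ invariant directions and hence the stated form, with $s$ read as an upper bound on the size of the nontrivial block; the paper's ``at least $n-s$ zeros'' reads it the same way. Your check that the Case~2 Householder rotations can be confined to the involved columns when an eigenvalue is shared across subgraphs is a refinement the paper leaves implicit, and it guarantees that the invariant directions are exactly the original columns of $\bt U_b$ outside the involved blocks.
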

\begin{proof}
From the progressive property \cref{thm:main_eq} and the definition of CF, we have to show that the transform version of the rank-one update $\bt U_b^\top \bt v$ has at least $n-s$ zeros. Since
\begin{equation}
    \bt z = \bt U_b^\top\bt v =     \begin{bmatrix}
    \bt U_1^\top & \bt 0 &  \hdots & \bt 0 \\
    \bt 0 & \bt U_2^\top & \hdots & \bt 0\\
    \vdots & \ddots & \ddots & \vdots \\ 
    \bt 0 & \bt 0 &\hdots & \bt U_m^\top
    \end{bmatrix} \bt v =  \begin{bmatrix}
    \bt U_1^\top \bt v_1 \\
    \bt U_2^\top \bt v_2 \\
    \bt U_m^\top \bt v_m
    \end{bmatrix},
\end{equation}
where $\bt v_i$ denotes the subsection of the vector $\bt v$ that corresponds to the $i$th subgraph. Now, for the first case, $\bt v$ only has two non-zero components, which will be in $\bt v_i$ and $\bt v_j$. Therefore, $\bt U_k^\top \bt v_k$ is zero for all $k \not = i, j$. This set comprises $n - s$ components, with $s = \vert \mathcal{V}_i\vert + \vert \mathcal{V}_j\vert$. Therefore, the first result follows.

For the second case, both non-zero components will lie in $\bt v_i$. Therefore, $\bt U_k^\top \bt v_k$ is zero for all $k \not = i$. This set comprises $n - s$ components, with $s = \vert \mathcal{V}_i\vert$. Therefore, the result follows.
\end{proof}
The previous result states that when we join two distinct subgraphs $\mathcal{G}_i$ and $\mathcal{G}_j$ via a rank-one update, the corresponding CF is an identity matrix except for a block of size $s = \vert \mathcal{V}_i\vert + \vert\mathcal{V}_j\vert$. This localizes the update to only the dimensions corresponding to the vertices of those two subgraphs. In the next section, we provide a method to implement these structured matrix products efficiently.

\section{Constructing the factorization}
\label{app:cauchy_eigen}
We detail our algorithm for eigendecomposition in \cref{alg:cauchy_full}. 
We analyze its complexity as follows:

\begin{itemize}

    \item \textbf{Apply transform.} 
    Since we add $k$ edges between pairs of subgraphs, the maximum number of 
    edges touching any given subgraph is $kL$.    Therefore, for a matrix $\mathbf{M}\in\mathbb{R}^{s\times s}$, the complexity 
    of each call is 
    \[
        O(g(s)\, k\, L),
    \]
    where $g(s)$ is the complexity of the matrix-vector product with the input matrix. This bound is a worst-case bound over the full hierarchy; tighter level-wise bounds are used in the merging analysis
    \item \textbf{Initialization.}
    Creating and updating each element of $\mathcal{Z}$ is linear in the size 
    of the smallest graphs, $n/2^L$. Since we have at most 
    $
        k\sum_{\ell=1}^L 2^{L-\ell}
        = k(2^L - 1)
    $
    elements in $\mathcal{Z}$, the cost is upper bounded by $O(nk)$.

    The eigendecomposition of the leaf graphs has cost
    \[
        O\!\left( \sum_{i = 1}^{m} \, f_i(n)\right).
    \]

    Updating all elements of $\mathcal{Z}$ across all levels requires
    \[
        k(2^L - 1)  O\!\left(n/2^L\right) = O(kn).
    \]
    While we are multiplying by dense matrices, multiplying by each element of the canonical basis amounts to selecting the corresponding column of the dense matrix. Since we have to scale by the weight, the complexity becomes $2 \, n/2^L$, which yields the result.
    
    \item \textbf{Hierarchical merging.}
    Solving the secular equation at level $\ell$ costs 
    $
        O((n/2^{L-\ell})^2)
    $. Since at that level, at most $k\,2^{L-\ell}$ edges participate, this operation has complexity
    $$ 
    \sum_{\ell=1}^L 
        k\,2^{L-\ell}\, O((n/2^{L-\ell})^2) = O(kn^2).
    $$

    We must also apply the corresponding Cauchy updates. The complexity of an update at level $\ell$ is
    $
        O(g(n/2^{L-\ell})\, k(L-\ell))
    $ 
    per edge, since each edge update requires a multiplication by all the other incoming updates that touch the corresponding subgraph.  
    Therefore the total merging complexity is
    \[
    \sum_{\ell=1}^L 
        k\,2^{L-\ell}\, O\!\left(k(L-\ell)g(n/2^{L-\ell})\, \right)
      = 
        O\!\left( k^2
            \sum_{\ell=1}^L (L-\ell)g(n/2^{L-\ell})2^{L-\ell}
        \right).
    \]
    Since $g(n) = O(n \log n)$ \cite{pan2012structured}, we reach
    \[
        O(kn^2 + k^2 n ( L^2 \log n - L^3)).
    \]
    If, instead, a $g(n) = O(n^2)$ algorithm is used, the complexity of the merging step becomes $O(k^2 n^2)$, with a quadratic relationship on $k$. In practice, even if we opt for the $g(x) = O(x^2)$ algorithm, we have observed that the complexity of the update is closer to $O((L-\ell)g(n/2^{L-\ell}))$ (since not all the edges to update lay always on the same subgraph), which yields again $O(kn^2)$ behavior.
\end{itemize}

Thus, the full complexity of the algorithm is
\[
    O\!\left(kn^2 + \sum_{i = 1}^m \, f_i(n)\right)
\]

\paragraph{Parallel solver.}
Under parallel execution, with $m$ processors available, we focus on the parallel time $T_m(\cdot)$:

\begin{itemize}
    \item \textbf{Initialization.}
    Since each block operates on disjoint edges in $\mathcal{Z}$, operations 
    can be carried out in parallel. Assuming at least $m$ processors are available, by Brent's theorem \cite{jaja1992parallel}, we can compute the eigendecomposition in time
    \[
        T_m(n) = O\!\left( \max_{i = 1, \hdots, m} f_i(n) \right).
    \]

    \item \textbf{Hierarchical merging.}
    All subgraph pairs at level $\ell$ are independent, so the work at that level parallelizes fully. The work on each level is still governed by the cost of solving the secular equation, which has to be solved sequentially. Therefore, the time to compute this step remains in $T_m(n,k) = O(kn^2)$.
\end{itemize}

Thus, we can compute the parallel version in time
\[
    T_m(n, k) = O\!\left(k n^2 + \max_{i = 1, \hdots, m} f_i(n)\right).
\]

\begin{algorithm}[t]
\caption{Solve secular Equation}
\label{alg:solve_secular}
\begin{algorithmic}[1]
\Require 
    \State Old eigenvalues $\boldsymbol{\lambda} = (\lambda_1,\dots,\lambda_n)$, sorted ascending
    \State Projection vector $\mathbf{z} \in \mathbb{R}^n$
\Ensure 
    \State Updated eigenvalues $\tilde{\boldsymbol{\lambda}}$
    \State Cauchy matrix $\mathbf{C} \in \mathbb{R}^{n\times n}$ (columns are eigenvectors)

\Function{\texttt{SolveSecular}}{$\boldsymbol{\lambda},\mathbf{z}$}

    \State $n \gets \text{length}(\boldsymbol{\lambda})$
    \State Initialize $\tilde{\boldsymbol{\lambda}} \in \mathbb{R}^n$

    \For{$j = 1 \dots n$} 
        \State Solve for $\tilde{\lambda}_j$ in $(\lambda_j, \lambda_{j+1})$: $1 + \sum_{k=1}^n {z_k^2} \, / 
        \, (\tilde{\lambda}_j - \lambda_k) = 0$
    \EndFor

    \State Initialize $\mathbf{C} \in \mathbb{R}^{n\times n}$

    \For{$j = 1 \dots n$} \Comment{For each new eigenvalue (Column)}
        \For{$i = 1 \dots n$} \Comment{For each component (Row)}
            \State $C_{ij} \gets {z_i} / ({\tilde{\lambda}_j - \lambda_i})$
        \EndFor
        \State Normalize column $j$: $\mathbf{C}_{:,j} \gets \mathbf{C}_{:,j} \big/ 
            \left\| \mathbf{C}_{:,j} \right\|_2$
    \EndFor

    \State \Return $\tilde{\boldsymbol{\lambda}},\mathbf{C}$

\EndFunction
\end{algorithmic}
\end{algorithm}

\begin{algorithm}[t]
\caption{Cauchy factorization}
\label{alg:cauchy_full}
\begin{algorithmic}[1]

\Require
    \State Hierarchical graph $\mathcal{G}\in\mathcal{F}(L, \lbrace \mathcal{G}_i\rbrace_{i = 1}^m, k)$
    \State Bridge edge sets $\mathcal{E}_{\ell,i}$ with corresponding weights $w(\cdot)$ for $i = 1, \hdots, 2^{L-\ell}$ and levels $\ell=1\dots L$.
\Statex
\Ensure
    \State Final eigenvalues $\boldsymbol{\lambda}\in\mathbb{R}^N$, Cauchy rotation history $\mathcal{H}=\{\mathcal{H}_1,\dots,\mathcal{H}_L\}$.

\Statex

\begin{tcolorbox}[breakable, enhanced, colback=tabblue!6, boxrule=0pt, 
                  left= 0pt, right=0pt, top=0pt, bottom=4pt,
                  boxsep=0pt] 

\State \textbf{function} {\texttt{ApplyTransform}}({$\mathcal{I},\, \mathbf{M},\, \mathcal{Z}$})
   \State \; \; \; $\mathcal{E}_{\mathrm{act}} \gets \{\, e=(u,v) \in \text{Keys}(\mathcal{Z}) \mid \{u,v\}\cap\mathcal{I}\neq\emptyset \,\}$ 
    \Comment{Update edge vectors touching node set $\mathcal{I}$}
    \State \; \; \; \textbf{for} {each $e\in \mathcal{E}_{\mathrm{act}}$}
        \State \; \; \; \; \; \; $\mathbf{z}_{\mathrm{loc}} \gets \operatorname{Extract}(\mathcal{Z}[e],\,\mathcal{I})$
        \State \; \; \; \; \; \; $\mathcal{Z}[e] \gets \operatorname{Insert}(\mathcal{Z}[e],\,\mathcal{I},\,\mathbf{M}^\top\, \mathbf{z}_{\mathrm{loc}})$
    \State \; \; \; \textbf{return} $\mathcal{Z}$
\end{tcolorbox}
\begin{tcolorbox}[breakable, enhanced, colback=taborange!6, boxrule=0pt, 
                  left= 0pt, right=0pt, top=0pt, bottom=4pt,
                  boxsep=0pt] 
\Statex             
\State \textbf{1. Initialization step}
\State $\mathcal{E}_{\mathrm{total}} \gets \bigcup_{\ell,i}\, \mathcal{E}_{\ell,i}$
\State Initialize $\mathcal{Z}$ as map: edge $\mapsto$ sparse length-$N$ vector

    \For{each $e=(u,v)\in \mathcal{E}_{\mathrm{total}}$}
        \State $\mathcal{Z}[e] \gets \sqrt{w(e)}\, (\mathbf{e}_u - \mathbf{e}_v)$ 
    \EndFor

    \State $\mathcal{S}_{\mathrm{curr}} \gets \text{empty list}$

    \textbf{for} {$i=1\dots m$}
        \State \; \; \;  $(\mathbf{U}_i,\, \boldsymbol{\lambda}_i) \gets \Call{\texttt{DenseEigsLap}}{\mathcal{G}_i}$
        \State \; \; \; $\mathcal{I}_i \gets$ sorted node set of leaf $\mathcal{G}_i$
        \State \; \; \; $\mathcal{Z} \gets \Call{\texttt{ApplyTransform}}{\mathcal{I}_i,\, \mathbf{U}_i,\, \mathcal{Z}}$
        \State \; \; \; Append $(\boldsymbol{\lambda}_i,\, \mathcal{I}_i)$ to $\mathcal{S}_{\mathrm{curr}}$
    
\end{tcolorbox}

%

\Statex

\begin{tcolorbox}[breakable, enhanced, colback=tabpurple!6, boxrule=0pt, 
                  left= 0pt, right=0pt, top=0pt, bottom=4pt,
                  boxsep=0pt, left skip=0.0em]

\State \textbf{2. Hierarchical merging}
\For{$\ell = 1\dots L$}
    \State $\mathcal{S}_{\mathrm{next}} \gets \text{empty list}$
    \State $\mathcal{H}_\ell \gets \emptyset$

    \For{$i=1\dots |\mathcal{S}_{\mathrm{curr}}|/2$}
        \State Retrieve left $(\boldsymbol{\lambda}_A,\mathcal{I}_A) \gets \mathcal{S}_{\mathrm{curr}}[2i-1]$
        \State Retrieve right $(\boldsymbol{\lambda}_B,\mathcal{I}_B) \gets \mathcal{S}_{\mathrm{curr}}[2i]$
        
        \State $\mathcal{I}_{\mathrm{new}} \gets \mathcal{I}_A \cup \mathcal{I}_B$
        \State $\boldsymbol{\lambda}_{\mathrm{new}} \gets [\,\boldsymbol{\lambda}_A;\,\boldsymbol{\lambda}_B\,]$

        \For{each bridge $e\in \mathcal{E}_{\ell,k}$}
            \State $\mathbf{z} \gets \operatorname{Extract}(\mathcal{Z}[e],\, \mathcal{I}_{\mathrm{new}})$
            \State $(\boldsymbol{\lambda}_{\mathrm{new}},\, \mathbf{C}) \gets \Call{\texttt{SolveSecular}}{\boldsymbol{\lambda}_{\mathrm{new}},\, \mathbf{z}}$
            \State $\mathcal{Z} \gets \Call{\texttt{ApplyTransform}}{\mathcal{I}_{\mathrm{new}},\, \mathbf{C},\, \mathcal{Z}}$
            \State $\mathcal{H}_\ell \gets \mathcal{H}_\ell \cup \{\mathbf{C}\}$
        \EndFor

        \State Append $(\boldsymbol{\lambda}_{\mathrm{new}},\, \mathcal{I}_{\mathrm{new}})$ to $\mathcal{S}_{\mathrm{next}}$
    \EndFor

    \State $\mathcal{S}_{\mathrm{curr}} \gets \mathcal{S}_{\mathrm{next}}$
\EndFor
\end{tcolorbox}
\State \hspace{-1.5em}\Return final $\boldsymbol{\lambda}$ from  $\mathcal{S}_{\mathrm{curr}}$ and $\mathcal{H}$ 

\end{algorithmic}
\end{algorithm}

\section{Proof of \cref{thm:expressiveness}}\label{app:expressiveness}

We establish the result in three steps: we first show that polynomial filters and global spectral filters coincide on finite graphs, then prove containment, and finally exhibit a strict separation.

\subsection{Polynomial filters and global spectral filters coincide on finite graphs}\label{app:poly_equivalence}

A polynomial filter of degree $K$ computes
\begin{equation}
    \sum_{k=0}^{K} \alpha_k \mathbf{L}^k \mathbf{x} = \mathbf{U}\, p(\mathbf{\Lambda})\, \mathbf{U}^\top \mathbf{x},
\end{equation}
where $p(\lambda) = \sum_k \alpha_k \lambda^k$. Conversely, any global spectral filter $\mathbf{U}\, g(\mathbf{\Lambda})\, \mathbf{U}^\top$ for arbitrary $g(\cdot)$ can be expressed as a polynomial in $\mathbf{L}$ of degree at most $n - 1$. This follows from two observations:
\begin{enumerate}
    \item \textbf{Cayley--Hamilton:} $\mathbf{L}$ satisfies its own characteristic polynomial of degree $n$, so the algebra generated by $\mathbf{L}$ is spanned by $\{\mathbf{I}, \mathbf{L}, \ldots, \mathbf{L}^{n-1}\}$.
    \item \textbf{Lagrange interpolation:} On a fixed graph with $m \leq n$ distinct eigenvalues, any function $g(\cdot)$ on the spectrum $\{\lambda_1, \ldots, \lambda_m\}$ can be interpolated by a polynomial of degree at most $m - 1 \leq n - 1$~\citep{wang2022powerful}.
\end{enumerate}
Therefore, on a fixed finite graph, the two classes coincide:
\begin{equation}
    \bigl\{\mathbf{U}\, g(\mathbf{\Lambda})\, \mathbf{U}^\top : g(\cdot) \text{ arbitrary}\bigr\} = \bigl\{p(\mathbf{L}) : p(\cdot) \text{ polynomial of degree } \leq n-1\bigr\}.
\end{equation}
We use ``$g(\mathbf{L})$'' to denote this class throughout.

\subsection{Containment: $g(\mathbf{L}) \subseteq$ L2G-Net}\label{app:containment}

Consider a graph $\mathcal{G} \in \mathcal{F}(L, \{\mathcal{G}_i\}_{i=1}^m, k)$. By Theorem~\ref{thm:cauchy_factorization}, the GFT admits the Cauchy factorization
\begin{equation}
    \mathbf{U}^\top = \mathbf{D}(\boldsymbol{\lambda}, \tilde{\boldsymbol{\lambda}}_{K-1}) \cdots \mathbf{D}(\tilde{\boldsymbol{\lambda}}_1, \tilde{\boldsymbol{\lambda}}_0)\, \mathbf{U}_0^\top,
\end{equation}
where $\mathbf{U}_0 = \mathrm{blkdiag}(\mathbf{U}_1, \ldots, \mathbf{U}_m)$ collects the base subgraph GFTs and each $\mathbf{D}$ is a Cauchy factor.

In L2G-Net, at each hierarchical level $r$ and subgraph pair $p$, a learnable spectral filter $g_{r,p}(\boldsymbol{\lambda}_{r,p})$ is applied before the Cauchy merge. Setting all intermediate filters to the identity, i.e., $g_{r,p}(\lambda) = 1$ for all $\lambda$, all $r = 0, \ldots, L-1$, and all $p$, the forward transform $\mathbf{U}(\Phi)^\top$ reduces exactly to $\mathbf{U}^\top$ by Theorem~\ref{thm:cauchy_factorization}. The L2G-Net output then becomes
\begin{equation}
    \mathbf{X}_{\mathrm{out}} = \mathbf{U}\, g_\theta(\mathbf{\Lambda})\, \mathbf{U}^\top \mathbf{X},
\end{equation}
which is a standard global spectral filter. Since the identity is within the parameterization of L2G-Net's local filters, every global spectral filter is realizable. \qed

\subsection{Strict containment: $g(\mathbf{L}) \subsetneq$ L2G-Net}\label{app:strict}

We show that L2G-Net can represent operators that no global spectral filter $g(\mathbf{L})$ can express, by exhibiting an explicit counterexample.

\paragraph{General necessary condition.}
Consider a graph $\mathcal{G} \in \mathcal{F}(1, \{\mathcal{G}_1, \mathcal{G}_2\}, k)$ with $k \geq 1$, i.e., two subgraphs connected by at least one bridge edge. With one level of hierarchy, the L2G-Net forward transform is
\begin{equation}\label{eq:l2g_one_level}
    \mathbf{U}(\Phi)^\top = g_1(\mathbf{\Lambda}) \, \mathbf{D} \,  \mathrm{blkdiag}\bigl(g_{0,1}(\mathbf{\Lambda}_1),\; g_{0,2}(\mathbf{\Lambda}_2)\bigr) \mathbf{U}_0^\top,
\end{equation}
where $\mathbf{D}$ is the product of Cauchy factors for the bridge edges and $g_1(\cdot)$ acts on the merged spectrum. Define
\begin{equation}
    \mathbf{F}_{\mathrm{loc}} \doteq \mathrm{blkdiag}\bigl(g_{0,1}(\mathbf{\Lambda}_1),\; g_{0,2}(\mathbf{\Lambda}_2)\bigr).
\end{equation}
The full L2G-Net operator is
\begin{equation}
    \mathbf{T}_{\mathrm{L2G}} = \mathbf{U}\, g_\theta(\mathbf{\Lambda})\, g_1(\mathbf{\Lambda})\, \mathbf{D}\, \mathbf{F}_{\mathrm{loc}}\, \mathbf{U}_0^\top,
\end{equation}
while a global spectral filter takes the form
\begin{equation}
    \mathbf{T}_{\mathrm{GFT}} = \mathbf{U}\, h(\mathbf{\Lambda})\, \mathbf{U}^\top = \mathbf{U}\, h(\mathbf{\Lambda})\, \mathbf{D}\, \mathbf{U}_0^\top,
\end{equation}
where the last equality uses Theorem~\ref{thm:cauchy_factorization} with identity local filters.

For $\mathbf{T}_{\mathrm{L2G}} = \mathbf{T}_{\mathrm{GFT}}$ to hold for some $h(\cdot)$, we require
\begin{equation}
    g_\theta(\mathbf{\Lambda})\, g_1(\mathbf{\Lambda})\, \mathbf{D}\, \mathbf{F}_{\mathrm{loc}} = h(\mathbf{\Lambda})\, \mathbf{D}.
\end{equation}
Let $f(\mathbf{\Lambda}) := g_\theta(\mathbf{\Lambda})\, g_1(\mathbf{\Lambda})$ (a diagonal matrix). Since $\mathbf{D}$ is orthogonal (being a product of orthogonal Cauchy-like matrices), right-multiplying by $\mathbf{D}^\top$ yields
\begin{equation}\label{eq:necessary}
    f(\mathbf{\Lambda})\, \mathbf{D}\, \mathbf{F}_{\mathrm{loc}}\, \mathbf{D}^\top = h(\mathbf{\Lambda}).
\end{equation}
Since $h(\mathbf{\Lambda})$ is diagonal, a necessary condition for the L2G-Net operator to collapse to a global spectral filter is
\begin{equation}\label{eq:star}
    \mathbf{D}\, \mathbf{F}_{\mathrm{loc}}\, \mathbf{D}^\top \text{ is diagonal.} \tag{$\star$}
\end{equation}

\paragraph{Counterexample: two singleton subgraphs.}
Consider $n_1 = n_2 = 1$: two isolated nodes connected by a single edge of weight $w > 0$. The base subgraphs are singletons with trivial GFTs ($\mathbf{U}_0 = \mathbf{I}_2$). The Laplacian of the disconnected graph is $\mathbf{L}_0 = \mathbf{0}$, and adding the bridge edge gives
\begin{equation}
    \mathbf{L} = w \begin{pmatrix} 1 & -1 \\ -1 & 1 \end{pmatrix},
\end{equation}
with eigenvalues $0$ and $2w$, and eigenvectors
\begin{equation}
    \mathbf{U} = \frac{1}{\sqrt{2}} \begin{pmatrix} 1 & 1 \\ 1 & -1 \end{pmatrix}.
\end{equation}
By Theorem~\ref{thm:cauchy_factorization}, $\mathbf{U}^\top = \mathbf{D} \cdot \mathbf{U}_0^\top = \mathbf{D} \cdot \mathbf{I}_2$, so the Cauchy factor is
\begin{equation}
    \mathbf{D} = \mathbf{U}^\top = \frac{1}{\sqrt{2}} \begin{pmatrix} 1 & 1 \\ 1 & -1 \end{pmatrix}.
\end{equation}
This is a nontrivial orthogonal matrix (neither diagonal nor a permutation). Now let $\mathbf{F}_{\mathrm{loc}} = \mathrm{diag}(a, b)$ with $a \neq b$. Then
\begin{equation}
    \mathbf{D}\, \mathbf{F}_{\mathrm{loc}}\, \mathbf{D}^\top = \frac{1}{2} \begin{pmatrix} 1 & 1 \\ 1 & -1 \end{pmatrix} \begin{pmatrix} a & 0 \\ 0 & b \end{pmatrix} \begin{pmatrix} 1 & 1 \\ 1 & -1 \end{pmatrix} = \frac{1}{2} \begin{pmatrix} a+b & a-b \\ a-b & a+b \end{pmatrix}.
\end{equation}
The off-diagonal entry is $\frac{a-b}{2} \neq 0$ whenever $a \neq b$. Therefore, condition~\eqref{eq:star} fails, and the L2G-Net operator $\mathbf{T}_{\mathrm{L2G}}$ cannot be expressed as any global spectral filter $g(\mathbf{L})$.

Since this construction uses a valid member of $\mathcal{F}(1, \{\mathcal{G}_1, \mathcal{G}_2\}, 1)$ and a non-constant local filter (which is within L2G-Net's parameterization), the inclusion is strict. \qed

\begin{remark}[Generality beyond the counterexample]\label{rem:generality}
The two-node example is a minimal proof of strict inclusion. The phenomenon extends to arbitrary graphs: since Cauchy factors arising from bridge edges are dense orthogonal matrices with all nonzero entries, the conjugation $\mathbf{D}\, \mathbf{F}_{\mathrm{loc}}\, \mathbf{D}^\top$ is non-diagonal whenever $\mathbf{F}_{\mathrm{loc}}$ is not proportional to the identity.
\end{remark}

\section{L2G-Net intuition}
Given two dense subgraphs connected by a bridge edge (e.g., barbell-like graph \cite{topping2022understanding}, as shown in \cref{fig:barbell}), MPNNs require many aggregation steps to share
information between subgraphs, which may lead to oversquashing \cite{alon2020bottleneck}. In L2G-Net, the graph is partitioned into subgraphs, which are first processed separately, and then the Cauchy factor operates across the bridge edge to mix information from the subgraphs, enabling long-range communication with a single layer. Unlike the GFT, where all spectral components
are global, L2G-Net can learn different ways to combine information from subgraphs for each spectral component. 

\begin{figure}
    \centering
    \includegraphics[width=\linewidth]{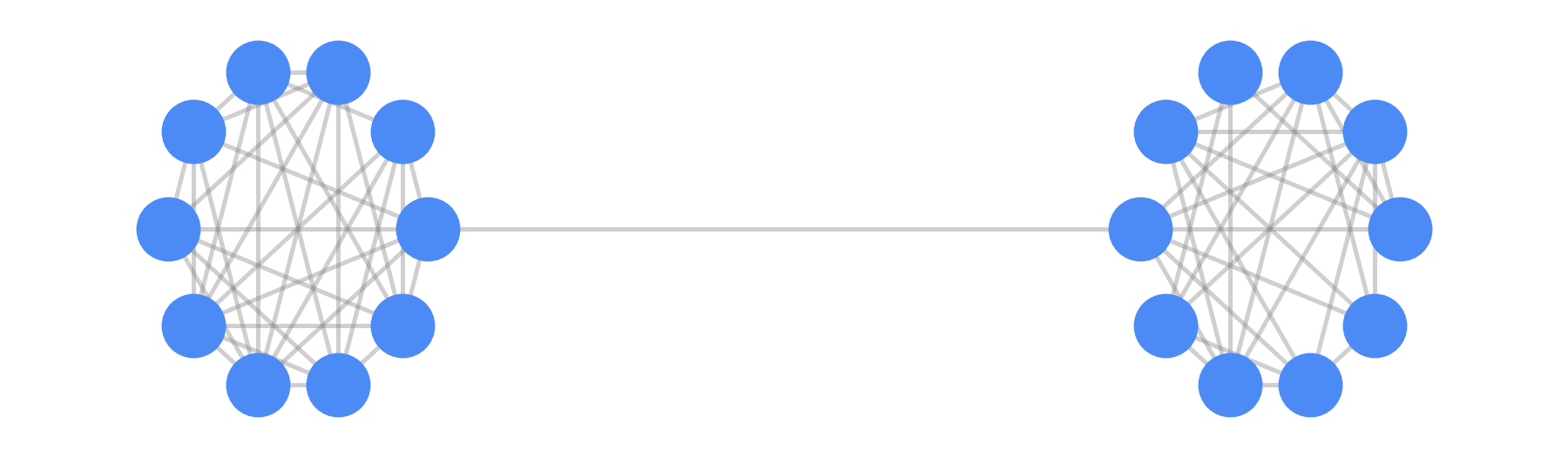}
    \caption{Example of a barbell-like graph.}
    \label{fig:barbell}
\end{figure}

\section{More experimental results}
\label{app:more_exper}
In \cref{tab:table9}, we compare our method with a benchmark based on \cite{platonov2023critical}. In \cref{tab:lrgbench_long} we compare the results in the peptides datasets of \cite{dwivedi2022long}. In \cref{tab:citynetwork_long}, we compare the results in the City-Networks dataset of \cite{liang2025towards}.

\subsection{Learned filters}
\label{app:learned_filters}
We show the learned spectral filters for different graphs in \cite{platonov2023critical} (\cref{fig:learned_filters}). We consider only a two-level decomposition. We also add the histogram of the distribution of the eigenvalues in the background, for each subgraph and for the global graph. We observe that hierarchical processing adapts to the task, which contributes to the interpretability of our method.

\begin{figure}
    \raggedright
    (a) \emph{Amazon-ratings}

    \smallskip
    
    \includegraphics[width=\linewidth]{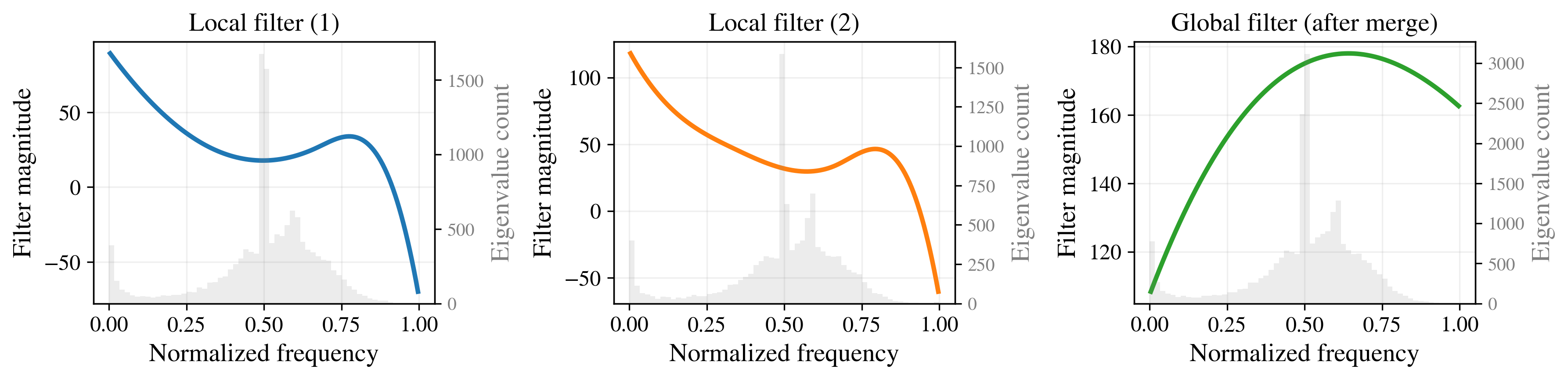}
    
    (b) \emph{Roman-empire}

    \smallskip
    
    \includegraphics[width=\linewidth]{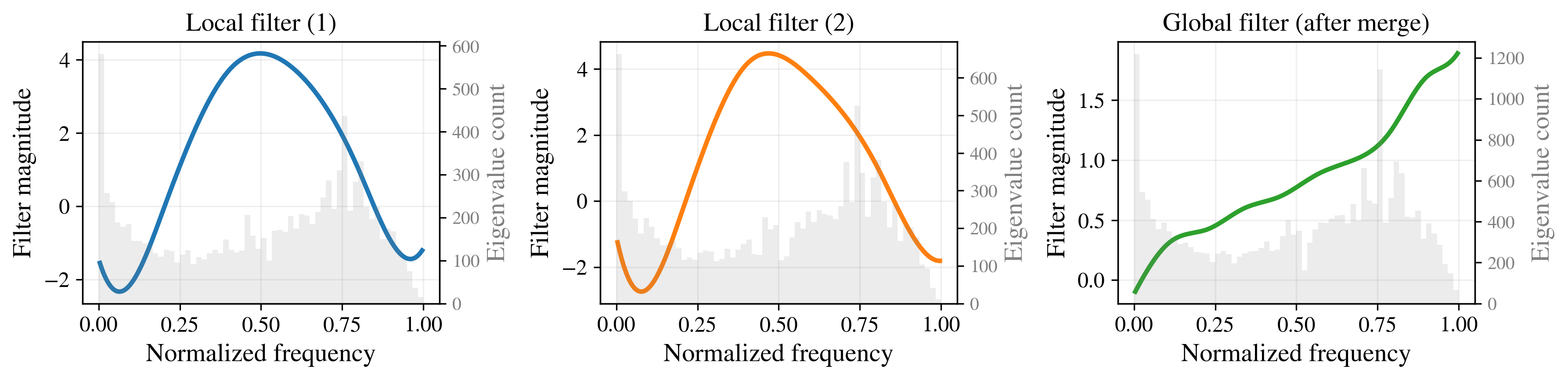}
    
    (c) \emph{Minesweeper}

    \smallskip
    
    \includegraphics[width=\linewidth]{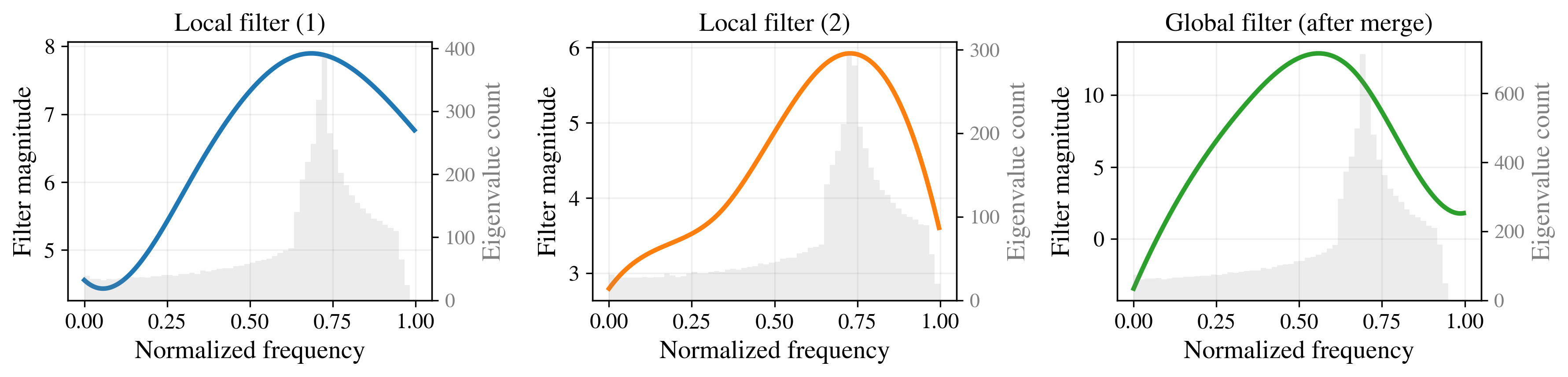}
    
    (d) \emph{Tolokers}

    \smallskip
    
    \includegraphics[width=\linewidth]{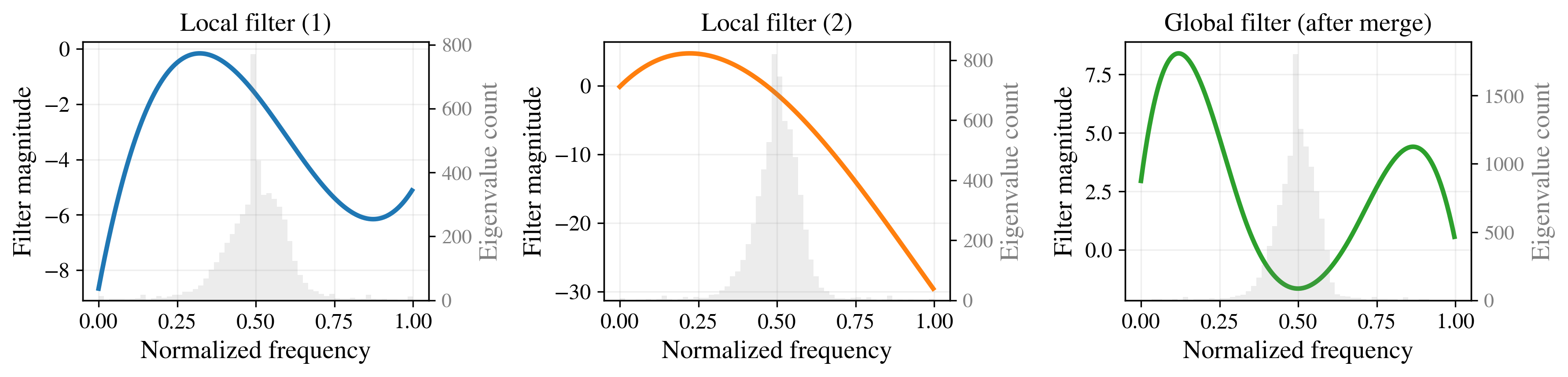}
    \caption{Learned spectral filters for different graphs in \cite{platonov2023critical}. In tasks where both low- and high-frequency information are needed (e.g., \emph{Tolokers}/\emph{Amazon-ratings}), local filters emphasize low frequencies, while global filters focus on higher frequencies. When the signal lies in high frequencies
(\emph{Minesweeper}/\emph{Roman-empire}), both local and global filters focus on high-frequency components. This indicates that the hierarchical processing adapts to the task.}
    \label{fig:learned_filters}
\end{figure}

\begin{figure}[t]
    \centering
\includegraphics[width=0.3\linewidth]{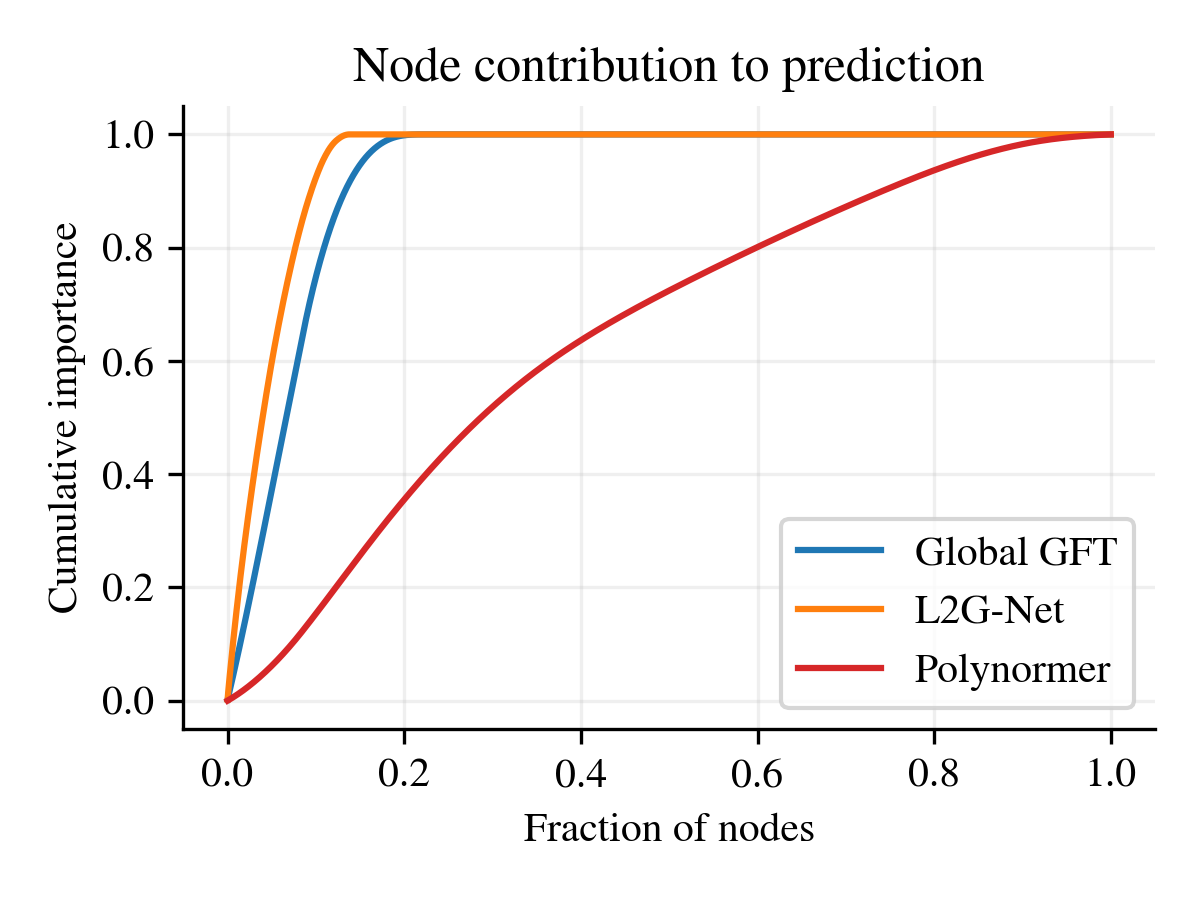}  
\includegraphics[width=0.3\linewidth]{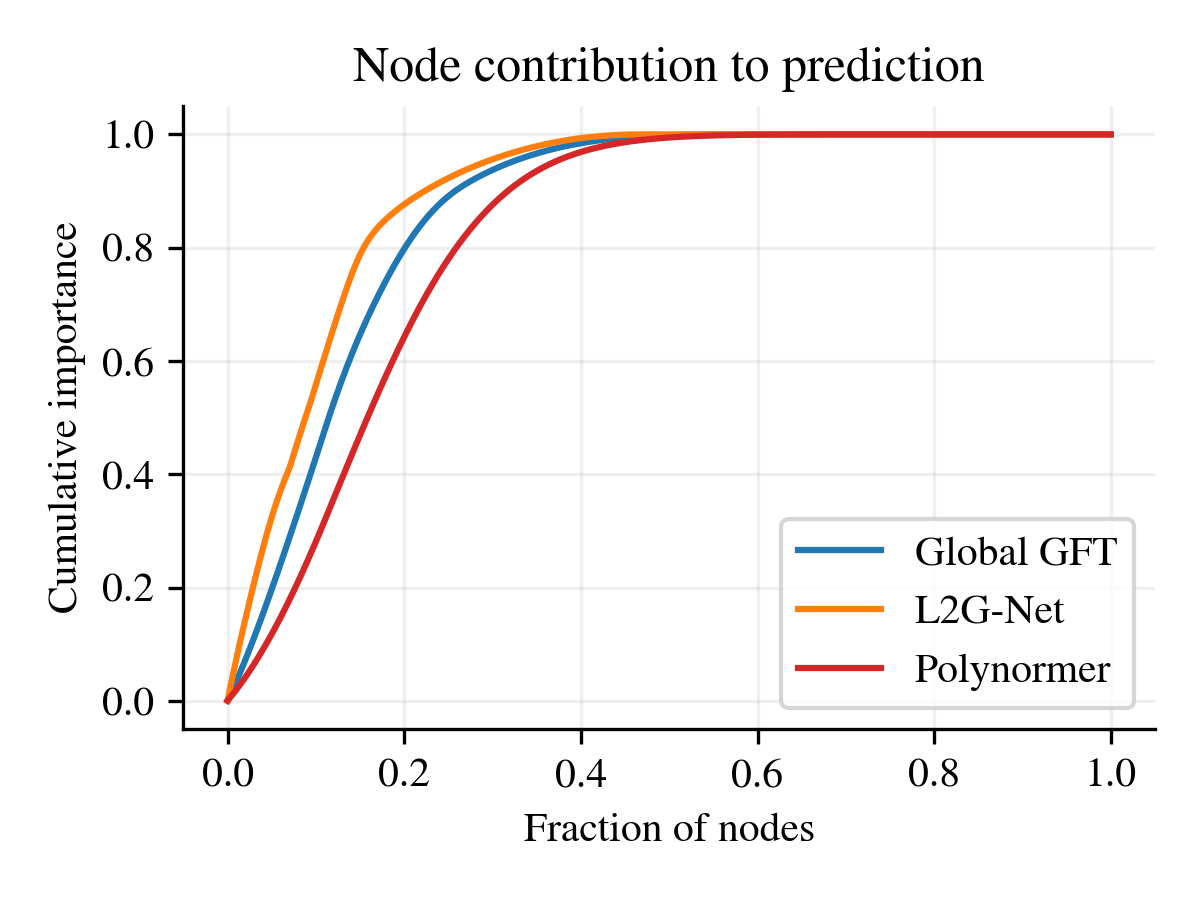}
\includegraphics[width=0.3\linewidth]{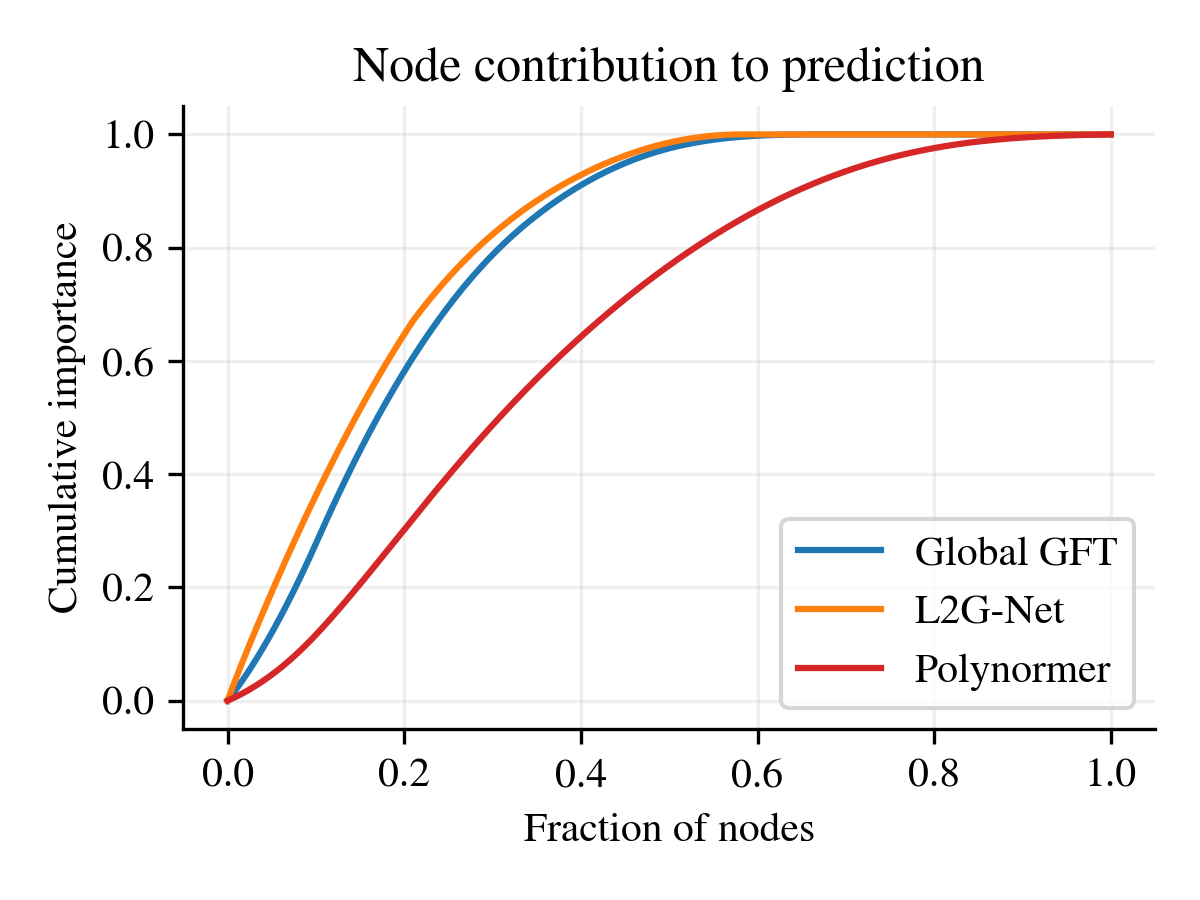}  

    \caption{\chan{Cumulative node contribution to the prediction on \textit{Tolokers, Roman Empire}, and \textit{Amazon Ratings} (from left to right). Average across validation set; shaded areas indicate confidence intervals. L2G-Net consistently concentrates predictive importance on a smaller fraction of nodes than both the Global GFT and Polynormer, showing consistent behavior across datasets.}}
    \label{fig:gradcam_app}
\end{figure}

\begin{table*}[t]
\centering
\caption{End-to-end training time (mm:ss) to convergence on heterophilous benchmarks~\cite{platonov2023critical}, under identical hardware.}
\label{tab:training_time}
\setlength{\tabcolsep}{8pt}
\begin{tabular}{lcccc}
\toprule
Method & \textbf{Mines.} & \textbf{Tolok.} & \textbf{Am. Rat.} & \textbf{R. Emp.} \\
\midrule
GCN         & 01:23 & 01:53 & 02:12 & 02:02 \\
ChebNet     & 03:56 & 06:18 & 11:22 & 13:08 \\
Polynormer  & 15:12 & 08:32 & 35:00 & 28:56 \\
L2G-Net (ours) & 08:40 & 03:04 & 34:56 & 50:03 \\
\bottomrule
\end{tabular}
\end{table*}

\begin{table*}[t]
\centering
\caption{Per-epoch wall-clock time (s/epoch) on heterophilous benchmarks~\cite{platonov2023critical}.}
\label{tab:per_epoch}
\setlength{\tabcolsep}{8pt}
\begin{tabular}{lcccc}
\toprule
Method & \textbf{Mines.} & \textbf{Tolok.} & \textbf{Am. Rat.} & \textbf{R. Emp.} \\
\midrule
GCN         & 0.05 & 0.06 & 0.07 & 0.06 \\
ChebNet     & 0.12 & 0.21 & 0.33 & 0.40 \\
Polynormer  & 0.29 & 0.18 & 0.56 & 0.48 \\
L2G-Net (ours) & 0.35 & 0.13 & 1.04 & 1.54 \\
\bottomrule
\end{tabular}
\end{table*}

\begin{table*}[t]
\centering
\caption{Performance comparison on long-range benchmarks. Results are reported as mean $\pm$ standard deviation. We report Acc for \textit{Roman-empire} and \textit{Amazon-ratings} and AUC for \textit{Tolokers} and \textit{Minesweeper}. Higher is better ($\uparrow$). Best method appears in boldface, second best method underlined, third best method in italics.}
\label{tab:table9}
\resizebox{0.9\textwidth}{!}{
\begin{tabular}{lcccc}
\toprule
\textbf{Model} & \textbf{Roman-empire} & \textbf{Amazon-ratings} & \textbf{Minesweeper} & \textbf{Tolokers} \\
\midrule

\multicolumn{5}{l}{\textit{MPNNs}} \\
GAT & 80.87$\pm$0.30 & 49.09$\pm$0.63 & 92.01$\pm$0.68 & 83.70$\pm$0.47 \\
GAT (LapPE) & 84.80$\pm$0.46 & 44.90$\pm$0.73 & 93.50$\pm$0.54 & 84.99$\pm$0.54 \\
GAT (RWSE) & 86.62$\pm$0.53 & 48.58$\pm$0.41 & 92.53$\pm$0.65 & 85.02$\pm$0.67 \\
Gated-GCN & 74.46$\pm$0.54 & 43.00$\pm$0.32 & 87.54$\pm$1.22 & 77.31$\pm$1.14 \\
GCN & 73.69$\pm$0.74 & 48.70$\pm$0.63 & 89.75$\pm$0.52 & 83.64$\pm$0.67 \\
GCN (LapPE) & 83.37$\pm$0.55 & 44.35$\pm$0.36 & 94.26$\pm$0.49 & 84.95$\pm$0.78 \\
GCN (RWSE) & 84.84$\pm$0.55 & 46.40$\pm$0.55 & 93.84$\pm$0.48 & 85.11$\pm$0.77 \\
CO-GNN ($\Sigma,\Sigma$) & 91.57$\pm$0.32 & 51.28$\pm$0.56 & 95.09$\pm$1.18 & 83.36$\pm$0.89 \\
CO-GNN ($\mu,\mu$) & 91.37$\pm$0.35 & \underline{54.17$\pm$0.37} & 97.31$\pm$0.41 & 84.45$\pm$1.17 \\
SAGE & 85.74$\pm$0.67 & 53.63$\pm$0.39 & 93.51$\pm$0.57 & 82.43$\pm$0.44 \\

\midrule
\multicolumn{5}{l}{\textit{Graph Transformers}} \\
Exphormer & 89.03$\pm$0.37 & 53.51$\pm$0.46 & 90.74$\pm$0.53 & 83.77$\pm$0.78 \\
NAGphormer & 74.34$\pm$0.77 & 51.26$\pm$0.72 & 84.19$\pm$0.66 & 78.32$\pm$0.95 \\
GOAT & 71.59$\pm$1.25 & 44.61$\pm$0.50 & 81.09$\pm$1.02 & 83.11$\pm$1.04 \\
GPS & 82.00$\pm$0.61 & 53.10$\pm$0.42 & 90.63$\pm$0.67 & 83.71$\pm$0.48 \\
GPSGCN+Performer (LapPE) & 83.96$\pm$0.53 & 48.20$\pm$0.67 & 93.85$\pm$0.41 & 84.72$\pm$0.77 \\
GPSGCN+Performer (RWSE) & 84.72$\pm$0.65 & 48.08$\pm$0.85 & 92.88$\pm$0.50 & 84.81$\pm$0.86 \\
GPSGCN+Transformer (LapPE) & OOM & OOM & 91.82$\pm$0.41 & 83.51$\pm$0.93 \\
GPSGCN+Transformer (RWSE) & OOM & OOM & 91.17$\pm$0.51 & 83.53$\pm$1.06 \\
GT & 86.51$\pm$0.73 & 51.17$\pm$0.66 & 91.85$\pm$0.76 & 83.23$\pm$0.64 \\
GT-sep & 87.32$\pm$0.39 & 52.18$\pm$0.80 & 92.29$\pm$0.47 & 82.52$\pm$0.92 \\
Polynormer & \textbf{92.55$\pm$0.30} & \textbf{54.81$\pm$0.49} & \underline{97.46$\pm$0.36} & \underline{85.91$\pm$0.74} \\

\midrule
\multicolumn{5}{l}{\textit{Heterophily-Designated GNNs}} \\
CPGNN & 63.96$\pm$0.62 & 39.79$\pm$0.77 & 52.03$\pm$5.46 & 73.36$\pm$1.01 \\
FAGCN & 65.22$\pm$0.56 & 44.12$\pm$0.30 & 88.17$\pm$0.73 & 77.75$\pm$1.05 \\
FSGNN & 79.92$\pm$0.56 & 52.74$\pm$0.83 & 90.08$\pm$0.70 & 82.76$\pm$0.61 \\
GBK-GNN & 74.57$\pm$0.47 & 45.98$\pm$0.71 & 90.85$\pm$0.58 & 81.01$\pm$0.67 \\
GloGNN & 59.63$\pm$0.69 & 36.89$\pm$0.14 & 51.08$\pm$1.23 & 73.39$\pm$1.17 \\
GPR-GNN & 64.85$\pm$0.27 & 44.88$\pm$0.34 & 86.24$\pm$0.61 & 72.94$\pm$0.97 \\
H2GCN & 60.11$\pm$0.52 & 36.47$\pm$0.23 & 89.71$\pm$0.31 & 73.35$\pm$1.01 \\
JacobiConv & 71.14$\pm$0.42 & 43.55$\pm$0.48 & 89.66$\pm$0.40 & 68.66$\pm$0.65 \\

\midrule
\multicolumn{5}{l}{\textit{Graph SSMs}} \\
GMN & 87.69$\pm$0.50 & \emph{54.07$\pm$0.31} & 91.01$\pm$0.23 & 84.52$\pm$0.21 \\
GPS + Mamba & 83.10$\pm$0.28 & 45.13$\pm$0.97 & 89.93$\pm$0.54 & 83.70$\pm$1.05 \\
GRAMA$_{\tiny\mathrm{GCN}}$ & 88.61$\pm$0.43 & 53.48$\pm$0.62 & 95.27$\pm$0.71 & \textbf{86.23$\pm$1.10} \\
MP-SSM & 90.91$\pm$0.48 & 53.65$\pm$0.71 & 95.33$\pm$0.72 & 85.26$\pm$0.93 \\

\midrule
\multicolumn{5}{l}{\textit{Spectral}} \\
Stable-ChebNet & 92.03$\pm$0.85 & 53.15$\pm$0.21 & 95.71$\pm$2.26 & 85.55$\pm$3.35 \\
Ours & \underline{92.12$\pm$1.12} & 53.39$\pm$0.58 & \textbf{97.50$\pm$0.26} & \emph{85.57$\pm$0.61} \\
\bottomrule
\end{tabular}
}
\end{table*}

\begin{table*}[t]
\centering
\scriptsize
\setlength{\tabcolsep}{4.5pt}
\begin{tabular}{llcc}
\toprule
\textbf{Type} & \textbf{Model} & \textbf{peptides-func} (AP $\uparrow$) & \textbf{peptides-struct} (MAE $\downarrow$) \\
\midrule
\multirow{7}{*}{Trans.}
& SAN+LapPE     & 63.84 $\pm$ 1.21 & 0.2683 $\pm$ 0.0043 \\
& TIGT          & 66.79 $\pm$ 0.74 & 0.2485 $\pm$ 0.0015 \\
& Specformer    & 66.86 $\pm$ 0.64 & 0.2550 $\pm$ 0.0014 \\
& Exphormer     & 65.27 $\pm$ 0.43 & 0.2481 $\pm$ 0.0007 \\
& G.MLPMixer    & 69.21 $\pm$ 0.54 & 0.2475 $\pm$ 0.0015 \\
& Graph ViT     & 69.42 $\pm$ 0.75 & \emph{0.2449} $\pm$ 0.0016 \\
& GRIT          & 69.88 $\pm$ 0.82 & 0.2460 $\pm$ 0.0012 \\
\midrule
\multirow{3}{*}{Rewir.}
& LASER         & 64.40 $\pm$ 0.10 & 0.3043 $\pm$ 0.0019 \\
& DRew-GCN      & 69.96 $\pm$ 0.76 & 0.2781 $\pm$ 0.0028 \\
& \hspace{1em}+PE           & 71.50 $\pm$ 0.44 & 0.2536 $\pm$ 0.0015 \\
\midrule
\multirow{3}{*}{SS}
& Graph Mamba   & 67.39 $\pm$ 0.87 & 0.2478 $\pm$ 0.0016 \\
& GMN           & 70.71 $\pm$ 0.83 & 0.2473 $\pm$ 0.0025 \\
& MP-SSM        & 69.93 $\pm$ 0.52 & 0.2458 $\pm$ 0.0017 \\
\midrule
\multirow{14}{*}{GNN}
& A-DGN         & 59.75 $\pm$ 0.44 & 0.2874 $\pm$ 0.0021 \\
& ChebNet       & 69.61 $\pm$ 0.33 & 0.2627 $\pm$ 0.0033 \\
& ChebNetII     & 68.19 $\pm$ 0.27 & 0.2618 $\pm$ 0.0058 \\
& GCN           & 68.60 $\pm$ 0.50 & 0.2460 $\pm$ 0.0007 \\
& GRAMA         & 70.93 $\pm$ 0.78 & \textbf{0.2436} $\pm$ 0.0022 \\
& GRAND         & 57.89 $\pm$ 0.62 & 0.3418 $\pm$ 0.0015 \\
& GraphCON      & 60.22 $\pm$ 0.68 & 0.2778 $\pm$ 0.0018 \\
& PH-DGN        & 70.12 $\pm$ 0.45 & {0.2465} $\pm$ 0.0020 \\
& SWAN          & 67.51 $\pm$ 0.39 & 0.2485 $\pm$ 0.0009 \\
& PathNN        & 68.16 $\pm$ 0.26 & 0.2545 $\pm$ 0.0032 \\
& CIN++         & 65.69 $\pm$ 1.17 & 0.2523 $\pm$ 0.0013 \\
& S2GCN         & \textbf{72.75} $\pm$ 0.66 & \underline{0.2467} $\pm$ 0.0019 \\
& \hspace{1em}+PE           & \textbf{73.11} $\pm$ 0.66 & \underline{0.2447} $\pm$ 0.0032 \\
& Stable-ChebNet & 70.32 $\pm$ 0.26 & 0.2542 $\pm$ 0.0030 \\
& L2G-Net (ours) & \underline{72.14} $\pm$ 0.24 & 0.2479 $\pm$ 0.0012 \\
& \hspace{1em}+PE           & \underline{72.46} $\pm$ 0.25 & 0.2462 $\pm$ 0.0011 \\
\bottomrule
\end{tabular}

\medskip

\caption{Long-range benchmark results \cite{dwivedi2022long}. AP is reported on peptides-func (higher is better), and MAE on peptides-struct (lower is better). Trans. stands for transformer, Rewir. for rewiring, and SS for state space.}
\label{tab:lrgbench_long}
\end{table*}

\begin{table}[t]
\centering
\caption{Average test accuracy with standard deviation for $4$ random seeds on City-Networks \cite{liang2025towards}.}
\label{tab:citynetwork_long}
\begin{tabular}{lcccc}
\toprule
Model & \textbf{Paris} & \textbf{Shanghai} & \textbf{LA} & \textbf{London} \\
\midrule
MLP       & 25.5$_{(0.4)}$ & 48.4$_{(0.6)}$ & 24.1$_{(0.5)}$ & 27.9$_{(0.1)}$ \\
ChebNet   & 54.1$_{(0.2)}$ & 66.5$_{(0.1)}$ & 61.4$_{(0.4)}$ & 54.7$_{(0.2)}$ \\
GCN       & 53.2$_{(0.3)}$ & 62.1$_{(0.2)}$ & 58.3$_{(0.3)}$ & 50.1$_{(0.7)}$ \\
SAGE & 54.6$_{(0.2)}$ & 68.3$_{(0.5)}$ & 61.4$_{(0.3)}$ & 55.4$_{(0.2)}$ \\
GAT       & 51.1$_{(0.3)}$ & 68.0$_{(0.5)}$ & 59.5$_{(0.3)}$ & 52.0$_{(0.3)}$ \\
GCNII     & 51.3$_{(0.2)}$ & 61.5$_{(0.4)}$ & 56.0$_{(0.3)}$ & 48.2$_{(0.3)}$ \\
DropEdge  & 48.2$_{(0.2)}$ & 60.8$_{(0.4)}$ & 55.5$_{(0.3)}$ & 45.0$_{(0.3)}$ \\
\midrule
GraphGPS  & 52.1$_{(0.6)}$ & 63.0$_{(0.5)}$ & 59.8$_{(0.5)}$ & OOM \\
Exphormer & 55.1$_{(0.8)}$ & 70.2$_{(0.4)}$ & 63.8$_{(0.6)}$ & 49.5$_{(0.4)}$ \\
SGFormer  & 52.0$_{(0.8)}$ & 64.1$_{(0.3)}$ & 60.1$_{(0.7)}$ & 48.3$_{(0.3)}$ \\
\midrule
\textbf{Ours} & 55.4$_{(0.4)}$ & 69.8$_{(0.5)}$ & 63.5$_{(0.7)}$ & 55.6$_{(0.6)}$\\
\bottomrule
\end{tabular}
\end{table}

\subsection{GradCAM measurements}
\label{app:gradcam}
\chan{We show the GradCAM attribution curves for the three other graphs (\emph{Tolokers}, \emph{Roman Empire}, and \emph{Amazon Ratings}) in \cite{platonov2023critical} in \cref{fig:gradcam_app}. We observe that the same behavior we reported in \cref{fig:gradcam} generalizes to the other graphs in the suite.}

\subsection{Runtime Measurements}
\label{app:runtime}

In this section we provide a detailed empirical comparison of the runtime of L2G-Net against representative baselines. All experiments are run under identical hardware and training protocols. For L2G-Net, GCN, and ChebNet we match architecture hyperparameters (depth, hidden dimension); for Polynormer we use the configuration provided by the original authors, consistent with the setup in \cref{tab:main_results}. End-to-end training times for L2G-Net include the cost of computing the Cauchy factorization.

\paragraph{End-to-end training time.} \cref{tab:training_time} reports the total wall-clock time required to train each method to convergence on the heterophilous benchmarks of~\cite{platonov2023critical}. Although Polynormer attains a lower per-epoch cost than L2G-Net (see Table~\ref{tab:per_epoch}), its two-stage training procedure requires substantially more epochs to converge, so its overall training time is longer than that of L2G-Net on three of the four datasets. Compared to ChebNet, L2G-Net incurs a moderate overhead that reflects its richer subgraph-level spectral processing, but remains well within the same order of magnitude. Compared to a standard MPNN such as GCN, L2G-Net is slower; this is expected, as our method targets the regime of exact spectral processing rather than linear-time message passing.

\paragraph{Per-epoch wall-clock time.} \cref{tab:per_epoch} reports per-epoch wall-clock times in seconds. GCN is the fastest, consistent with its linear $\mathcal{O}(|E|)$ complexity. ChebNet and Polynormer occupy an intermediate range. L2G-Net is competitive on smaller graphs (\emph{Minesweeper}, \emph{Tolokers}) but its per-epoch cost grows with $n$, reflecting the $O(n^2)$ complexity of the Cauchy factorization-based forward pass. The trade-off is favorable in the end-to-end training picture because L2G-Net requires fewer epochs to converge than transformer baselines.

\end{document}